\numberwithin{equation}{section} 
\newtheorem{remark}{Remark}
\newtheorem{theorem}{Theorem}
\newtheorem{assumption}{Assumption}
\newtheorem{definition}{Definition}
\newtheorem{proposition}{Proposition}
\newtheorem{lemma}{Lemma}
\title{Unveiling the Power of Multiple Gossip Steps: \\ A Stability-Based Generalization Analysis in Decentralized Training}
\author{%
  Qinglun Li$^{1}$, 
  Yingqi Liu$^{2}$, 
  Miao Zhang$^{1}$, 
  Xiaochun Cao$^{2}$, 
  Quanjun Yin$^{1,}$\thanks{Corresponding Authors}$\:\:$, 
  Li Shen$^{2,*}$\\
  $^1$State Key Laboratory of Digital Intelligent Modeling and Simulation,\\
  National University of Defense Technology,
  Changsha, 410073 \\
  $^2$School of Cyber Science and Technology, 
      Shenzhen Campus of Sun Yat-sen University, China\\  \texttt{liqinglun@nudt.edu.cn,yin\_quanjun@163.com,mathshenli@gmail.com} \\
}
\begin{document}

\maketitle

\begin{abstract}
\looseness = -1
Decentralized training removes the centralized server, making it a communication-efficient approach that can significantly improve training efficiency, but it often suffers from degraded performance compared to centralized training.
Multi-Gossip Steps (MGS) serve as a simple yet effective bridge between decentralized and centralized training, significantly reducing experiment performance gaps. 
However, the theoretical reasons for its effectiveness and whether this gap can be fully eliminated by MGS remain open questions.
In this paper, we derive upper bounds on the generalization error and excess error of MGS using stability analysis, systematically answering these two key questions.
1). \textit{Optimization Error Reduction}: MGS reduces the optimization error bound at an exponential rate, thereby exponentially tightening the generalization error bound and enabling convergence to better solutions.
2). \textit{Gap to Centralization}: Even as MGS approaches infinity, a non-negligible gap in generalization error remains compared to centralized mini-batch SGD ($\mathcal{O}(T^{\frac{c\beta}{c\beta +1}}/{n m})$ in centralized and  $\mathcal{O}(T^{\frac{2c\beta}{2c\beta +2}}/{n m^{\frac{1}{2c\beta +2}}})$ in decentralized).
Furthermore, we provide the first unified analysis of how factors like learning rate, data heterogeneity, node count, per-node sample size, and communication topology impact the generalization of MGS under non-convex settings without the bounded gradients assumption, filling a critical theoretical gap in decentralized training. Finally, promising experiments on CIFAR datasets support our theoretical findings.
\end{abstract}

\section{Introduction}

\begin{wrapfigure}{r}{0.5\textwidth}   
\vspace{-2em}  
    \begin{center} 
    \includegraphics[width=0.5\textwidth]{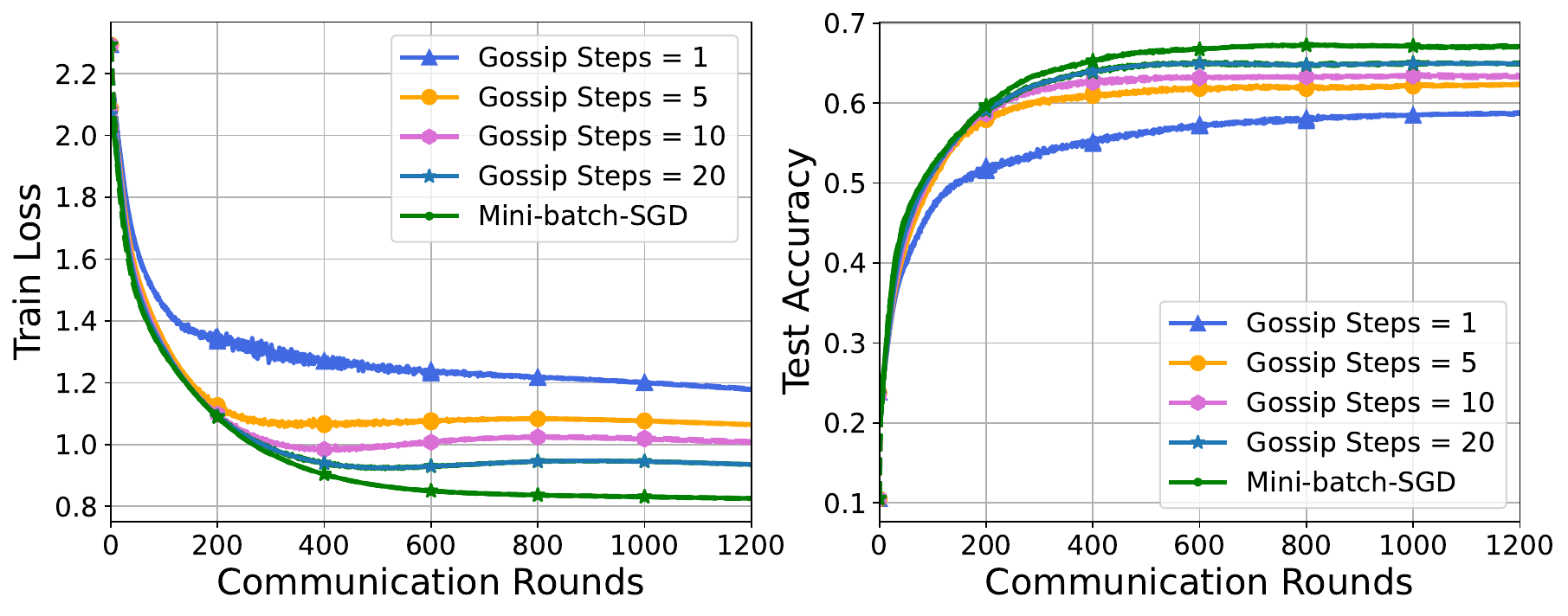}  
    \caption{ \small Under ring topology, DSGD-MGS with 20 gossip steps still shows significant performance gaps versus Mini-batch SGD in both training loss and test accuracy (LeNet on CIFAR-10, Dir 0.3, 50 nodes).}
    \label{fig:CSGD V.S. DSGD}
    \end{center}
    \vspace{-3em}  
\end{wrapfigure}

Recently, decentralized training \cite{chen2012diffusion,koloskova2020unified} has emerged as a promising alternative to centralized training, which suffers from challenges like high communication overhead \cite{li2025dfedadmm}, single point of failure \citep{chen2023enhancing}, and privacy risks \citep{gabrielli2023survey}. In contrast, decentralized training eliminates the central server, offering stronger privacy protection \citep{cyffers2022privacy}, faster model training \citep{lian2017can,koloskova2020unified}, and robustness to slow client devices \citep{neglia2019role}, making it an increasingly popular method \citep{chen2023enhancing, lian2017can}.

However, despite the aforementioned advantages of decentralized training, some works \cite{sun2023mode,liuunderstanding,bars2023improved} have pointed out that decentralized training methods underperform compared to centralized training methods in terms of model performance. Therefore, improving the performance of decentralized training models remains an important research question. Multiple Gossip Steps (MGS) \cite{sanghavi2007gossiping, deterministic2011gossip}, as a simple yet effective method to enhance the performance of decentralized training models, has been experimentally proven to significantly improve the efficiency and performance of decentralized training \cite{li2020communication, ye2021deepca, shi2023improving, ye2020decentralized}. Even under communication compression, MGS continues to demonstrate its advantages in performance improvement \cite{hashemi2021benefits}.

Despite the substantial empirical benefits of MGS, the underlying theoretical understanding of its efficacy and its potential to eliminate the performance gap with centralized training remain critical open questions. Specifically, two key issues need to be addressed:

\begin{tcolorbox}[
    enhanced,
    colback=gray!10!white, 
    colframe=gray!70!black, 
    boxrule=1pt, 
    arc=2pt 
]
(1) Why is MGS effective in improving model performance? \\
(2) Can decentralized training ultimately match or even surpass the performance of centralized training by increasing the number of gossip steps?
\end{tcolorbox}

To answer these open questions, we aim to theoretically explain how MGS works and how it affects model generalization. Using stability analysis, we find upper limits for the generalization error and excess error of MGS, giving systematic theoretical answers to these two main questions.

For Question 1, our theoretical analysis shows that MGS can reduce the optimization error bound at an exponential rate. This reduction in optimization error directly leads to an exponential reduction in the generalization error (as shown in Theorem \ref{the:optimization error of dsgd_mgs paper}, \ref{the:generalization error of dsgd_mgs}, and Remark \ref{remark:question1}), enabling the model to find better solutions. This relationship clearly explains why MGS effectively improves model performance. As illustrated in Figure \ref{fig:CSGD V.S. DSGD}, when the number of gossip steps is increased from 1 to 5, there is a significant reduction in the training loss (indicating reduced optimization error), and the test accuracy (measuring generalization) also shows a noticeable improvement. Furthermore, this improvement tends to diminish almost linearly as the number of gossip steps increases exponentially, consistent with the exponential decay in our theory findings.

For Question 2, our further analysis shows that even with a very large number of gossip steps, a basic difference in generalization error remains between decentralized DSGD-MGS and centralized mini-batch SGD.

Specifically, when the number of gossip steps becomes extremely large, the generalization error bound for DSGD-MGS becomes at most $\mathcal{O}(T^{\frac{2c\beta}{2c\beta +2}}/{n m^{\frac{1}{2c\beta +2}}})$. However, this is still noticeably larger than the centralized mini-batch SGD bound of $\mathcal{O}(T^{\frac{c\beta}{c\beta +1}}/{nm})$, highlighting a lasting difference in how it scales with the number of clients $m$ (because $1/m < 1/m^{\frac{1}{2c\beta +2}}$ when $m>1$).
This theoretical observation reveals a basic constraint: decentralized training cannot fully achieve the generalization performance of centralized training solely by increasing the number of MGS steps. Experiments shown in Figure \ref{fig:CSGD V.S. DSGD} support this conclusion, indicating that even with 20 gossip steps, DSGD-MGS still performs worse than centralized mini-batch SGD in the same settings.

\looseness=-1
Moreover, we are the first to provide a theoretical framework to understand how critical factors, including learning rate, data heterogeneity, number of nodes, sample size per node, and communication topology, jointly influence the generalization performance of MGS (see Reamrk \ref{remark:question1}-\ref{remark8}). Remarkably, we also eliminate the bounded gradient assumption in the non-convex condition. This work enhances our understanding of the challenges in decentralized learning and provides theoretical insights for hyperparameters to better model generalization. Finally, extensive experiments on CIFAR datasets further validate our theoretical results. The main contributions of this paper can be summarized as follows:

\begin{itemize}[leftmargin = 10pt]
\item Theoretically elucidating the mechanism by which MGS enhances the generalization performance of decentralized training models through an exponential reduction in optimization error.
\item Revealing that even with sufficient gossip communication, a theoretical gap in generalization error remains between MGS and centralized training, and this gap cannot be eliminated by MGS alone.

\item Establishing, for the first time under non-convex and without the bound gradient assumption, a unified framework analyzing factors impacting the MGS generalization performance (i.e., learning rate, data heterogeneity, number of nodes, sample size, and topology), thereby addressing a significant gap in existing theoretical frameworks.
\item Validating our theoretical findings through empirical experiments on the CIFAR datasets.
\end{itemize}
These findings provide new theoretical insights and practical implications for understanding and improving decentralized learning algorithms.

\section{Related Works}

This section reviews the current theoretical understanding and challenges in decentralized training, along with the evolution and impact of MGS. Moreover, at the end of each subsection, we highlight the existing gaps and open questions within these areas to position the contributions of this paper.

\textbf{Theoretical Analysis of D-SGD.}
\looseness = -1
Decentralized learning has attracted significant research interest due to its potential for enhanced privacy, communication efficiency, and scalability \citep{lian2017can, gabrielli2023survey,cyffers2022privacy,neglia2019role}. Early theoretical studies primarily focused on the convergence analysis of D-SGD, examining the number of iterations or communication rounds needed to reach an $\epsilon$-accurate solution \citep{lian2017can,hashemi2021benefits,kovalev2020optimal}. More recently, attention has shifted towards understanding the generalization performance of these algorithms. Sun et al. \cite{sun2021stability} were the first to analyze the generalization performance of D-SGD using uniform stability, later extending their results to asynchronous D-SGD \cite{deng2023stability}. However, these analyses assumed \textit{homogeneous data and bounded gradients}. Zhu et al. \cite{zhu2022topology} further studied the impact of communication topology on the generalization error of D-SGD, with their generalization bounds later improved by \cite{bars2023improved}, but they also relied on the same assumptions. More recently, Ye et al. \cite{ye2025icassp} analyzed the generalization behavior of D-SGD under heterogeneous data, but their analysis was limited to \textit{strongly convex} loss functions. Overall, current D-SGD theories still lack a unified framework that comprehensively accounts for all key algorithm parameters (e.g., data heterogeneity, non-convex loss function, topology, etc.).

\textbf{MGS in Decentralized Training.} Multiple Gossip Steps (MGS) \cite{catoni2007pac, sanghavi2007gossiping} is a technique that improves consensus by allowing multiple rounds of local communication. When integrated into decentralized algorithms, MGS not only enhances generalization performance but also accelerates convergence \cite{liu2011accelerated}. Additionally, Yuan et al. \cite{kovalev2020optimal} showed that MGS can reduce the adverse effects of data heterogeneity, a finding supported by other studies \cite{rogozin2021accelerated, shi2023improving}. Li et al. \cite{FedGosp} found that MGS can significantly improve algorithm accuracy.
In the field of decentralized federated learning, Shi et al. \cite{shi2023improving} incorporated MGS into their DFedSAM algorithm, significantly improving its generalization performance experimentally. Notably, MGS alone can achieve optimal convergence rates in non-convex settings \cite{kovalev2020optimal} without relying on more complex techniques like gradient tracking \cite{lu2021optimal}, quasi-global momentum \cite{lin2021quasi}, or adaptive momentum \cite{nazari2022dadam}. However, these studies have largely overlooked the question of why MGS is effective from a generalization perspective, with these advantages demonstrated mainly through empirical results, leaving a significant gap in the theoretical understanding of MGS.

\section{Background}

In this section, we first present some fundamental definitions required for stability analysis, including population risk, empirical risk, generalization error, excess error, and $l_2$ on-average model stability. Subsequently, we introduce a key lemma that establishes the relationship between the generalization error bound and $l_2$ on-average model stability.

\subsection{Stability and Generalization in Decentralized Learning}

\label{sec:background}

We consider the general statistical learning setting, adapted to a decentralized framework with $m$ agents\footnote{In this paper, the terms node, agent, and client are used interchangeably.}. Each agent $k$ observes data points drawn from a local distribution $\mathcal{D}_k$ with support $\mathcal{Z}$. The goal is to find a global model $\theta \in \mathbb{R}^d$ that minimizes the \emph{population risk}, defined as:

$$R(\theta)\triangleq \frac{1}{m}\sum_{k=1}^ml_{k}(\theta)  \triangleq \frac{1}{m}\sum_{k=1}^m\mathbb{E}_{Z \sim \mathcal D_k} [\ell(\theta; Z)]\;,$$
where $\ell$ is some loss function. We denote by $\theta^\star$ a global minimizer of the population risk, i.e., $\theta^\star\in \arg \min_ \theta R(\theta)$.

Although the population risk $R(\theta)$ is not directly computable, we can instead evaluate an empirical counterpart using $m$ local datasets $S \triangleq (S_1, \ldots, S_m)$, where $S_k = \{Z_{1k}, \ldots, Z_{nk}\}$ represents the dataset of agent $k$, with each sample $Z_{ik}$ drawn from the local distribution $\mathcal{D}_k$. For simplicity, we assume that each local dataset has the same size $n$, though our analysis can be extended to the heterogeneous case. The resulting \emph{empirical risk} is given by:
$$R_S(\theta)\triangleq \frac{1}{m}\sum_{k=1}^m R_{S_k}(\theta)\triangleq \frac{1}{mn}\sum_{k=1}^m\sum_{i=1}^n \ell(\theta; Z_{ik})\;.$$

\looseness=-1
One of the most well-known and extensively studied estimators is the empirical risk minimizer, defined as $\widehat{\theta}_{\text{ERM}} \triangleq \arg\min_\theta R_S(\theta)$. However, in most practical scenarios, directly computing this estimator is infeasible. Instead, one typically employs a potentially random \emph{decentralized optimization} algorithm $A$, which takes the full dataset $S$ as input and returns an approximate minimizer $A(S) \in \mathbb{R}^d$ for the empirical risk $R_S(\theta)$.

In this setting, the expected \emph{excess risk} $R(A(S)) - R(\theta^\star)$ can be upper-bounded by the sum of the (expected) \emph{generalization error} ($\epsilon_{\textrm{gen}}$) and the (expected) \emph{optimization error} ($\epsilon_{\textrm{opt}}$) \cite{ye2025icassp,bars2023improved}:
\begin{equation}\label{eq:connection gen and opt and excess error}
    \mathbb{E}_{A,S}  [R(A(S)) - R(\theta^\star)] \leq \epsilon_{\textrm{gen}} + \epsilon_{\textrm{opt}} 
\end{equation}
where $\epsilon_{\textrm{gen}} \triangleq \mathbb{E}_{A,S}[R(A(S)) - R_S(A(S))]$ and $\epsilon_{\textrm{opt}} \triangleq \mathbb{E}_{A,S}[R_S(A(S)) - R_S(\widehat{\theta}_{\text{ERM}})]$.
This work focuses on controlling the expected generalization error $\epsilon_{\textrm{gen}}$, for which a common approach is to use the stability analysis of the algorithm $A$.

Contrary to a large body of works using the well-known \emph{uniform stability} \citep{bousquet2002stability,shalev2010learnability}, our analysis relies on the notion of \emph{on-average model stability} \citep{lei2020fine}, which has the advantage of removing the bounded gradient assumption \cite{li2025dfedadmm,Sun2022Decentralized,liuunderstanding} in our analysis, making the theoretical results more general. Below, we recall this notion, with a slight adaptation
to the decentralized setting.

\begin{definition}[\textbf{$l_2$ on-average model stability}] \label{def:on-average} Let $S = (S_1,\ldots,S_m)$ with $S_k = \{Z_{1k},\ldots,Z_{nk}\}$ and $\Tilde{S} = (\Tilde{S}_1,\ldots,\Tilde{S}_m)$ with $\Tilde{S}_k = \{\Tilde{Z}_{1k},\ldots,\Tilde{Z}_{nk}\}$ be two independent copies such that $Z_{ik}\sim \mathcal{D}_k$ and $ \Tilde{Z}_{ik}\sim \mathcal{D}_k$. For any $i\in\{1,\ldots, n\}$ and $j\in\{1,\ldots,m\}$, let us denote by $S^{(ij)}=(S_1,\ldots,S_{j-1},S_j^{(i)},S_{j+1},\ldots,S_m)$, with $S_j^{(i)}=\{Z_{1j},\ldots,Z_{i-1j},\Tilde{Z}_{ij},Z_{i+1j},\ldots,Z_{nj}\}$, the dataset formed from $S$ by replacing the $i$-th element of the $j$-th agent's dataset by $\Tilde{Z}_{ij}$. A randomized algorithm $A$ is said to be \emph{$l_2$ on-average model $\varepsilon$-stable} if
\begin{equation}
    \mathbb{E}_{S,\Tilde{S},A}\Big[\frac{1}{mn}\sum_{i=1}^n\sum_{j=1}^m ||A(S) - A(S^{(ij)})||_2^2\Big] \leq \varepsilon^2 \;.
\end{equation}

\end{definition}

A key aspect of on-average model stability is that it can directly be linked to the generalization error, as shown in the following lemma.

\begin{lemma}[\textbf{Generalization via on-average model stability \citep{lei2020fine}}] 
\label{lemma:ob-avg-gen} Let $A$ be $l_2$ on-average model $\varepsilon$-stable. Let $\gamma > 0$.
Then, if $\ell(\cdot;z)$ is nonnegative and is $\beta$-smoothness for all $z\in\mathcal{Z}$, we have 
\begin{equation*}
    \epsilon_{\textrm{gen}}\leq \frac{1}{2mn\gamma}\sum_{i=1}^n\sum_{j=1}^m\mathbb{E}_{A,S}[\|\nabla\ell(A(S);Z_{ij})\|^2] + \frac{\beta+\gamma}{2mn}\sum_{i=1}^n\sum_{j=1}^m\mathbb{E}_{A,\Tilde{A},S}[\|A(S) - A(S^{(ij)})\|^2]
\end{equation*}

\end{lemma}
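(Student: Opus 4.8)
The plan is to follow the classical symmetrization argument for on-average model stability, adapted to the decentralized double-index structure $(i,j)$. The starting point is to rewrite the population term using the fresh copies $\tilde S$. Since $A(S)$ depends only on $S$ and each $\tilde Z_{ij}$ is drawn independently from $\mathcal{D}_j$, integrating over $\tilde Z_{ij}$ reproduces the local risk $l_j$, so that $\mathbb{E}_{A,S}[R(A(S))] = \mathbb{E}_{A,S,\tilde S}\big[\frac{1}{mn}\sum_{i=1}^n\sum_{j=1}^m \ell(A(S);\tilde Z_{ij})\big]$, where averaging over the redundant index $i$ leaves the value unchanged. The empirical term is already $\mathbb{E}_{A,S}[R_S(A(S))] = \mathbb{E}_{A,S}\big[\frac{1}{mn}\sum_{i,j}\ell(A(S);Z_{ij})\big]$, so $\epsilon_{\textrm{gen}}$ collapses into a double average of the per-sample gaps $\ell(A(S);\tilde Z_{ij}) - \ell(A(S);Z_{ij})$.

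The crux is the exchangeability step. Because $Z_{ij}$ and $\tilde Z_{ij}$ are i.i.d.\ from $\mathcal{D}_j$ and $S^{(ij)}$ is obtained from $S$ precisely by swapping them, the pair $(S,\tilde Z_{ij})$ has the same joint law as $(S^{(ij)},Z_{ij})$ once the internal randomness of $A$ is coupled consistently across the two inputs. Hence $\mathbb{E}[\ell(A(S);\tilde Z_{ij})] = \mathbb{E}[\ell(A(S^{(ij)});Z_{ij})]$, and the generalization error rewrites as $\epsilon_{\textrm{gen}} = \mathbb{E}_{A,S,\tilde S}\big[\frac{1}{mn}\sum_{i,j}\big(\ell(A(S^{(ij)});Z_{ij}) - \ell(A(S);Z_{ij})\big)\big]$. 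This converts an uncomputable population quantity into a difference driven purely by the perturbation $A(S^{(ij)}) - A(S)$, which is exactly what the stability notion controls.

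The remaining steps are analytic and pointwise. For each $(i,j)$ I would expand the gap via $\beta$-smoothness, $\ell(A(S^{(ij)});Z_{ij}) - \ell(A(S);Z_{ij}) \le \langle \nabla\ell(A(S);Z_{ij}),\, A(S^{(ij)}) - A(S)\rangle + \frac{\beta}{2}\|A(S^{(ij)}) - A(S)\|^2$, and then split the inner product with Young's inequality at scale $\gamma$, i.e.\ $\langle a,b\rangle \le \frac{1}{2\gamma}\|a\|^2 + \frac{\gamma}{2}\|b\|^2$ with $a=\nabla\ell(A(S);Z_{ij})$ and $b = A(S^{(ij)}) - A(S)$. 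Merging the two squared-distance contributions produces the coefficient $(\beta+\gamma)/2$; taking expectations and averaging over $(i,j)$ then yields exactly the claimed bound. Smoothness is what supplies the quadratic upper expansion, while nonnegativity is inherited from the learning setting and is not needed beyond it here, since the gradient term is kept explicit rather than folded into the loss via a self-bounding inequality.

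I expect the only genuinely delicate point to be the exchangeability identity in the second step: one must verify that swapping $(Z_{ij},\tilde Z_{ij})$ leaves the joint distribution invariant while correctly tracking that $A(S)$ is independent of $\tilde Z_{ij}$ and $A(S^{(ij)})$ is independent of $Z_{ij}$, and that the randomness of $A$ is coupled identically on $S$ and $S^{(ij)}$ (this is what the appearance of $\tilde A$ in the stability term must be reconciled with). Everything after that is a deterministic inequality followed by linearity of expectation, so no concentration, convexity, or topology-specific reasoning enters at this stage.
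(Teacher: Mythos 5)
Your proposal is correct and follows essentially the same route as the paper's proof: the symmetrization/exchangeability step rewriting $\epsilon_{\textrm{gen}}$ as the expected gap $\frac{1}{mn}\sum_{i,j}\big(\ell(A(S^{(ij)});Z_{ij})-\ell(A(S);Z_{ij})\big)$, followed by the $\beta$-smoothness quadratic expansion and the split of the inner product at scale $\gamma$ (which the paper labels Schwartz's inequality but executes exactly as your Young's inequality), merging the squared-distance terms into the $(\beta+\gamma)/2$ coefficient. Your flagged concern about coupling the algorithm's randomness across $S$ and $S^{(ij)}$ is handled implicitly in the paper's one-line symmetry identity, so no substantive difference remains.
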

In fact, we modified the proof of the lemma from Lei et al.\cite{lei2020fine}, replacing the $R_S(A(S))$ on the right-hand side with a gradient $\mathbb{E}_{A,S}[\|\nabla\ell(A(S);Z_{ij})\|^2]$. This adjustment better captures the impact of data heterogeneity on the generalization error.
With this lemma, obtaining the desired generalization bound reduces to controlling the $l_2$ on-average model stability of the decentralized algorithm $A$.

\subsection{Decentralized SGD with Multiple Gossip Steps}
\label{sec:dsgd}

In this paper, we focus on the widely-used Decentralized Stochastic Gradient Descent (D-SGD) algorithm \citep{nedic2009distributed,lian2017can}, which aims to find minimizers (or saddle points) of the empirical risk $R_S(\theta)$ in a fully decentralized manner. This algorithm relies on peer-to-peer communication between agents, with a graph representing which pairs of agents (or nodes) are able to interact. Specifically, the \emph{communication topology} is captured by a gossip matrix $W \in [0,1]^{m \times m}$ (see Definition \ref{def:gossip_matrix}), where $W_{jk} > 0$ indicates the weight that agent $j$ assigns to messages from agent $k$, and $W_{jk} = 0$ (no edge) implies that agent $j$ does not receive messages from agent $k$.

\begin{wrapfigure}{r}{0.5\textwidth}
\vspace{-1em}
\begin{minipage}{0.5\textwidth}
\hfill
\begin{algorithm}[H]
    \caption{Decentralized SGD with MGS}\label{alg:dsgd-mgs} 
    \begin{algorithmic}[1]
        \STATE {\bfseries Input:} Initialize $\forall k$, $\theta_k^{(0)} = \theta^{(0)} \in \mathbb{R}^d$, iterations $T$, stepsizes $\{\eta_t\}_{t=0}^{T-1}$, weight matrix $W$, Multiple Gossip Steps $Q$.
        \FOR{$t=0,\ldots,T-1$}
        \FOR{each node $k=1,\ldots,m$ in parallel }
            \STATE {\texttt{Local Update Steps:}}
            \STATE Sample $I^t_k\sim \mathcal{U}\{1,\ldots, n\}$ 
            \STATE $\theta_k^{(t,0)} = \theta_k^{(t)} - \eta_t\nabla \ell(\theta_k^{(t)};Z_{I^t_kk})$ 
            \STATE {\texttt{Multiple Gossip Steps:}}
            \FOR{$q = 0$ to $Q-1$}
            \STATE $\theta_k^{(t,q+1)} = \sum^m_{l=1}W_{kl}\theta_l^{(t,q)}$
            \ENDFOR
            \STATE $\theta_k^{(t+1)} = \theta_k^{(t,Q)}$
        \ENDFOR
        \ENDFOR
    \end{algorithmic}
\end{algorithm}
\end{minipage}
\vspace{-3em}
\end{wrapfigure}

The D-SGD with Multiple Gossip Steps (DSGD-MGS) algorithm performs multiple gossip updates during the communication phase of the D-SGD algorithm, while all other computational components remain identical to D-SGD, as detailed in Algorithm \ref{alg:dsgd-mgs}.
Specifically, the main procedure at time $t$ is divided into two steps:

\vspace{-0.5em}
\begin{itemize}[left=0pt, itemsep=5pt, parsep=0pt,after=\vspace{-10pt}]
    \item \textbf{Local Update Steps:} Each node independently and uniformly draws a training sample $Z_{I^t_kk}$ from its local dataset $S_k$. Based on the current model parameter $\theta_k^{(t)}$, it computes the gradient $\nabla \ell(\theta_k^{(t)}; Z_{I^t_kk})$ and performs gradient descent to obtain the initial point for Multiple Gossip Steps: $\theta_k^{(t,0)} = \theta_k^{(t)} - \eta_t \nabla \ell(\theta_k^{(t)}; Z_{I^t_kk})$, where $\eta_t$ denotes the step size.
    \looseness=-1
    \item \textbf{Multiple Gossip Steps:} Each node exchanges information with its neighbors through $Q$ gossip averaging steps: $\theta_k^{(t,q+1)} = \sum_{l=1}^m W_{kl} \theta_l^{(t,q)}$. The resulting model parameter $\theta_k^{(t,q+1)}$ is then used as the initial point $\theta_k^{(t+1)}$ for the next Local Update Steps.
\end{itemize}

\section{Generalization Analysis}

In this section, we first introduce the Definition and Assumptions required for analyzing the generalization of the DSGD-MGS algorithm. We then present the upper bounds for the generalization error and excess error, followed by a detailed analysis of these bounds. Proofs for all Lemmas and Theorems can be found in the \textbf{Appendix} \ref{sec_appendix:proof of all}.

\subsection{Definition and Assumption}

\begin{definition}[Gossip Matrix]\label{def:gossip_matrix}
Let $ W \in [0, 1]^{n \times n} $ be a symmetric doubly stochastic matrix. This means that $ W = W^\top $, and both the row sums and column sums of $ W $ equal one, i.e., $ W \mathbf{1} = \mathbf{1} $ and $ \mathbf{1}^\top W = \mathbf{1}^\top $, where $ \mathbf{1} $ is the vector of all ones. The eigenvalues of $ W $ are ordered as $ 1 = |\lambda_1(W)| > |\lambda_2(W)| \geq \cdots \geq |\lambda_n(W)| $. The spectral gap of $ W $, denoted by $ \delta $, is defined as $\delta := 1 - |\lambda_2(W)|\in (0, 1) $.
\end{definition}

\begin{assumption}\emph{($\beta$-smoothness).} \label{ass:smooth}The loss function $\ell$ is $\beta$-smooth i.e. $\exists \beta>0$ such that $\forall \theta, \theta' \in \mathbb{R}^d, z \in \mathcal{Z}$, $\|\nabla\ell(\theta;z)-\nabla\ell(\theta';z)\|_2\leq \beta \|\theta - \theta'\|_2$.    
\end{assumption}

\begin{assumption}\emph{(Bounded Stochastic Gradient Noise).} \label{ass:bound SG} There exists $\sigma^2>0$ such that $\mathbb{E}_{Z_{i,j}}\|\nabla \ell(\theta;Z_{i,j})-\nabla R_{\mathcal{S}_j}(\theta)\|^2\leq\sigma^2$, for any agent $j \in [m]$ and $\theta \in \mathbb{R}^d$.
\end{assumption}

\begin{assumption}\emph{(Bounded Heterogeneity).}\label{ass:bound_hetero} There exists $\xi^2 > 0$ such that $\frac{1}{m}\sum_{k=1}^m\|\nabla R_{S_k}(\theta) - \nabla R_S(\theta)\|^2 \le \xi^2$, for any $\theta \in \mathbb{R}^d$. 
\end{assumption}

Using the property $\beta$-smoothness of $\ell(\theta; z)$, it is straightforward to show that $\ell_{k}(\theta) = \mathbb{E}_{Z \sim \mathcal{D}_k} [\ell(\theta; Z)]$ and $R_{S_k}(\theta) = \frac{1}{n}\sum_{i=1}^n\ell(\theta;Z_{ik})$ also satisfy the property $\beta$-smoothness. 

\begin{remark}
    Definition \ref{def:gossip_matrix} stipulates that the communication topology must be a doubly stochastic matrix, which appears in many decentralized optimization works~\cite{lian2017can,Sun2022Decentralized,bars2023improved,hashemi2021benefits,li2025dfedadmm}. Assumption \ref{ass:smooth} specifies that the loss function is smooth, which is often used in optimization and generalization studies under non-convex settings \cite{li2025dfedgfm,liuunderstanding,liu2024decentralized,li2024boosting,li2024oledfl,li2025asymmetrically}. Assumption \ref{ass:bound SG} states that the stochastic gradients of the samples are bounded, and Assumption \ref{ass:bound_hetero} bounds the heterogeneity of the data. These assumptions are frequently used in the convergence analysis of many works~\cite{li2025dfedgfm,li2025dfedadmm,shi2023improving,liu2024decentralized}, and we will employ them in this paper to analyze the stability and generalization of DSGD-MGS.
\end{remark}

\subsection{Generalization Error and Excess Error of DSGD-MGS}

Due to its fully decentralized structure, DSGD-MGS produces $m$ distinct outputs, $A_1(S) \triangleq \theta_1^{(T)}, \ldots, A_m(S) \triangleq \theta_m^{(T)}$, one for each agent. As a result, the stability and generalization analysis that follows will focus on these individual outputs, rather than a single global output $A(S)$ as described in Section \ref{sec:background}.
Denote by $A_k(S) = \theta_k^{(T)}$ and  $A_k(S^{(ij)}) = \theta_k^{(T)}(i,j)$, the final iterates of agent $k$ for DSGD-MGS run over two data sets $S$ and $S^{(ij)}$ that differ only in the $i$-th sample of agent $j$. To obtain a tighter upper bound for the non-convex case, we modify Lemma \ref{lemma:ob-avg-gen} by introducing a variable $t_0$, resulting in the following key lemma, which transforms the computation of the generalization error upper bound $\epsilon_{\textrm{gen}}$ into the computation of the stability upper bound.

\begin{lemma} \label{lemma:key-non-conv_paper} Assume the loss function $\ell(\cdot, z)$ is nonnegative and bounded in $[0,1]$, and that Assumptions \ref{ass:smooth} hold. For all $i=1,\ldots,n$ and $j=1,\ldots, m$, let $\{\theta_k^{(t)}\}_{t=0}^T$ and $ \{\tilde{\theta}_k^{(t)}(i,j)\}_{t=0}^T$, the iterates of agent $k = 1,\ldots, m$ for DSGD-MGS run on $S$ and $S^{(ij)}$ respectively. Then, for every $t_0 \in \{0, 1, \ldots, T\}$ we have:
\begin{align*}
    &|\mathbb{E}_{A,S}[R(A_k(S)) - R_S(A_k(S))]| \\
    \quad &\leq \frac{t_0}{n} +\underbrace{\frac{\gamma + \beta}{2mn}\sum_{i=1}^n\sum_{j=1}^m\mathbb{E}[\delta_k^{(T)}(i,j) \big| \delta^{(t_0)}(i,j) = \mathbf{0}]}_{I_1:\,l_2 \,\text{on-average model stability}} +
    \underbrace{\frac{1}{2mn\gamma}\sum_{i=1}^n\sum_{j=1}^m\mathbb{E}[\|\nabla\ell(A_k(S);Z_{ij})\|^2]}_{I_2:\,\text{Related to optimization error}}
\end{align*}
where $\delta^{(t)}(i,j)$ is the vector containing $\forall k=1,\ldots,m$, $\delta_k^{(t)}(i,j) = \|\theta_k^{(t)} - \tilde{\theta}_k^{(t)}(i,j)\|_2^2$.
\end{lemma}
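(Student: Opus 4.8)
The plan is to adapt the on-average stability argument behind Lemma~\ref{lemma:ob-avg-gen} by inserting a warm-up phase of length $t_0$, following the divergence-delaying idea of Hardt--Recht--Singer. First I would symmetrize using a ghost dataset $\tilde S$. Since $A_k(S)$ is independent of $\tilde S$ and each $\tilde Z_{ij}\sim\mathcal D_j$, we have $\mathbb{E}_{A,S}[R(A_k(S))]=\mathbb{E}_{A,S,\tilde S}\big[\tfrac1{mn}\sum_{i,j}\ell(A_k(S);\tilde Z_{ij})\big]$. Renaming the pair $(Z_{ij},\tilde Z_{ij})$, which is measure-preserving because both are i.i.d.\ $\mathcal D_j$ and independent of everything else, turns $A_k(S)$ into $A_k(S^{(ij)})$ and $\tilde Z_{ij}$ into $Z_{ij}$, yielding the identity
$$\epsilon_{\mathrm{gen}}=\frac1{mn}\sum_{i=1}^n\sum_{j=1}^m \mathbb{E}_{A,S,\tilde S}\big[\ell(A_k(S^{(ij)});Z_{ij})-\ell(A_k(S);Z_{ij})\big],$$
which reduces the generalization error to an average of per-coordinate loss differences between the two coupled runs.

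Next I would split each summand over the event $E_{ij}$ that agent $j$ never samples its $i$-th point during the first $t_0$ iterations, i.e.\ $I^\tau_j\neq i$ for all $\tau<t_0$. The crucial decentralized observation is that the two runs on $S$ and $S^{(ij)}$ differ only through agent $j$'s stochastic gradient, while the gossip steps are identical linear mixings in both; hence, by induction over $t$, as long as agent $j$ has not yet drawn index $i$ every agent's iterate coincides across the two runs, so $E_{ij}\subseteq\{\delta^{(t_0)}(i,j)=\mathbf 0\}$. A union bound over the $t_0$ independent draws gives $\mathbb{P}(E_{ij}^c)\le t_0/n$. On $E_{ij}^c$ I would use $\ell(\cdot;z)\in[0,1]$, so each loss difference is at most $1$, and this part contributes at most $t_0/n$ to the final bound.

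On $E_{ij}$ I would expand the loss difference with the $\beta$-smoothness quadratic upper bound (Assumption~\ref{ass:smooth}), $\ell(w';z)-\ell(w;z)\le\langle\nabla\ell(w;z),w'-w\rangle+\tfrac\beta2\|w'-w\|^2$ with $w=A_k(S)$, $w'=A_k(S^{(ij)})$, $z=Z_{ij}$, and then apply Young's inequality $\langle a,b\rangle\le\tfrac1{2\gamma}\|a\|^2+\tfrac\gamma2\|b\|^2$ to obtain $\tfrac1{2\gamma}\|\nabla\ell(A_k(S);Z_{ij})\|^2+\tfrac{\gamma+\beta}2\delta_k^{(T)}(i,j)$. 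Passing to expectations, I would keep the stability term as $\mathbb{E}[\delta_k^{(T)}(i,j)\mid\delta^{(t_0)}(i,j)=\mathbf 0]$ via $\mathbb{E}[\delta_k^{(T)}(i,j)\mathbf 1_{E_{ij}}]\le\mathbb{E}[\delta_k^{(T)}(i,j)\mid E_{ij}]$ (using $\mathbb{P}(E_{ij})\le1$ and $\delta_k^{(T)}\ge0$), while for the nonnegative gradient term I would drop the indicator through $\mathbb{E}[\|\nabla\ell(A_k(S);Z_{ij})\|^2\mathbf 1_{E_{ij}}]\le\mathbb{E}[\|\nabla\ell(A_k(S);Z_{ij})\|^2]$ to recover the unconditional term $I_2$. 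Summing over $i,j$ and dividing by $mn$ then assembles the three stated terms, and the two-sided bound $|\epsilon_{\mathrm{gen}}|$ follows by repeating the expansion for the reversed difference and invoking the symmetry between $S$ and $S^{(ij)}$.

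I expect the main obstacle to be the decentralized bookkeeping in the conditioning step: arguing rigorously that a single replaced sample at agent $j$ cannot perturb \emph{any} agent's iterate before agent $j$ first selects that index, so that $E_{ij}$ genuinely forces $\delta^{(t_0)}(i,j)=\mathbf 0$ simultaneously at all nodes, and then correctly separating which contribution remains conditional (the stability term $I_1$) from which becomes unconditional (the gradient term $I_2$) when converting the indicator expectations into the final inequality.
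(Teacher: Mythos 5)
Your proposal is correct and follows essentially the same route as the paper's proof: symmetrization with a ghost dataset, splitting on a warm-up event whose complement has probability at most $t_0/n$, and the $\beta$-smoothness expansion plus Young's inequality producing the conditional stability term $I_1$ and the gradient term $I_2$. The only wrinkle is your passage from conditioning on the sampling event $E_{ij}$ to conditioning on $\{\delta^{(t_0)}(i,j)=\mathbf{0}\}$, which set inclusion alone does not justify for conditional expectations but is fixed in one line via $\mathbb{E}[\delta_k^{(T)}(i,j)\mathbf{1}_{E_{ij}}]\le\mathbb{E}[\delta_k^{(T)}(i,j)\mathbf{1}_{\{\delta^{(t_0)}(i,j)=\mathbf{0}\}}]\le\mathbb{E}[\delta_k^{(T)}(i,j)\mid\delta^{(t_0)}(i,j)=\mathbf{0}]$ (the paper conditions on $\{\delta^{(t_0)}(i,j)=\mathbf{0}\}$ directly and uses the sampling event only for the probability bound); indeed, your indicator-based handling of the gradient term is more careful than the paper's appeal to independence between $\|\nabla\ell(A_k(S);Z_{ij})\|^2$ and the conditioning event.
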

According to Lemma \ref{lemma:key-non-conv_paper}, to compute the generalization error $\epsilon_{\textrm{gen}}$, We need to calculate the $l_2$ on-average model stability ($I_1$) and the gradient related to the optimization error ($I_2$). Below, we first provide the stability upper bound, followed by the optimization error upper bound.

\textbf{Upper bound of $I_1$:}
For a fixed couple $(i,j)$, we are first going to control the vector $\Delta^{(t)}=\frac{1}{mn}\sum_{i,j}\Delta^{(t)}(i,j)$, where $\Delta^{(t)}(i,j) \triangleq\mathbb{E}[\delta^{(t)}(i,j) | \delta^{(t_0)}(i,j) = \mathbf{0}]$. When it is clear from context, we simply write $\tilde{\theta}_k^{(t)}(i,j) = \tilde{\theta}_k^{(t)}$. Next, we provide the upper bound of the $l_2$ on-average model stability for the DSGD-MGS algorithm.
\begin{theorem}[\textbf{Stability for the DSGD-MGS}]\label{the:stability of dsgd_mgs}  
As in the conditions of Lemma \ref{lemma:key-non-conv_paper}, then the following holds:
\begin{align*}
     \frac{1}{mn}\sum_{i=1}^n\sum_{j=1}^m\mathbb{E}[\delta_k^{(T)}(i,j) \big| \delta^{(t_0)}(i,j) =\mathbf{0}] \leq \frac{8e\sqrt{2\beta}c^2}{(1+2c\beta)nmt_0} \left(\frac{T}{t_0}\right)^{2c\beta}
\end{align*}
\end{theorem}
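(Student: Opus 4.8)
The plan is to track how a single replaced sample propagates through the coupled dynamics of the $m$ agents, and to reduce the statement to a one-step recursion for the (conditional, sample-averaged) squared deviation that can be solved in closed form. Concretely, I would fix an agent $k$ and set $\Psi_k^{(t)} \triangleq \frac{1}{mn}\sum_{i,j}\mathbb{E}[\delta_k^{(t)}(i,j)\mid\delta^{(t_0)}(i,j)=\mathbf 0]$, so that $\Psi_k^{(t_0)}=0$ and the goal is to bound $\Psi_k^{(T)}$. Each iteration of Algorithm~\ref{alg:dsgd-mgs} factors into a local gradient step followed by $Q$ gossip averagings, and I would analyze the effect of each on $\delta_k^{(t)}(i,j)=\|\theta_k^{(t)}-\tilde\theta_k^{(t)}(i,j)\|_2^2$ separately before assembling them into a recursion in $t$ and solving with the step size $\eta_t=c/t$.

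For the local step I would distinguish two regimes. When agent $l$ draws a sample that is identical in the two runs (every $l\neq j$, and $l=j$ whenever $I_j^t\neq i$), $\beta$-smoothness (Assumption~\ref{ass:smooth}) makes the gradient map $(1+\eta_t\beta)$-expansive, giving $\delta_l^{(t,0)}\le(1+\eta_t\beta)^2\delta_l^{(t)}$. When $l=j$ and $I_j^t=i$, which occurs with probability $1/n$, the two runs feed on different data; here the bounded-gradient assumption is unavailable, and the key device is the self-bounding inequality $\|\nabla\ell(\theta;z)\|_2^2\le 2\beta\,\ell(\theta;z)\le 2\beta$, valid because $\ell$ is nonnegative, $\beta$-smooth, and bounded by $1$. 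This controls the extra displacement injected at node $j$ through $\|\nabla\ell\|_2\le\sqrt{2\beta}$ and is precisely what lets the analysis dispense with bounded gradients. Taking expectation over $I_j^t$ then yields, for each node, a $(1+\eta_t\beta)^2$ multiplicative growth plus an additive injection that is active only with weight $1/n$.

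For the gossip phase I would use that $W$, hence $W^Q$, is symmetric and doubly stochastic with spectral radius one (Definition~\ref{def:gossip_matrix}): by Jensen's inequality applied to each row, $\delta_k^{(t+1)}(i,j)\le\sum_l (W^Q)_{kl}\,\delta_l^{(t,0)}(i,j)$, so gossip is non-expansive and, crucially, the resulting bound depends on neither $Q$ nor the spectral gap --- the benefit of many gossip steps surfaces in the optimization-error term $I_2$, not in the stability term $I_1$. Summing this inequality over $(i,j)$ and using the row-stochasticity $\sum_j (W^Q)_{kj}=1$ redistributes the node-$j$ injection uniformly, producing exactly the $1/(mn)$ scaling of the target. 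The outcome is a scalar recursion of the form $\max_k\Psi_k^{(t+1)}\le(1+\eta_t\beta)^2\max_k\Psi_k^{(t)}+\frac{c'\sqrt{2\beta}\,\eta_t^2}{mn}$ with $\Psi_k^{(t_0)}=0$. Unrolling it gives $\Psi_k^{(T)}\le\frac{c'\sqrt{2\beta}}{mn}\sum_{t=t_0}^{T-1}\eta_t^2\prod_{s=t+1}^{T-1}(1+\eta_s\beta)^2$; bounding the product by $\exp\!\big(2c\beta\sum_{s}1/s\big)\le e\,(T/t)^{2c\beta}$ and evaluating $\sum_{t\ge t_0}\eta_t^2(T/t)^{2c\beta}=c^2\sum_{t\ge t_0}t^{-2}(T/t)^{2c\beta}\le\frac{c^2}{(1+2c\beta)\,t_0}(T/t_0)^{2c\beta}$ reproduces the stated bound.

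The main obstacle I expect is the differing-sample step. Unlike the identical-sample case, the two gradients there cannot be compared through smoothness, so one must expand $\|e_j^{(t)}-\eta_t g\|_2^2$ with $\|g\|_2\le 2\sqrt{2\beta}$ and handle the cross term $\langle e_j^{(t)},g\rangle$ so that the net injection is $O(\eta_t^2)$ rather than $O(\eta_t)$ --- only then does the summation close to the $c^2/(1+2c\beta)$ form and carry the $\sqrt{2\beta}$ dependence. A secondary subtlety is the bookkeeping of the coupling across agents: because the injection lands at the single node $j$ while the quantity of interest is the fixed node $k$, the argument must route the perturbation through $W^Q$ and invoke double-stochasticity both to keep the bound uniform in $k$ and to make it independent of the topology and of $Q$.
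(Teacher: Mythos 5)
Your proposal is correct and follows essentially the same route as the paper's proof: a per-agent coupling recursion with Jensen's inequality over the doubly stochastic gossip matrix (making the stability bound independent of $Q$ and the spectral gap, exactly as in the paper, where the benefit of $Q$ appears only through $\bar{G}$), $(1+\eta_t\beta)$-expansiveness when the drawn samples coincide, the self-bounding device $\|\nabla\ell\|_2\le\sqrt{2\beta}$ for the probability-$\frac{1}{n}$ swapped-sample event, and unrolling from $t_0$ with $\prod_s(1+\eta_s\beta)^2\le (T/(s+1))^{2c\beta}$ and the integral comparison $\sum_{s\ge t_0}(s+1)^{-2c\beta-2}\le t_0^{-2c\beta-1}/(1+2c\beta)$. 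The only discrepancy is constant bookkeeping: in the paper the factor $e$ (and the $8$) arises not from the stepsize product but from your cross-term device made explicit --- Young's inequality with the horizon-dependent parameter $p=n/(T-t_0-1)$, so that the multiplicative inflation $(1+p/n)^{T-s-1}<e$ stays bounded while $1+p^{-1}<2$, which is precisely the resolution of the ``main obstacle'' you flagged and is needed to make your displayed recursion (which omits the $(1+p/n)$ factor) literally valid.
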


\textbf{Upper bound of $I_2$:} 
Let $\bar{G} = \frac{1}{mn} \sum_{i,j}\mathbb{E}[\|\nabla\ell(\theta_k^{(T)};Z_{ij})\|^2]$. According to the Assumptions~\ref{ass:bound SG} and \ref{ass:bound_hetero}, the following inequality holds:
\begin{align*}
    \bar{G} &= \frac{1}{mn} \sum_{i,j}\mathbb{E}[\|\nabla\ell(\theta_k^{(T)};Z_{ij})\|^2] = \frac{1}{mn} \sum_{i,j}\mathbb{E}[\|\nabla\ell(\theta_k^{(T)};Z_{ij}) \pm \nabla R_{S_k}(\theta_k^{(T)})\pm \nabla R_S(\theta_k^{(T)})\|^2] \\
    &\leq 3\sigma^2 + 3\xi^2 + 3\mathbb{E}[\|\nabla R_S(\theta_k^{(T)})\|^2]
\end{align*}
Since $\ell$ satisfies the $\beta$-smoothness property, it is straightforward to show that $R_{S}(\theta_k^{(T)})$ also satisfies the $\beta$-smoothness property. Consequently, $R_{S}(\theta)$ also satisfies the self-bounding property in Lemma \ref{le:self-bound} (see the \textbf{Appendix} \ref{sec_appendix:proof of all}), i.e., $\|\nabla R_{S}(\theta)\|^2 \leq 2\beta R_{S}(\theta)$. Then, we have
\begin{align}\label{eq:gradient_bound_paper}
    \bar{G} \le 3\sigma^2 + 3\xi^2 + 6\beta\mathbb{E}_S[R_S(\theta_k^{(T)})]
\end{align}

Next, we will focus on bounding $\mathbb{E}_S[R_S(\theta_k^{(T)})]$.
According to the results from \cite[Theorem 1]{hashemi2021benefits} (see Lemma \ref{the:optimization error of dsgd_mgs paper} in the \textbf{Appendix} \ref{sec_appendix:proof of all}), we have the following theorem:

\begin{theorem}
[\textbf{Optimization error of DSGD-MGS}]\label{the:optimization error of dsgd_mgs paper}
\looseness=-1
Let  $\Delta^2 := \max_{\mathbf{\theta}^* \in \mathcal{X}^*} \sum_{k=1}^m \|\nabla R_{S_k}(\theta^*)\|^2$, $R_0 := R_{S}(\theta^{(0)}) - R_S^*$, where $\mathcal{X}^* = \arg\min_{\theta} R_S(\theta)$ and $R_S^* = R_{S}(\widehat{\theta}_{\text{ERM}})$. Suppose Assumptions \ref{ass:smooth} and Polyak-Łojasiewicz (PL) condition (see Assumption \ref{ass:PL-condition} in the \textbf{Appendix}) hold. Define
\begin{align*}
Q_{0}  :=\log{(\bar{\rho}/46)}/\log{\left(1-\frac{\delta\tilde{\gamma}}{2}\right)},\bar{\rho}:=1-\frac{\mu}{m\beta}, 
\tilde{\gamma}  =\frac{\delta}{\delta^2 + 8\delta+(4+2\delta)\lambda_{\max}^2(I-W)}.
\end{align*}
Then, if the nodes are initialized such that $\theta_k^Q=0$, for any $Q>Q_0$ after $T$ iterations the iterates of DSGD-MGS with $\eta_t = \frac{1}{\beta}$ satisfy
\begin{align}\label{eq:opt error}
\mathbb{E}_{S}[R_S(\theta_k^{(T)})]-R_S^*=\mathcal{O}\left(\frac{\Delta^2e^{-\frac{\delta \tilde{\gamma} Q}{4}}} {1-\bar{\rho}} + \left[1+\frac{\beta}{\mu\bar{\rho}}\left(1+e^{-\frac{\delta \tilde{\gamma} Q}{4}}\right)\right]R_0\rho^T\right).
\end{align}
Here, $\delta$ represents the spectral gap of $W$, and $\rho \triangleq 1 - \delta = |\lambda_2(W)|$ is defined in definition \ref{def:gossip_matrix}.
\end{theorem}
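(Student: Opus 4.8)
The plan is to derive this bound as a specialization to the PL setting of the coupled convergence analysis of Hashemi et al.~\cite{hashemi2021benefits}, so the real task is to verify that the DSGD-MGS iterates fit their template and then substitute the PL inequality. The natural quantities to track along the iterations are the optimality gap $\mathbb{E}_S[R_S(\bar\theta^{(t)})] - R_S^*$ of the network-averaged model $\bar\theta^{(t)} \triangleq \frac{1}{m}\sum_{l=1}^m \theta_l^{(t)}$, and the consensus deviation $\frac{1}{m}\sum_{k=1}^m \|\theta_k^{(t)} - \bar\theta^{(t)}\|^2$. The first measures optimization progress; the second measures disagreement between agents, and it is precisely this quantity that MGS drives down exponentially fast. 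The two persistent terms in the bound correspond, respectively, to the transient optimization error (the $R_0\rho^T$ term) and to the residual consensus/heterogeneity error (the $\Delta^2 e^{-\delta\tilde\gamma Q/4}/(1-\bar\rho)$ term).

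First I would establish a one-step descent inequality for $R_S(\bar\theta^{(t)})$. Because averaging by a doubly stochastic $W$ preserves the mean, $\bar\theta^{(t+1)} = \bar\theta^{(t)} - \frac{\eta_t}{m}\sum_k \nabla\ell(\theta_k^{(t)};Z_{I_k^t k})$, so the averaged iterate performs a gradient step on $R_S$ whose gradients are evaluated at the local rather than the averaged parameters; since the per-step expected update is the full local gradient $\nabla R_{S_k}$, the optimization bound has no variance floor $\sigma^2$. Using $\beta$-smoothness (Assumption~\ref{ass:smooth}) and $\eta_t = 1/\beta$, a standard expansion gives a decrease proportional to $\|\nabla R_S(\bar\theta^{(t)})\|^2$ up to an error bounded, again by $\beta$-smoothness, by the consensus deviation. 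Invoking the PL condition (Assumption~\ref{ass:PL-condition}), $\|\nabla R_S(\theta)\|^2 \ge 2\mu(R_S(\theta)-R_S^*)$, turns the gradient term into the optimality gap and produces a per-step contraction of the function value; the factor $\bar\rho = 1-\mu/(m\beta)$, the condition-number prefactor $\beta/(\mu\bar\rho)$, and the scale $1/(1-\bar\rho)=m\beta/\mu$ of the heterogeneity term all originate here. Unrolling this recursion yields the stated geometric transient $R_0\rho^T$, up to the consensus-induced residual handled next.

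The crux is controlling the consensus deviation uniformly in $t$, and this is where MGS enters. The $Q$ inner gossip rounds apply $W^Q$ to the post-update models, so the centered iterates are acted on by $W^Q - \frac{1}{m}\mathbf{1}\mathbf{1}^\top$, a contraction that is geometric in $Q$: crudely with rate $|\lambda_2(W)|^Q = (1-\delta)^Q$, and with the sharper effective rate $(1-\delta\tilde\gamma/2)^Q$ obtained from the quadratic-form mixing analysis that defines $\tilde\gamma$ through $\lambda_{\max}^2(I-W)$. Squaring and unrolling the resulting consensus recursion drives the deviation toward a floor set only by the gradient heterogeneity at the optimum, $\Delta^2 = \max_{\theta^* \in \mathcal{X}^*}\sum_k \|\nabla R_{S_k}(\theta^*)\|^2$, multiplied by the geometric factor $e^{-\delta\tilde\gamma Q/4}$ (the square root of the per-round squared contraction). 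The threshold $Q_0 = \log(\bar\rho/46)/\log(1-\delta\tilde\gamma/2)$ is exactly the requirement $(1-\delta\tilde\gamma/2)^Q < \bar\rho/46$, ensuring that after $Q>Q_0$ rounds the consensus contraction is strong enough for the joint system to be contractive.

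Finally I would combine the two estimates into a single Lyapunov inequality of the form ``optimality gap plus a multiple of consensus deviation'' and unroll it over $t=0,\ldots,T$, which cleanly separates the $\rho^T$ transient from the $Q$-dependent heterogeneity floor and delivers the stated $\mathcal{O}(\cdot)$ bound. The main obstacle is the two-way coupling between the sequences: the optimization step injects error into the consensus recursion through the heterogeneous local gradients, while the consensus error degrades the descent, so neither contracts in isolation and one must verify that the joint $2\times 2$ linear system has spectral radius strictly below one. This is precisely where the hypotheses $\eta_t = 1/\beta$ and $Q > Q_0$ are used, and since Hashemi et al.~\cite{hashemi2021benefits} already carry out this coupled analysis, the remaining effort is bookkeeping---matching our spectral gap $\delta$, smoothness $\beta$, and PL constant $\mu$ to their parameters and applying the PL inequality to obtain the linear rate.
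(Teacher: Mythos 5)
Your overall plan is faithful to the provenance of this result: the paper gives no independent proof of Theorem \ref{the:optimization error of dsgd_mgs paper} at all. In both the main text and the appendix (Lemma \ref{le:optimization error of dsgd_mgs}) the bound is imported verbatim from Theorem 1 of Hashemi et al.\ \cite{hashemi2021benefits}, which is exactly the coupled optimality-gap/consensus-deviation Lyapunov analysis you sketch: the PL condition (Assumption \ref{ass:PL-condition}) converts descent in $\|\nabla R_S(\bar{\theta}^{(t)})\|^2$ into a linear contraction of the function gap, the $Q$ gossip rounds contract the centered iterates geometrically, and $Q > Q_0$ guarantees the joint $2\times 2$ system is contractive. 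So your route and the paper's ``proof'' coincide --- both reduce to the cited result plus parameter bookkeeping. (One minor notational point: your sketch would naturally produce a transient governed by the PL contraction factor $\bar{\rho} = 1 - \mu/(m\beta)$, whereas the theorem writes $R_0\rho^T$ with $\rho = |\lambda_2(W)|$ the mixing rate; this apparent collision is in the paper's statement, not in your argument.)

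There is, however, one genuinely incorrect step in your sketch, and it touches a point the paper silently glosses over. You assert that because the sampled update is unbiased (``the per-step expected update is the full local gradient $\nabla R_{S_k}$''), the optimization bound carries no variance floor $\sigma^2$. Unbiasedness does not accomplish this: for stochastic gradient steps with the constant stepsize $\eta_t = 1/\beta$ under smoothness and PL, the one-step inequality contracts the gap but injects an additive term of order $\eta_t^2 \beta \sigma^2$ per iteration, and unrolling leaves a persistent noise floor of order $\sigma^2/(\mu\beta)$ (reduced by at most a factor $1/m$ through averaging over nodes) that does not vanish as $T \to \infty$. The actual reason the theorem contains no $\sigma^2$ term is that Theorem 1 of \cite{hashemi2021benefits} is proved for deterministic local updates using the full gradients $\nabla R_{S_k}$, whereas Algorithm \ref{alg:dsgd-mgs} samples a single data point per node per round; the paper applies the deterministic-gradient result to the stochastic algorithm without comment. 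If you carried out your proposed derivation honestly for the algorithm as stated, the variance term would survive and you would prove a weaker bound than the theorem claims. The gap here originates in the paper's citation, and your attempted justification via unbiasedness does not close it.
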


By combining Equation (\ref{eq:gradient_bound_paper}) with Theorem \ref{lemma:key-non-conv_paper}, we obtain the upper bound for $\bar{G}$.
\begin{align}\label{eq:bar_G}
    \bar{G} = \mathcal{O}(\sigma^2 + \delta^2 + R_S^*) + \mathcal{O}\left(\frac{\beta \Delta^2e^{-\frac{\delta \tilde{\gamma} Q}{4}}} {1-\bar{\rho}} + \left[1+\frac{\beta}{\mu\bar{\rho}}\left(1+e^{-\frac{\delta \tilde{\gamma} Q}{4}}\right)\right]R_0\beta\rho^T\right)
\end{align}

\textbf{Generalization Bound for DSGD-MGS:} With the above Theorem \ref{the:stability of dsgd_mgs} \& \ref{the:optimization error of dsgd_mgs paper}, we can derive the generalization error upper bound for DSGD-MGS.
\begin{theorem}[\textbf{Generalization error of DSGD-MGS}]\label{the:generalization error of dsgd_mgs} Based on Lemma~\ref{lemma:key-non-conv_paper}, Theorem~\ref{the:stability of dsgd_mgs} and Theorem~\ref{the:optimization error of dsgd_mgs paper}, and assuming that Assumptions~\ref{ass:smooth}-\ref{ass:bound_hetero} hold, let the learning rate satisfy $\eta_t \leq \frac{c}{t+1}$ for some constant $c > 0$. We derive the following result by appropriately selecting $t_0$ and $\gamma$:
\begin{align*}
    &|\mathbb{E}_{A,S}[R(A_k(S)) - R_S(A_k(S))]| \\
    &\hspace{3em} \le \frac{2c\beta+3}{\left(n(2c\beta+1)\right)^{\frac{2c\beta+2}{2c\beta+3}}} \left( \frac{2 \bar{G} e\sqrt{2\beta}c^2T^{2c\beta}}{m} \right)^{\frac{1}{2c\beta+3}}  + \frac{2c\beta+2}{n(2c\beta+1)} \left( \frac{4\beta e\sqrt{2\beta}c^2T^{2c\beta}}{m} \right)^{\frac{1}{2c\beta+2}} 
\end{align*}
where the expression for $\bar{G}$ is given in Equation (\ref{eq:bar_G}). 
\end{theorem}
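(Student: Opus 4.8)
The plan is to feed the two ingredients—Theorem~\ref{the:stability of dsgd_mgs} for $I_1$ and the gradient bound $\bar{G}$ of Equation~(\ref{eq:bar_G}) for $I_2$—into the decomposition of Lemma~\ref{lemma:key-non-conv_paper}, and then tune the two free parameters $t_0$ and $\gamma$. Writing the stability bound of Theorem~\ref{the:stability of dsgd_mgs} as $S_0\,t_0^{-(2c\beta+1)}$ with $S_0 \triangleq \frac{8e\sqrt{2\beta}c^2 T^{2c\beta}}{(1+2c\beta)nm}$, and noting that $I_2=\bar{G}/(2\gamma)$ is independent of $t_0$, Lemma~\ref{lemma:key-non-conv_paper} yields, for every admissible $t_0$,
\[ |\epsilon_{\textrm{gen}}| \le \frac{t_0}{n} + \frac{\gamma+\beta}{2}\,S_0\,t_0^{-(2c\beta+1)} + \frac{\bar{G}}{2\gamma}. \]

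First I would optimize over $\gamma$. By AM--GM, $\tfrac{\gamma}{2}S_0 t_0^{-(2c\beta+1)} + \tfrac{\bar{G}}{2\gamma} \ge \sqrt{S_0\bar{G}}\;t_0^{-(c\beta+\frac12)}$, with equality at $\gamma^\star=\sqrt{\bar{G}/(S_0 t_0^{-(2c\beta+1)})}$, so the bound collapses to
\[ |\epsilon_{\textrm{gen}}| \le \frac{t_0}{n} + \frac{\beta}{2}S_0\,t_0^{-(2c\beta+1)} + \sqrt{S_0\bar{G}}\;t_0^{-(c\beta+\frac12)}, \]
in which the $\beta$--part of the stability term decays like $t_0^{-(2c\beta+1)}$ and the $\gamma$--optimized part decays like $t_0^{-(c\beta+\frac12)}$.

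The remaining step is the choice of $t_0$. I would invoke the elementary one--parameter identity $\min_{t>0}\big[\tfrac{t}{n}+K t^{-p}\big]=\tfrac{p+1}{p}\,(pK)^{1/(p+1)}\,n^{-p/(p+1)}$, attained at $t^\star=(npK)^{1/(p+1)}$; denote by $t_1$ and $t_2$ the minimizers of the two applications below. Taking $(p,K)=(c\beta+\tfrac12,\,\sqrt{S_0\bar{G}})$ reproduces the first summand of the claim—the exponents $\tfrac{1}{2c\beta+3}$ and $\tfrac{2c\beta+2}{2c\beta+3}$ emerge from $p+1=c\beta+\tfrac32$—while $(p,K)=(2c\beta+1,\,\tfrac{\beta}{2}S_0)$ reproduces the second summand (exponent $\tfrac{1}{2c\beta+2}$). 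Substituting $S_0$ and collecting constants then matches the stated bound term by term.

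The genuine obstacle is that a single $t_0$ cannot simultaneously balance both decreasing terms, since they carry different powers of $t_0$. I would resolve this by evaluating the three--term bound at $t_0=\max(t_1,t_2)$: at that point the sum of $\tfrac{t_0}{n}$ and the decreasing term that sits \emph{at} its own balance equals the corresponding summand exactly, while the other decreasing term—being monotonically decreasing and evaluated at an argument no smaller than its own balancing point—is at most its value there, namely $\tfrac{1}{p}\tfrac{t_i}{n}$, hence at most the corresponding summand. Since all quantities are nonnegative, adding the two contributions gives precisely the claimed Term~1 $+$ Term~2. Minor bookkeeping (rounding $t_0$ to an integer in $\{0,\dots,T\}$ and tracking the multiplicative constants through $S_0$) completes the argument, with the data heterogeneity, gradient noise, and optimization--error dependence entering only through $\bar{G}$ via Equation~(\ref{eq:bar_G}).
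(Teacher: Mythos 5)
Your proof is correct and reproduces the theorem's bound with exactly the right constants, but you traverse the two-parameter optimization in the opposite order from the paper, which changes the mechanics nontrivially. The paper first chooses $t_0$ to balance $\frac{t_0}{n}$ against the \emph{entire} stability term $\frac{\gamma+\beta}{2}S_0 t_0^{-(2c\beta+1)}$ (its Equation (\ref{eq:optimal_t_0}), with $t_0 \propto \left((\gamma+\beta)e\sqrt{2\beta}c^2T^{2c\beta}/m\right)^{\frac{1}{2c\beta+2}}$), and only then confronts $\gamma$: the resulting $f(\gamma)=C_1\gamma^{-1}+C_2(\gamma+\beta)^{\alpha}$ is intractable exactly, so it invokes subadditivity $(\gamma+\beta)^{\alpha}\le\gamma^{\alpha}+\beta^{\alpha}$ and minimizes the surrogate $g(\gamma)$ by calculus. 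You instead split $\frac{\gamma+\beta}{2}$ linearly up front (an exact identity, where the paper needs an inequality), eliminate $\gamma$ by AM--GM against $\bar{G}/(2\gamma)$, and are then left with \emph{two} decaying powers of $t_0$ — a complication the paper's ordering never encounters, since its $t_0$-step always balances a single decaying term. Your resolution via $t_0=\max(t_1,t_2)$ is sound: at the larger balance point, the pair $\frac{t_0}{n}+K_it_0^{-p_i}$ sitting at its own optimum contributes its minimum $M_i$ exactly, while the other decaying term is dominated by its value at its own (smaller) balance point, which is at most $M_j$; summing gives precisely Term~1 $+$ Term~2, and I verified the constants $\frac{2c\beta+3}{(n(2c\beta+1))^{(2c\beta+2)/(2c\beta+3)}}$ and $\frac{2c\beta+2}{n(2c\beta+1)}$ emerge identically from your $(p,K)$ substitutions. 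That the two routes commute to the same constants is a pleasant check on both; what your route buys is an exact split in place of subadditivity and no calculus on a surrogate, at the price of the max-of-minimizers argument. One shared piece of slop: your $t_1,t_2$ (like the paper's $t_0$) are real-valued and unconstrained, whereas Lemma~\ref{lemma:key-non-conv_paper} requires $t_0\in\{0,1,\ldots,T\}$; you at least flag the rounding explicitly, which the paper does not.
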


\begin{remark}
    [\textbf{Optimization Error Reduction}]\label{remark:question1}
    As shown in Theorem \ref{the:generalization error of dsgd_mgs}, the generalization error bound obtained via $l_2$ on-average model stability is closely related to the optimization error $\bar{G}$. Analyzing the MGS-related terms reveals that increasing the number of MGS steps $Q$ reduces $\bar{G}$, thereby tightening the generalization error bound. Moreover, a more detailed analysis shows that the reduction in the generalization error bound is exponential, specifically on the order of $\mathcal{O}(e^{-\frac{\delta \gamma Q}{4}})$, indicating that even a small increase in $Q$ can lead to significant gains. This observation will also be validated in the experimental section \ref{sec:experiment of generalization error}.
\end{remark}

\begin{remark}
    [\textbf{Gap to Centralization}]
    \looseness = -1
    As indicated by Theorem \ref{the:generalization error of dsgd_mgs}, by letting $Q$ approach infinity, we can derive the limiting generalization error bound, which helps address whether DSGD-MGS with sufficiently many steps can effectively approximate centralized mini-batch SGD. The answer is no, because the resulting bound is at most $\mathcal{O}\left(T^{\frac{2c\beta}{2c\beta +2}} / n m^{\frac{1}{2c\beta +2}}\right)$, which still differs in terms of node count $m$ and per-node data size $n$ from the bound $\mathcal{O}\left(T^{\frac{c\beta}{c\beta + 1}} / mn\right)$ established for centralized mini-batch SGD based on uniform stability in \cite{bars2023improved,hardt2016train}. Therefore, this gap persists unless the number of nodes or the data size per node is significantly increased. As illustrated in Figure \ref{fig:CSGD V.S. DSGD}.
\end{remark}

\begin{remark}
    [\textbf{Related to the Optimization Error}]
    \looseness = -1
    Compared to prior works on the generalization error of D-SGD \cite{Sun2022Stability,hardt2016train,bars2023improved,zhu2022topology}, which rely on Lipschitz assumptions for the loss function, our approach removes this assumption, allowing for a more explicit connection between optimization error and generalization error. In those works, the Lipschitz assumption effectively absorbs optimization-related quantities (e.g., gradients) into a Lipschtiz constant, obscuring this relationship. In contrast, our work removes the Lipschitz assumption, making the relationship between generalization and optimization errors more explicit. Our results show that reducing optimization error can also decrease generalization error to some extent, which explains the common observation that as training progresses, both the training error decreases and the model's performance on the validation set improves.
\end{remark}

\begin{remark}[\textbf{Influential Factors of the Generalization Error for DSGD-MGS}]\label{remark5:factor of gen error}
\looseness = -1
    When the model, loss function, and dataset are fixed, parameters like the smoothness $\beta$, gradient noise $\sigma$, and data heterogeneity $\delta$ are also fixed. In this case, to reduce the generalization error bound according to the upper bound in Theorem \ref{the:generalization error of dsgd_mgs}, the following strategies are effective:
    \: 1) Increase the data size per node $n$;
    \: 2) Increase the number of nodes $m$;
    \: 3) Increase the MGS step count $Q$;
    \: 4) Reduce the distance between the optimal point and the initial point $R_0$;
    \: 5) Use a communication topology with a larger spectral gap $\delta$ (which implies a smaller $\rho$);
    \: 6) Decrease the learning rate $c$.
    The first five are straightforward, while the sixth is recommended because the number of iterations $T$ is usually large, making $T^{\frac{2c\beta}{2c\beta + 2}}$ the dominant term in the bound. Reducing $c$ can significantly reduce this term. Additionally, if the choice of dataset is flexible, selecting one that is as close to i.i.d. as possible is beneficial, as a larger data heterogeneity parameter $\xi$ will generally increase the generalization error bound.
\end{remark}

\begin{remark}
    [\textbf{Innovation in Generalization Error Bounds}]
    \looseness = -1
    Our work introduces $l_2$ on-average model stability to deriving generalization error bounds for decentralized algorithms, characterized by the following key innovations: 
    \: 1) Removal of Lipschitz Assumption: 
    Unlike previous proofs based on uniform stability \cite{hardt2016train,sun2021stability,bars2023improved,liuunderstanding,sun2023mode,sun2023understanding}, our approach removes the Lipschitz assumption on the loss function (which implicitly bounds the gradient), allowing the relationship between optimization error and generalization error to become more explicit.
    \: 2) Explicit Role of Optimization Error: 
    We establish, for the first time, a direct connection between the optimization error and generalization error of the D-SGD algorithm, revealing that reducing the optimization error also decreases the generalization error, which aligns better with observed training dynamics.
    \: 3) Exponential MGS Benefit: Our bounds demonstrate that the impact of MGS on reducing generalization error is exponential, highlighting the significant gains achievable with a moderate number of MGS steps.
    \: 4) Quantification of Heterogeneity Impact: 
    Ye et al.\cite{ye2025icassp} were the first to theoretically reveal that data heterogeneity can degrade the generalization bound of the D-SGD algorithm under the strongly convex setting. Building on this, we take a further step by providing a precise characterization of how data heterogeneity affects generalization in the non-convex setting, filling a critical gap in existing theoretical analyses.
\end{remark}

\begin{theorem}[\textbf{Excess Error of DSGD-MGS}]\label{the:excess error}
    Under the same conditions and notation as Theorems \ref{the:generalization error of dsgd_mgs} and \ref{the:optimization error of dsgd_mgs paper}, and based on the decomposition of excess error in Equation (\ref{eq:connection gen and opt and excess error}), the optimization error bound (Equation \ref{eq:opt error}), and the generalization error bound (Theorem \ref{the:generalization error of dsgd_mgs}), we obtain the following upper bound for the excess error.
    \begin{align}
        \mathbb{E}_{A,S}  [R(A(S)) - R(\theta^\star)] & = \mathcal{O}\Bigg(\frac{\Delta^2e^{-\frac{\delta \gamma Q}{4}}} {1-\bar{\rho}} + \left[1+\frac{\beta}{\mu\bar{\rho}}\left(1+e^{-\frac{\delta \gamma Q}{4}}\right)\right]R_0\rho^T\\ \nonumber
        & + \frac{1}{n^{\frac{2c\beta+2}{2c\beta+3}}} \left( \frac{\bar{G} \beta^{\frac{3}{2}}c^3T^{2c\beta}}{m} \right)^{\frac{1}{2c\beta+3}}  + \frac{1}{n} \left( \frac{\beta^{\frac32}c^2T^{2c\beta}}{m} \right)^{\frac{1}{2c\beta+2}} \Bigg)
    \end{align}
\end{theorem}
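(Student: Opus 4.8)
The plan is to treat this statement as a direct assembly of two results already in hand, glued together by the excess-risk decomposition in \eqref{eq:connection gen and opt and excess error}, namely $\mathbb{E}_{A,S}[R(A(S)) - R(\theta^\star)] \leq \epsilon_{\textrm{gen}} + \epsilon_{\textrm{opt}}$. First I would fix the output to be a single agent's iterate, $A(S) = \theta_k^{(T)}$, so that both the generalization and the optimization bounds refer to the same object; this is the only notational reconciliation required, since Theorem~\ref{the:generalization error of dsgd_mgs} is stated for $A_k(S)$ while Theorem~\ref{the:optimization error of dsgd_mgs paper} is stated for $\theta_k^{(T)}$. With $A(S)$ pinned down, the two summands of the decomposition become exactly the two quantities bounded earlier, and the proof reduces to substitution and constant bookkeeping.

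For the optimization term, I would note that by definition $\epsilon_{\textrm{opt}} = \mathbb{E}_{A,S}[R_S(A(S)) - R_S(\widehat{\theta}_{\text{ERM}})]$, and since $R_S(\widehat{\theta}_{\text{ERM}}) = R_S^*$ this equals $\mathbb{E}_S[R_S(\theta_k^{(T)})] - R_S^*$, precisely the left-hand side of \eqref{eq:opt error}. Substituting Theorem~\ref{the:optimization error of dsgd_mgs paper} verbatim then yields the first two terms of the claimed bound, i.e. the $\Delta^2 e^{-\delta\tilde{\gamma} Q/4}/(1-\bar{\rho})$ term and the $[1+\tfrac{\beta}{\mu\bar{\rho}}(1+e^{-\delta\tilde{\gamma} Q/4})]R_0\rho^T$ term, with nothing to do beyond keeping these inside the $\mathcal{O}(\cdot)$.

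For the generalization term, I would bound $\epsilon_{\textrm{gen}}$ by the two-term expression of Theorem~\ref{the:generalization error of dsgd_mgs}, deliberately keeping $\bar{G}$ unexpanded (its own bound is already recorded in \eqref{eq:bar_G} and should not be re-substituted, so that $\bar{G}$ survives into the final statement). The remaining work is to absorb the numerical prefactors and the $(c\beta)$-dependent combinatorial constants $\tfrac{2c\beta+3}{(2c\beta+1)^{(2c\beta+2)/(2c\beta+3)}}$ and $\tfrac{2c\beta+2}{2c\beta+1}$ into $\mathcal{O}(\cdot)$, and to rewrite the $e\sqrt{2\beta}$ factors sitting inside the powers as powers of $\beta$: in the second term $\beta\cdot\sqrt{2\beta} = \sqrt{2}\,\beta^{3/2}$ gives exactly $n^{-1}(\beta^{3/2}c^2T^{2c\beta}/m)^{1/(2c\beta+2)}$, while in the first term $\sqrt{2\beta}$ is bounded by $\beta^{3/2}$ (and $c^2$ by $c^3$) up to a constant, producing $n^{-(2c\beta+2)/(2c\beta+3)}(\bar{G}\beta^{3/2}c^3T^{2c\beta}/m)^{1/(2c\beta+3)}$. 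Adding the optimization and generalization pieces then gives the theorem.

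I do not expect a genuine technical obstacle here: all the heavy lifting already lives in Theorems~\ref{the:stability of dsgd_mgs}, \ref{the:optimization error of dsgd_mgs paper}, and \ref{the:generalization error of dsgd_mgs}, so this is essentially a bookkeeping proof. The only points demanding care are (i) making the identification $\epsilon_{\textrm{opt}} = \mathbb{E}_S[R_S(\theta_k^{(T)})] - R_S^*$ clean for the chosen output, and (ii) keeping the constant/$\beta$/$c$ simplifications consistent so that the genuine content, the asymptotic scaling in $T$, $n$, $m$, and $Q$, is preserved. The one place where a nontrivial estimate could arise is if one instead took $A(S)$ to be the network average $\tfrac{1}{m}\sum_k \theta_k^{(T)}$: transferring both the generalization and optimization bounds to the averaged iterate would then require an extra smoothness/consensus argument, which is where I would concentrate the only real effort.
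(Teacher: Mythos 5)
Your proposal is correct and takes essentially the same route as the paper, which proves Theorem \ref{the:excess error} precisely by combining the decomposition \eqref{eq:connection gen and opt and excess error} with the optimization bound \eqref{eq:opt error} and the generalization bound of Theorem \ref{the:generalization error of dsgd_mgs}, absorbing the $(c\beta)$-dependent prefactors and the $e\sqrt{2\beta}$ factors into the $\mathcal{O}(\cdot)$ exactly as you describe. Your two points of care --- identifying $\epsilon_{\textrm{opt}} = \mathbb{E}_S[R_S(\theta_k^{(T)})] - R_S^*$ for the fixed output $A(S)=\theta_k^{(T)}$, and deferring the averaged iterate $\bar{\theta}^{(T)}$ to a separate consensus/smoothness argument --- match how the paper handles these issues (the latter in its appendix on $\bar{\theta}^{(T)}$).
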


\begin{remark}
    [\textbf{The difference of conclusions obtained from excess error and generalization error}]
    \label{remark:lr_difference}
    \looseness = -1
    Since the excess error can be decomposed as $\mathbb{E}_{A,S} [R(A(S)) - R(\theta^\star)] \leq \epsilon_{\textrm{gen}} + \epsilon_{\textrm{opt}}$, most conclusions about the generalization error also apply to the excess error (see Remark \ref{remark5:factor of gen error}). The only key difference lies in the choice of learning rate.
    For $\epsilon_{\textrm{gen}}$, a smaller learning rate (i.e., smaller $c$) is preferred, as $\epsilon_{\textrm{gen}}$ is dominated by the term $\mathcal{O}(T^{\frac{2c\beta}{2c\beta + 2}})$, meaning that reducing $c$ significantly reduces this term and hence the generalization error.
    However, this is not the case for $\epsilon_{\textrm{opt}}$. Prior work on the convergence of D-SGD \cite{lian2017can} shows that $\epsilon_{\textrm{opt}} = \mathcal{O}\left(\frac{R_0}{T\eta}\right)$, indicating that an excessively large learning rate increases $\epsilon_{\textrm{opt}}$, thereby undermining convergence.
    Thus, the choice of learning rate involves a trade-off between minimizing generalization error and maintaining convergence, a conclusion that will be confirmed in the Experimental Section \ref{sec:sec:experiment of excess error}.
\end{remark}

\begin{remark}
\textbf{(On the Technical Role of the PL Condition).}
    Our analysis of the generalization error requires bounding the expected squared gradient norm at the final iterate, denoted as $\bar{G}$. However, establishing a tight upper bound for the final iterate's gradient in non-convex decentralized optimization remains a challenging frontier problem. While recent advances have been made in last-iterate convergence analysis (e.g., \citep{yuan2022revisiting}), existing results either do not incorporate the MGS mechanism or provide bounds only on the function value gap, which are insufficient for directly bounding $\bar{G}$. To bridge this gap, we adopt the Polyak-Łojasiewicz (PL) condition. This is a standard approach in the literature (e.g., \citep{Sun2022Decentralized}) used to connect the squared gradient norm with the function value gap. This technical choice is deliberate and crucial, as a tight upper bound on the function value gap under the MGS setting is available \citep{hashemi2021benefits}. Consequently, the PL condition enables us to derive some of the first fine-grained, MGS-aware generalization bounds that explicitly link the generalization error to key algorithmic hyperparameters, including the number of MGS steps ($Q$), communication topology, and learning rate. This provides concrete, quantitative insights that significantly advance beyond high-level bounds, such as the classic $\mathcal{O}(1/T)$ analysis provided by L2-stability \citep{lei2020fine}. Therefore, the reliance on the PL condition reflects the current theoretical limits in non-convex last-iterate analysis rather than a fundamental limitation of our stability framework. Our framework is modular: should future research provide a direct, assumption-free upper bound for $\bar{G}$ in the MGS setting, our generalization bounds can be immediately strengthened by replacing this component. A more detailed discussion is provided in the \textbf{Appendix} \ref{appendix:PL-condition}.
\end{remark}

\begin{remark}\label{remark8}
    All the above discussions are also solid to \( \bar{\theta}^{(T)} \triangleq \frac{1}{m}\sum_{k=1}^m\theta_{k}^{(T)}\). In addition, our theoretical results apply to decentralized topologies other than the fully connected case. When the topology becomes fully connected, the iterative update reduces to the centralized setting. For detailed analysis, please refer to the \textbf{Appendix}. For detailed proof, please refer to \textbf{Appendix} \ref{appendix_sec:bar{theta}}. Additionally, we provide a \textbf{consensus error analysis} to further illustrate the behavior of MGS in both finite and infinite regimes (detailed discussion provided in \textbf{Appendix} \ref{sec:appendix_error_consensus}). Furthermore, we extend our theoretical analysis to the case involving batch size $b$. The detailed proofs and analyses are provided in the \textbf{Appendix} \ref{appendix:proof of minibatch}.
\end{remark}

\section{Experiment}

In this section, we present extensive experiments to validate our theoretical findings. We first describe the experimental setup, followed by the empirical results and corresponding analysis. Due to space constraints, the experimental validation of excess error is presented in \textbf{Appendix} \ref{sec:sec:experiment of excess error}. Furthermore, we conduct an in-depth exploration of the subtle relationship between mini-batch size and (Q) on the CIFAR-100 dataset, providing practitioners with insights for achieving higher performance. Detailed analyses and discussions can be found in the \textbf{Appendix} \ref{appendix:exp_of_b_and_Q}.

\subsection{Empirical Setup}
\looseness = -1
We conduct experiments on the CIFAR-10 dataset \cite{krizhevsky2009learning} with a Dirichlet distribution (non-IID, $\alpha=0.3$) using LeNet to validate the excess error and generalization error of DSGD-MGS. To examine the impact of key hyperparameters, we follow the study by Hardt et al.\cite{hardt2016train} and investigate the weight distance ($\sum_{i=1}^n\sum_{j=1}^m ||\theta_j^{(t)} - \tilde{\theta}_j^{(t)}||_2^2$) and the loss distance ($R(\bar{\theta}^{(t)}) - R_S(\bar{\theta}^{(t)})$) when replacing only one data point in the training dataset. We primarily validate the experimental performance of key parameters in the DSGD-MGS algorithm, such as communication topology, the number of MGS steps, and the total number of clients. For fairness, when exploring one parameter, all other parameters are kept at the same settings. Further implementation details are provided in \textbf{Appendix} \ref{appendix_sec:detail of exp}.

\subsection{Experimental Validation of Generalization Error.}\label{sec:experiment of generalization error}

As shown in Figure \ref{fig:weight_diantance_and_loss_distance_cifar10}, subplots (a) and (b) respectively illustrate the weight distance and loss distance for different parameter settings of the DSGD-MGS algorithm on the perturbed dataset. Overall, both weight distance and loss distance exhibit the same power-law behavior as our theoretical bound $\mathcal{O}(T^{\frac{2c\beta}{2c\beta + 2}})$ (see Theorem \ref{the:generalization error of dsgd_mgs}).
Additionally, within each column of Figure \ref{fig:weight_diantance_and_loss_distance_cifar10} (corresponding to the same parameter setting), these two metrics follow similar trends, confirming the validity of Lemma \ref{lemma:ob-avg-gen} \cite{lei2020fine}, which states that the generalization error can indeed be captured by the stability bound.

\begin{figure*}[ht]
\vspace{-0.5em}
\begin{center}
\subfigure[Weight distance]{
    	\includegraphics[width=1\textwidth]{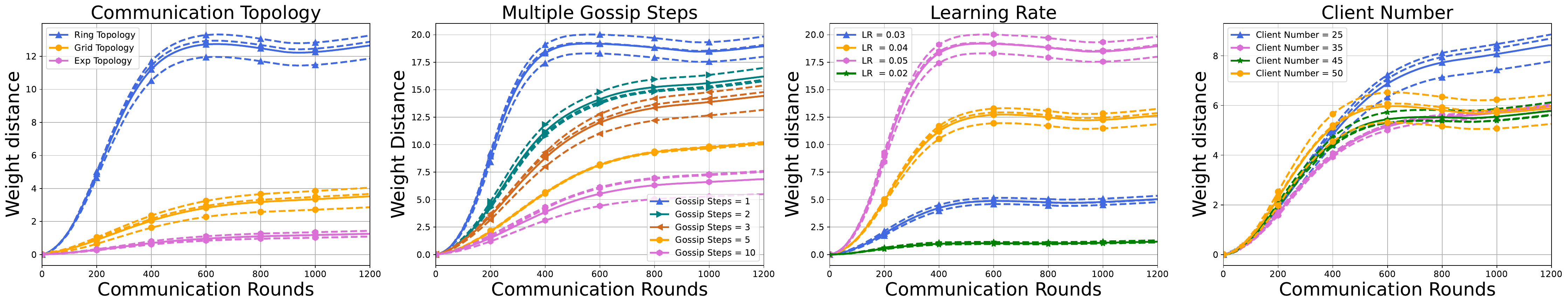}
    }
    \hfill
\subfigure[Loss distance]{
    	\includegraphics[width=1\textwidth]{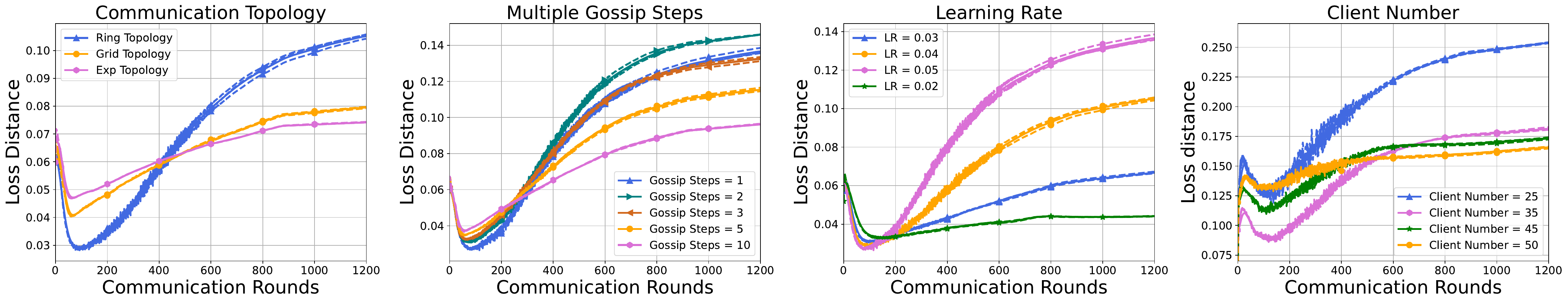}
    }
\end{center}
\vspace{-1em}
\caption{ A comparison of the $l_2$ weight distance and Loss distance (i.e. test loss - train loss) for the DSGD-MGS algorithm on the cifar10 dataset.}
\label{fig:weight_diantance_and_loss_distance_cifar10}
\vspace{-0.5em}
\end{figure*}

From subplots (a) and (b) in Figure \ref{fig:weight_diantance_and_loss_distance_cifar10}, we can observe the following patterns: 
1) \textit{\textbf{Using a communication topology with a smaller spectral gap}} (i.e., a larger $\rho$ in Theorem \ref{the:generalization error of dsgd_mgs}) leads to lower generalization error.
2) \textit{\textbf{Increasing the number of MGS effectively reduces the generalization error.}} For example, in terms of weight distance (Figure \ref{fig:weight_diantance_and_loss_distance_cifar10} (a)), setting $MGS=5$ reduces the weight distance to roughly half of that with $MGS=1$.
3) \textit{\textbf{Smaller learning rates help reduce generalization error}}, consistent with the findings in \cite{liuunderstanding} on decentralized federated learning.
4) \textit{\textbf{A larger client number (i.e., $m$ in Theorem \ref{the:generalization error of dsgd_mgs}) also helps reduce generalization error}}, reflecting a nearly linear speedup effect with the number of clients.
Notably, these observations align well with our theoretical results (see Theorem \ref{the:generalization error of dsgd_mgs} and Remark \ref{remark5:factor of gen error}).This further validates the correctness of our theoretical analysis.

\section{Conclusion}

This paper is the first to establish the generalization error and excess error bounds for the DSGD-MGS algorithm in non-convex settings without the bounded gradients assumption. It addresses how MGS can exponentially reduce the generalization error bound and shows that even with a very large number of MGS steps, it cannot completely close the gap between decentralized and centralized training. Additionally, our theoretical results capture the impact of key factors like data heterogeneity $\delta$, communication topology spectrum $\xi$, Multiple Gossip Steps $Q$, client number $m$, and per-client data size $n$. Previous work has not unified the analysis of these critical parameters, and this paper fills that gap, offering both theoretical insights and experimental validation and significantly advancing the theoretical understanding of decentralized optimization.

\textbf{Limitation.} The theoretical findings in this paper depend on the properties of the last iteration of D-SGD in optimization theory, which is an emerging area yet to be explored. This paper derives the properties of the function value at the last iteration under the PL-condition. Future work can further explore the properties of the loss function gradient at the last iteration under non-convex conditions.

\section*{Acknowledgment}
Li Shen is supported by STI 2030—Major Projects (No. 2021ZD0201405), NSFC Grant (No.  62576364), Shenzhen Basic Research Project (Natural Science Foundation) Basic Research Key Project (NO. JCYJ20241202124430041). Miao Zhang is supported by the National Natural Science Foundation of China (NO. 62403484).

\bibliography{main.bib}
\bibliographystyle{unsrt}

\newpage

\clearpage

\clearpage
\appendix
\part{Appendix}

\section{More Details about Experiments}

\subsection{Implementation Details for Experiments}\label{appendix_sec:detail of exp}
First, the perturbed dataset is constructed as follows: according to Definition \ref{def:on-average}, we randomly select a client, then randomly choose a data point from this client’s training set and swap it with a data point from the test set. This process creates the perturbed dataset. To enhance the robustness of our experimental results, all reported metrics are averaged over three independent runs with different random seeds.
Second, for the other experimental hyperparameters, aside from the experimental parameters to be explored, we use the following default settings: client number = 50, learning rate = 0.04, learning rate decay factor = 0.995, base communication topology = Ring, multiple gossip steps = 1, Dirichlet distribution coefficient (to control data heterogeneity) = 0.3, and communication rounds = 1200.


\subsection{Experimental Validation of Excess Error.}\label{sec:sec:experiment of excess error}

\begin{figure}[ht]
\begin{center}
\subfigure{
    	\includegraphics[width=0.8\textwidth]{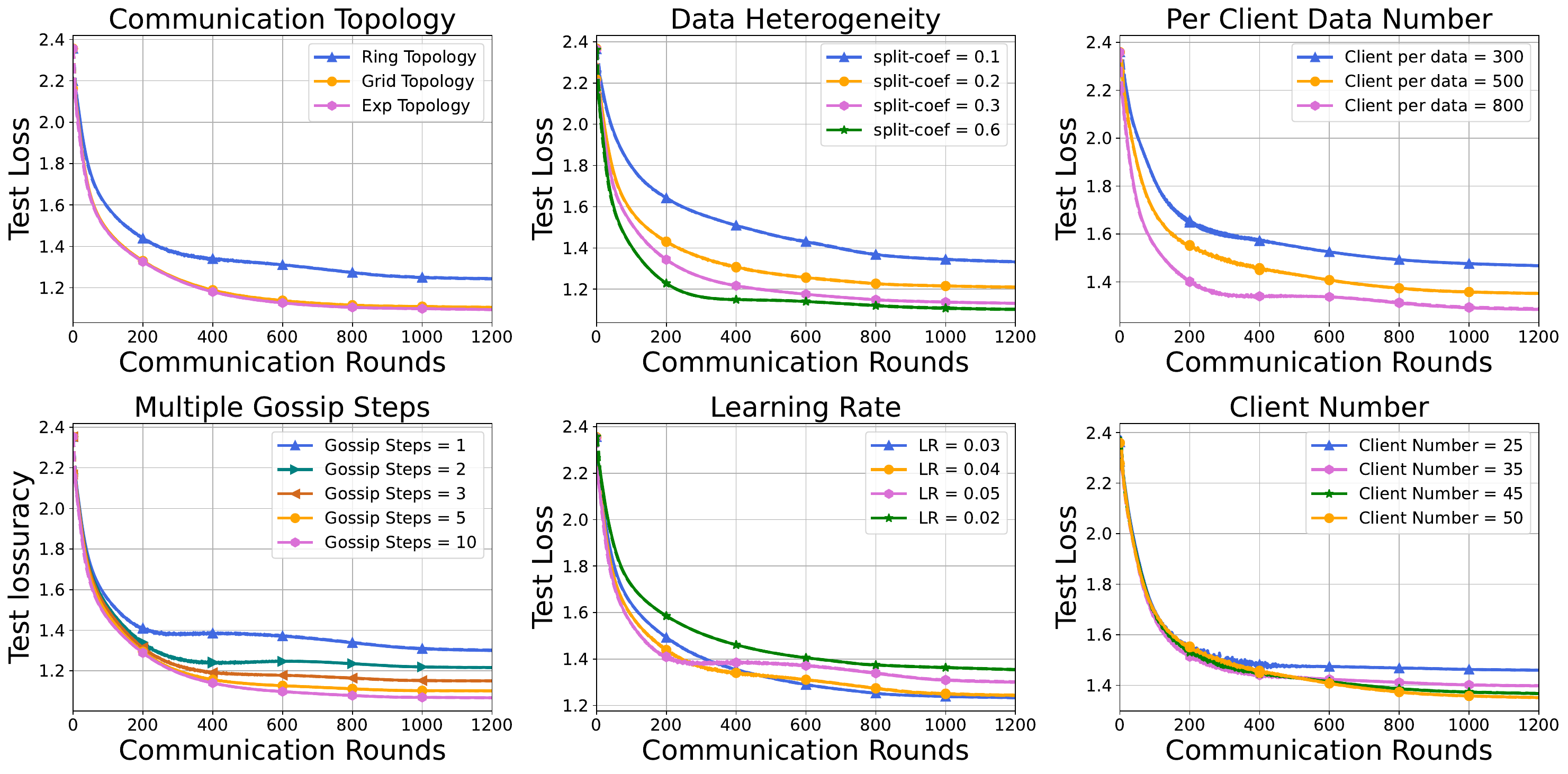}
    }
\end{center}
\caption{ \small The test loss of the DSGD-MGS algorithm on the cifar10 test dataset.}
\label{fig:test_loss_cifar10}
\end{figure}

According to the definition of excess error, $\mathbb{E}_{A,S} [R(A(S)) - R(\theta^\star)]$, since $\theta^\star$ represents the global optimal solution and is independent of the dataset $S$ (i.e., $\mathbb{E}_{A,S}[R(\theta^\star)]$ is a constant), the magnitude relationship of $\mathbb{E}_{A,S} [R(A(S)) - R(\theta^\star)]$ is equivalent to that of $\mathbb{E}_{A,S} [R(A(S))]$.
Therefore, the relative test errors $\mathbb{E}_{A,S} [R(A(S))]$ for different parameter settings can directly reflect the corresponding relationships in excess error.

Notably, our theoretical results provide, for the first time, excess error bounds for DSGD-MGS under non-convex assumptions and heterogeneous data, making them more general than the strongly convex results in \cite{ye2025icassp}. As shown in the "Data Heterogeneity" section of Figure \ref{fig:test_loss_cifar10}, the experimental findings align perfectly with our theoretical predictions, demonstrating that increasing data heterogeneity leads to higher excess error (see Theorem \ref{the:excess error} and Remark \ref{remark5:factor of gen error}).

Additionally, we conducted experiments with a fixed number of clients but varying per-client data sizes. As illustrated in the "Per Client Data Number" subplot of Figure \ref{fig:test_loss_cifar10}, increasing the amount of data per client (i.e., $n$ in Theorem \ref{the:excess error}) reduces excess error, exhibiting a nearly linear speedup, which is consistent with our theoretical analysis.
It is also worth noting the learning rate experiments. Comparing the "Learning Rate" results in Figure \ref{fig:test_loss_cifar10} and Figure \ref{fig:weight_diantance_and_loss_distance_cifar10}, we observe that generalization error decreases with smaller learning rates, while excess error shows a more nuanced trade-off, aligning well with our discussion in Remark \ref{remark:lr_difference} of Theorem \ref{the:excess error}.


\subsection{More Experiments Results of DSGD-MGS on CIFAR10}\label{appendix_sec:more figure on cifar10}

\begin{figure*}[ht]
\vspace{-0.5em}
\begin{center}
\subfigure[Weight distance]{
    	\includegraphics[width=0.9\textwidth]{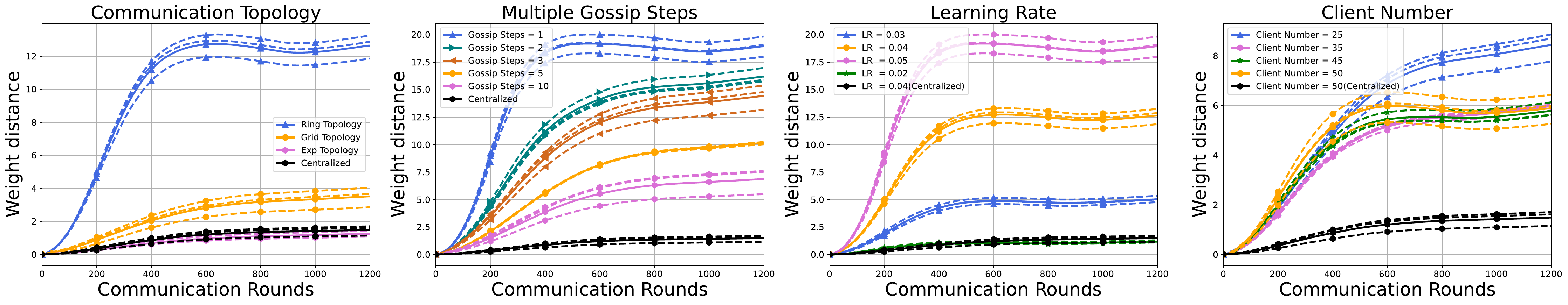}
    }
    \hfill
\subfigure[Loss distance]{
    	\includegraphics[width=0.9\textwidth]{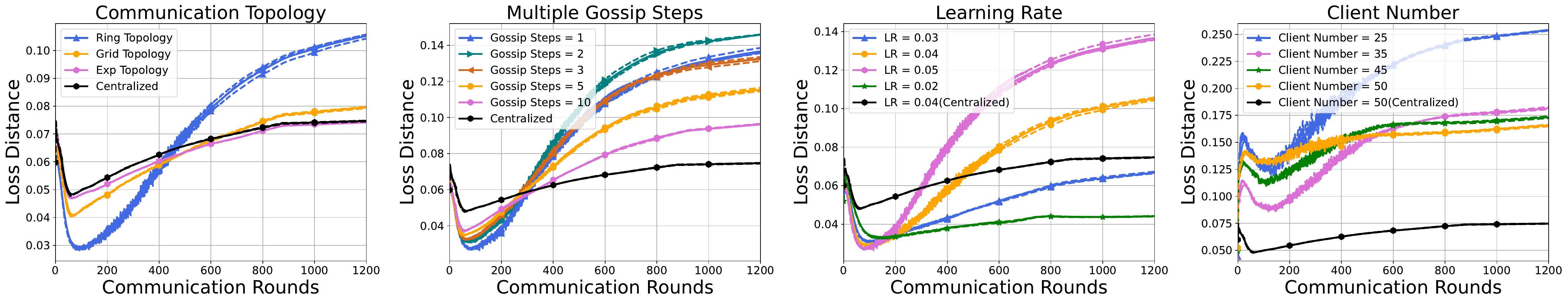}
    }
\end{center}
\vspace{-1em}
\caption{ A comparison of the $l_2$ weight distance and Loss distance (i.e. test loss - train loss) for the DSGD-MGS algorithm on the cifar10 dataset with centralized methods.}
\vspace{-0.5em}
\end{figure*}

\begin{figure}[ht]
\begin{center}
\subfigure{
    	\includegraphics[width=0.8\textwidth]{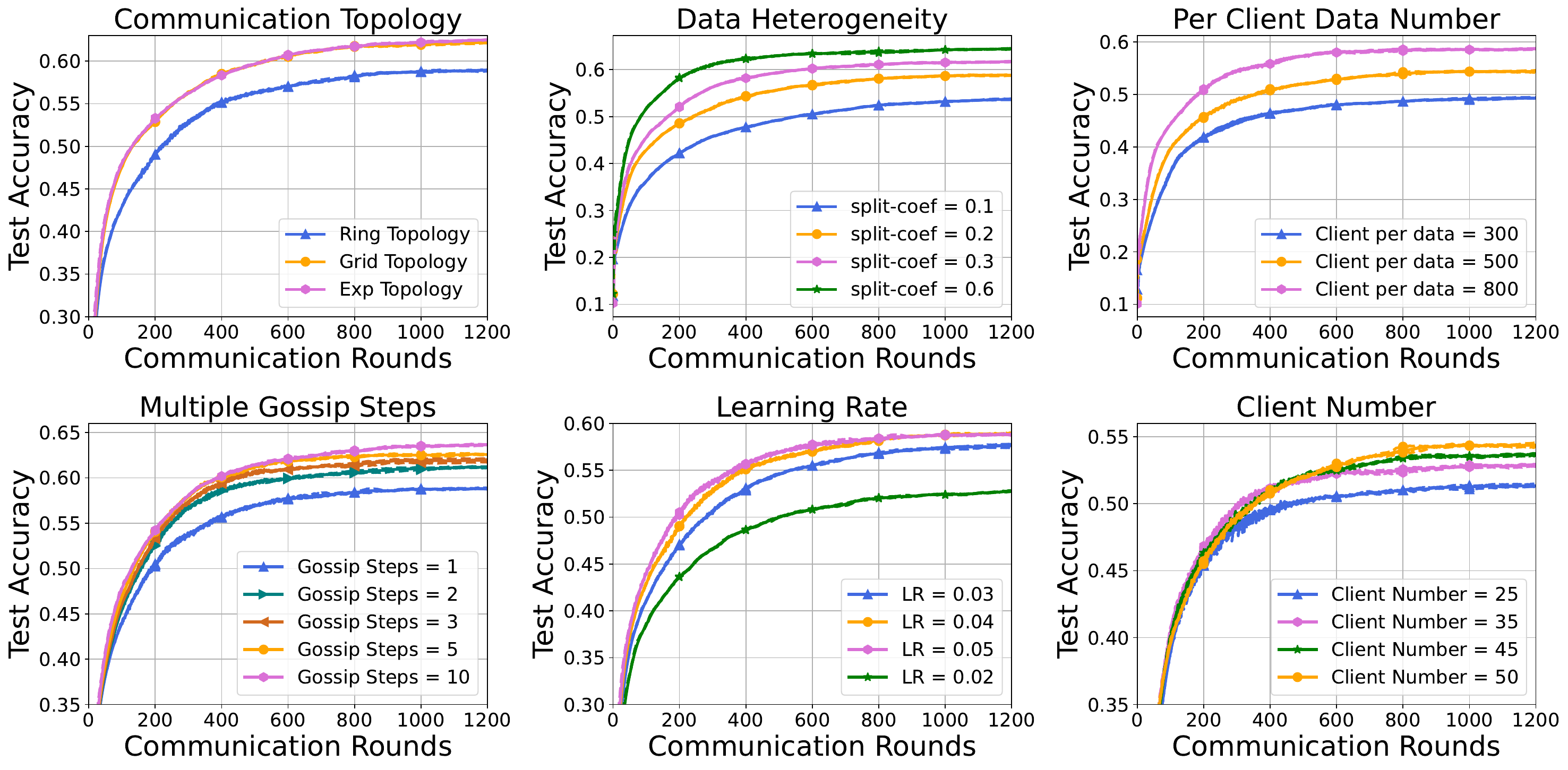}
    }
\end{center}
 \vspace{-0.4cm}
\caption{ \small The accuracy of the DSGD-MGS algorithm on the cifar10 test dataset.}
\label{fig:test_acc_cifar10}
 \vspace{-0.3cm}
\end{figure}

\begin{figure}[ht]
\begin{center}
\subfigure{
    	\includegraphics[width=0.8\textwidth]{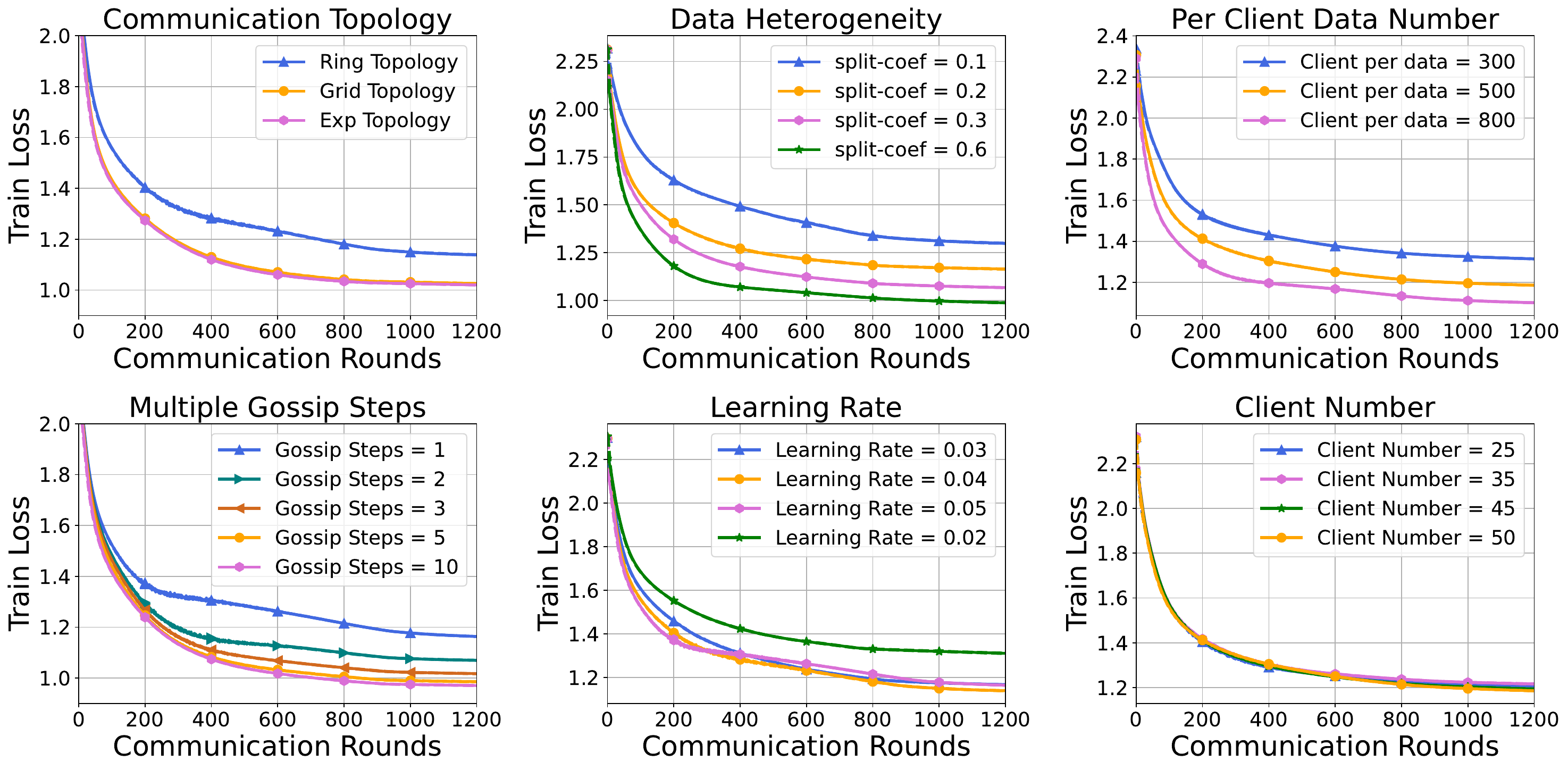}
    }
\end{center}
 \vspace{-0.4cm}
\caption{ \small The loss of the DSGD-MGS algorithm on the cifar10 train dataset.}
\label{fig:train_loss_cifar10}
 \vspace{-0.3cm}
\end{figure}

\section{Proof of generalization error for DSGD-MGS}\label{sec_appendix:proof of all}

We first present an important property of smooth functions, followed by the proof of Lemma \ref{lemma:ob-avg-gen}. Subsequently, we provide the proofs of other lemmas and theorems appearing in the main text.

\begin{lemma}\label{le:self-bound}
[Case of $\alpha = 1$ in \cite{lei2020fine}]\label{lemma:alpha1}
Assume that for all $z \in \mathcal{Z}$, the mapping $\mathbf{\theta} \mapsto \ell(\mathbf{\theta}; z)$ is non-negative, and its gradient $\mathbf{\theta} \mapsto \nabla \ell(\mathbf{\theta}; z)$ is $(1, \beta)$-Hölder continuous (Assumption \ref{ass:smooth}). Then there exists a constant
\(c_{1,1} = \sqrt{2\beta},\)
such that for all $\mathbf{\theta} \in \mathbb{R}^q$ and $z \in \mathcal{Z}$,
\[
\|\nabla \ell(\mathbf{\theta}; z)\|_2 \leq c_{1,1} \cdot \sqrt{\ell(\mathbf{\theta}; z)}.
\]
\end{lemma}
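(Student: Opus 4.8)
The plan is to derive the self-bounding property as a direct consequence of the descent lemma implied by $\beta$-smoothness, combined crucially with the non-negativity of $\ell$. Fix an arbitrary $z \in \mathcal{Z}$ and abbreviate $f(\theta) := \ell(\theta; z)$; by Assumption \ref{ass:smooth} the gradient $\nabla f$ is $\beta$-Lipschitz, which is exactly the $(1,\beta)$-Hölder continuity hypothesis. The first step is to record the standard quadratic upper bound (descent lemma) that smoothness yields,
\begin{equation*}
f(\theta') \leq f(\theta) + \langle \nabla f(\theta),\, \theta' - \theta \rangle + \frac{\beta}{2}\|\theta' - \theta\|_2^2 \qquad \text{for all } \theta, \theta' \in \mathbb{R}^q.
\end{equation*}
This follows by writing $f(\theta') - f(\theta) = \int_0^1 \langle \nabla f(\theta + s(\theta'-\theta)),\, \theta'-\theta\rangle\, ds$ and controlling the integrand via Cauchy--Schwarz and the Lipschitz bound on $\nabla f$.

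The key step is then to specialize this inequality at the one-step gradient-descent point $\theta' = \theta - \tfrac{1}{\beta}\nabla f(\theta)$, which is precisely the minimizer of the quadratic upper bound and therefore extracts the sharpest possible descent. Substituting this choice, the linear and quadratic terms combine to give
\begin{equation*}
f\!\left(\theta - \tfrac{1}{\beta}\nabla f(\theta)\right) \leq f(\theta) - \frac{1}{2\beta}\|\nabla f(\theta)\|_2^2.
\end{equation*}
Here non-negativity enters decisively: since $f \geq 0$ everywhere, the left-hand side is at least $0$, so it can be dropped to yield $\tfrac{1}{2\beta}\|\nabla f(\theta)\|_2^2 \leq f(\theta)$. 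Rearranging and taking square roots produces $\|\nabla f(\theta)\|_2 \leq \sqrt{2\beta}\,\sqrt{f(\theta)}$, which is exactly the claimed bound with $c_{1,1} = \sqrt{2\beta}$.

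I do not expect a genuine obstacle, as the argument is elementary; the only point demanding care is the choice of the evaluation point $\theta' = \theta - \tfrac{1}{\beta}\nabla f(\theta)$, since an arbitrary $\theta'$ would not collapse the first-order and second-order terms into the clean $-\tfrac{1}{2\beta}\|\nabla f(\theta)\|_2^2$ form needed to isolate the gradient norm. Because $\theta$ and $z$ were arbitrary throughout, the inequality holds uniformly over $\mathbb{R}^q$ and $\mathcal{Z}$, completing the proof.
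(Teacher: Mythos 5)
Your proof is correct and is precisely the standard argument behind this result: the paper itself states the lemma by citation to Lei et al.\ \cite{lei2020fine}, whose proof of the $\alpha=1$ case proceeds exactly as you do, applying the descent lemma at the one-step gradient point $\theta - \tfrac{1}{\beta}\nabla f(\theta)$ and invoking non-negativity to obtain $\|\nabla f(\theta)\|_2^2 \leq 2\beta f(\theta)$. All constants check out, so there is nothing to add.
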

The above lemma \ref{le:self-bound} is also known as the self-bounding property of the function \( \ell \). Next, to prove Lemma \ref{lemma:ob-avg-gen}, we introduce a useful inequality for $\beta$-smooth functions $\ell$.
\begin{equation}\label{eq:beta-smooth}
    \ell(\theta;Z)\leq \ell(\tilde{\theta};Z)+\langle\theta-\tilde{\theta},\nabla \ell(\tilde{\theta};Z)\rangle+\frac{\beta\|\theta-\tilde{\theta}\|_2^2}{2}.
\end{equation}

\textbf{Proof of Lemma \ref{lemma:ob-avg-gen}.} Due to the symmetry, we know

\begin{equation}\label{eq:symmetry}
\begin{aligned}
\mathbb{E}_{S,A}\left[R(A(S))-R_S(A(S))\right] & =\mathbb{E}_{S,\widetilde{S},A}\left[\frac{1}{nm}\sum_{i,j}\left(R(A(S^{(ij)}))-R_{S}(A(S))\right)\right] \\
& =\mathbb{E}_{S,\widetilde{S},A}\left[\frac{1}{nm}\sum_{i,j}\left(\ell(A(S^{(ij)});Z_{ij})-\ell(A(S);Z_{ij})\right)\right]
\end{aligned}
\end{equation}

Since the loss function \( \ell \) satisfies \( \beta \)-smoothness (\ref{eq:beta-smooth}), we have:
\begin{equation}\label{eq:beta_smooth_mn}
\begin{aligned}
    \frac{1}{nm}\sum_{i,j}\ell(A(S^{(ij)});Z_{ij}) &\leq \frac{1}{nm}\sum_{i,j}\ell(A(S);Z_{ij}) + \frac{1}{nm}\sum_{i,j}\left<A(S^{(ij)}) - A(S), \nabla\ell(A(S);Z_{ij}) \right> \\
    & + \frac{\beta}{2nm}\sum_{i,j}\|A(S^{(ij)}) - A(S)\|^2
\end{aligned}
\end{equation}
That is 
\begin{equation*}
\begin{aligned}
    \mathbb{E}_{S,A}\left[R(A(S))-R_S(A(S))\right] \leq &  \frac{1}{nm}\sum_{i,j}\left<A(S^{(ij)}) - A(S), \nabla\ell(A(S);Z_{ij}) \right> \\
    & + \frac{\beta}{2nm}\sum_{i,j}\|A(S^{(ij)}) - A(S)\|^2
\end{aligned}
\end{equation*}
According to the Schwartz’s inequality we know
\begin{align*}
    \left<A(S^{(ij)}) - A(S), \nabla\ell(A(S);Z_{ij}) \right> &\leq \|A(S^{(ij)}) - A(S)\|_2 \|\nabla\ell(A(S);Z_{ij})\|_2\\
    & \leq \frac{\gamma}{2}\|A(S^{(ij)}) - A(S)\|_2^2 + \frac{1}{2\gamma}\|\nabla\ell(A(S);Z_{ij})\|^2
\end{align*}
Combining the above two inequalities together, we derive
\begin{align*}
    \mathbb{E}_{S,A}\left[R(A(S))-R_S(A(S))\right] &\leq \frac{1}{2mn\gamma}\sum_{i,j}\mathbb{E}_{A,S}[\|\nabla\ell(A(S);Z_{ij})\|^2] \\
    \quad & + \frac{\beta+\gamma}{2mn}\sum_{i,j}\mathbb{E}_{A,\Tilde{A},S}[\|A(S) - A(S^{(ij)})\|^2]
\end{align*}
Thus, we have completed the proof.

\subsection{Proof of Important Lemma}

Our analysis for the non-convex case relies on $l_2$ on-average model stability and leverages the fact that D-SGD can make
several steps before using the one example that has been swapped. This idea is summarized in the following lemma.

\begin{lemma} \label{lemma:key-non-conv} Assume that the loss function $\ell
(\cdot,z)$ is nonnegative for all $z$. For all $i=1,\ldots,n$ and $j=1,\ldots, m$, let $\{\theta_k^{(t)}\}_{t=0}^T$ and $ \{\tilde{\theta}_k^{(t)}(i,j)\}_{t=0}^T$, the iterates of agent $k = 1,\ldots, m$ for DSGD-MGS run on $S$ and $S^{(ij)}$ respectively. Then, for every $t_0 \in \{0, 1, \ldots, T\}$ we have:
\begin{align*}
    |\mathbb{E}_{A,S}[R(A_k(S)) - R_S(A_k(S))]| &\leq \frac{t_0}{n}\sup_{\theta, z} \ell(\theta;z)+ \frac{1}{2mn\gamma}\sum_{i=1}^n\sum_{j=1}^m\mathbb{E}[\|\nabla\ell(A_k(S);Z_{ij})\|^2]  \\
    &\quad + \frac{\gamma + \beta}{2mn}\sum_{i=1}^n\sum_{j=1}^m\mathbb{E}[\delta_k^{(T)}(i,j) \big| \delta^{(t_0)}(i,j) = \mathbf{0}]
\end{align*}
where $\delta^{(t)}(i,j)$ is the vector containing $\forall k=1,\ldots,m$, $\delta_k^{(t)}(i,j) = \|\theta_k^{(t)} - \tilde{\theta}_k^{(t)}(i,j)\|_2^2$.
\end{lemma}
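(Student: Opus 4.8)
The plan is to refine the proof of Lemma~\ref{lemma:ob-avg-gen} with a burn-in time $t_0$, exploiting that DSGD-MGS can run several steps before it is able to touch the swapped sample. First I would reestablish the symmetry identity for the per-agent output $A_k$. Since $\tilde Z_{ij}\sim\mathcal D_j$ is a fresh copy independent of $S$, the dataset $S^{(ij)}$ has the same law as $S$, and $Z_{ij}$ acts as a held-out test draw for $A_k(S^{(ij)})$; averaging over $(i,j)$ therefore yields
\begin{equation*}
\mathbb{E}[R(A_k(S)) - R_S(A_k(S))] = \mathbb{E}\Big[\tfrac{1}{mn}\sum_{i,j}\big(\ell(A_k(S^{(ij)}); Z_{ij}) - \ell(A_k(S); Z_{ij})\big)\Big].
\end{equation*}

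Next, for each pair $(i,j)$ I would introduce the event $\mathcal{E}_{ij}$ that node $j$ never samples index $i$ during the first $t_0$ iterations. Two facts drive the argument: (i) on $\mathcal{E}_{ij}$ the runs on $S$ and $S^{(ij)}$ are coupled to be identical through step $t_0$, since the datasets differ only in the untouched sample and hence every agent's gradient step and every gossip step agree, giving $\delta^{(t_0)}(i,j)=\mathbf{0}$; and (ii) a union bound over $t_0$ draws, each hitting index $i$ with probability $1/n$, gives $\Pr(\mathcal{E}_{ij}^c)\le t_0/n$. I would then split each summand with $\mathbf{1}_{\mathcal{E}_{ij}}$ and $\mathbf{1}_{\mathcal{E}_{ij}^c}$. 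On the bad event, nonnegativity yields $|\ell(A_k(S^{(ij)}); Z_{ij}) - \ell(A_k(S); Z_{ij})|\le \sup_{\theta,z}\ell(\theta;z)$, contributing at most $\frac{t_0}{n}\sup_{\theta,z}\ell(\theta;z)$.

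On the good event I would reuse the smoothness-plus-Young step from Lemma~\ref{lemma:ob-avg-gen}: applying \eqref{eq:beta-smooth} and then Schwartz's inequality with parameter $\gamma$ bounds the integrand by $\frac{\gamma+\beta}{2}\delta_k^{(T)}(i,j) + \frac{1}{2\gamma}\|\nabla\ell(\theta_k^{(T)}; Z_{ij})\|^2$, using $\|A_k(S^{(ij)})-A_k(S)\|^2=\delta_k^{(T)}(i,j)$. To match the statement I would pass from indicator-weighted to conditional expectations via $\mathbb{E}[\delta_k^{(T)}(i,j)\mathbf{1}_{\mathcal{E}_{ij}}]\le \mathbb{E}[\delta_k^{(T)}(i,j)\mid\mathcal{E}_{ij}]=\mathbb{E}[\delta_k^{(T)}(i,j)\mid\delta^{(t_0)}(i,j)=\mathbf{0}]$ (valid as $\Pr(\mathcal{E}_{ij})\le 1$ and $\delta_k^{(T)}\ge 0$), and simply drop the indicator on the gradient term. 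Summing over $(i,j)$, dividing by $mn$, and repeating the symmetric argument with $S$ and $S^{(ij)}$ interchanged to recover the lower bound produces the absolute value in the claim.

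The main obstacle, and the step deserving most care, is the probabilistic bookkeeping: one must verify that $\mathcal{E}_{ij}$ genuinely forces $\delta^{(t_0)}(i,j)=\mathbf{0}$ for every agent simultaneously --- this relies on a shared-randomness coupling in which a single untouched sample leaves all coupled gossip iterates unchanged --- and that replacing the indicator-weighted quantities by the conditional expectations $\mathbb{E}[\cdot\mid\delta^{(t_0)}(i,j)=\mathbf{0}]$ only weakens the bound. The smoothness and Young estimates themselves are routine, being identical to those already carried out in the proof of Lemma~\ref{lemma:ob-avg-gen}.
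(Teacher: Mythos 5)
Your proposal follows the paper's proof almost step for step: the same replace-one symmetry identity, the same good/bad event split at the burn-in time $t_0$, the same smoothness-plus-Young estimate with parameter $\gamma$, and the same $t_0/n$ probability bound via a union bound over the first $t_0$ draws. However, there is one genuine slip at exactly the step you flagged as delicate. The asserted equality $\mathbb{E}[\delta_k^{(T)}(i,j)\mid\mathcal{E}_{ij}]=\mathbb{E}[\delta_k^{(T)}(i,j)\mid\delta^{(t_0)}(i,j)=\mathbf{0}]$ is unjustified: the event that node $j$ never samples index $i$ before $t_0$ is in general a \emph{strict} subset of $\{\delta^{(t_0)}(i,j)=\mathbf{0}\}$ (the iterates may coincide even after the swapped sample is used, e.g.\ if the two gradients happen to agree), and conditional expectations over nested events satisfy no inequality in either direction --- conditioning on never having sampled $i$ biases the law of the common iterate at time $t_0$ differently from conditioning on coincidence of the iterates. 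Since the lemma's statement is phrased in terms of $\mathbb{E}[\,\cdot\mid\delta^{(t_0)}(i,j)=\mathbf{0}]$, your bound as written does not connect to it. The fix is the one the paper uses and costs one line: define the good event directly as $\mathcal{E}(i,j)=\{\delta^{(t_0)}(i,j)=\mathbf{0}\}$, run the split and the smoothness/Young bound conditionally on this event, and invoke your coupling argument only to certify the inclusion $\{T_0>t_0\}\subset\mathcal{E}(i,j)$, equivalently $\mathcal{E}(i,j)^c\subset\{T_0\le t_0\}$, which yields $\mathbb{P}(\mathcal{E}(i,j)^c)\le\mathbb{P}(T_0\le t_0)\le t_0/n$. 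With that substitution everything else in your argument goes through unchanged.

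Two smaller points, both in your favor. Your treatment of the gradient term by simply dropping the indicator, $\mathbb{E}[X\mathbf{1}_{\mathcal{E}}]\le\mathbb{E}[X]$ for nonnegative $X$, is cleaner than the paper's corresponding step, which asserts independence between $\|\nabla\ell(A_k(S);Z_{ij})\|^2$ and $\mathcal{E}(i,j)$ --- an independence that is not obvious, since both quantities depend on the sampling trajectory. Likewise, recovering the absolute value by repeating the argument with $S$ and $S^{(ij)}$ interchanged is more careful than the paper's one-sided application of the smoothness inequality, whose gradient term is evaluated only at $A_k(S)$ even though the absolute difference is being bounded.
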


\begin{proof}
    Consider the notation of Def. \ref{def:on-average} and notice that 

$$R(A_k(S)) = \frac{1}{m}\sum_{j=1}^m\mathbb{E}_{Z \sim \mathcal{D}_j} [\ell(A_k(S); Z)] = \frac{1}{mn}\sum_{j=1}^m\sum_{j=1}^n\mathbb{E}_{\Tilde{S}} [\ell(A_k(S); \Tilde{Z}_{ij})].$$ Then, for all $k = 1,\ldots,m$, by linearity of expectation we have

\begin{align}
    \mathbb{E}_{A,S}[R(A_k(S)) - R_S(A_k(S))] & = \mathbb{E}_{A,S,\Tilde{S}}\Bigg[\frac{1}{mn}\sum_{j=1}^m\sum_{i=1}^n \Big(\ell(A_k(S); \Tilde{Z}_{ij}) - \ell(A_k(S); Z_{ij})\Big)\Bigg] \nonumber \\
    & = \mathbb{E}_{A,S,\Tilde{S}}\Bigg[\frac{1}{mn}\sum_{j=1}^m\sum_{i=1}^n \Big(\ell(A_k(S^{(ij)}); Z_{ij}) - \ell(A_k(S); Z_{ij})\Big)\Bigg].\nonumber
\end{align}

Hence, 
\begin{align*}
    |\mathbb{E}_{A,S}[R(A_k(S)) - R_S(A_k(S))]| & \leq \mathbb{E}_{A,S,\Tilde{S}}\Bigg[\frac{1}{mn}\sum_{j=1}^m\sum_{i=1}^n \Big|\ell(A_k(S^{(ij)}); Z_{ij}) - \ell(A_k(S); Z_{ij})\Big|\Bigg]  \\
    & = \frac{1}{mn}\sum_{j=1}^m\sum_{i=1}^n \mathbb{E}_{A,S,\Tilde{S}}\Bigg[ \Big|\ell(A_k(S^{(ij)}); Z_{ij}) - \ell(A_k(S); Z_{ij})\Big|\Bigg]
\end{align*}

Let the event $\mathcal{E} (i,j) = \{\delta^{(t_0)}(i,j) = \mathbf{0}\} $, we have $\forall i,j$:
\begin{equation}\label{eq:nonconvex_lemma}
\begin{aligned}
    \mathbb{E}_{A,S,\Tilde{S}}& \Big[ \big|\ell(A_k(S^{(ij)}); Z_{ij})  - \ell(A_k(S); Z_{ij})\big|\Big] \\
    & = \mathbb{P}(\mathcal{E}(i,j))\mathbb{E}[|\ell(A_k(S^{(ij)}); Z_{ij}) - \ell(A_k(S); Z_{ij})| \big| \mathcal{E}(i,j)]  \\
    & \hspace{2cm}+  \mathbb{P}(\mathcal{E}(i,j)^c)\mathbb{E}[|\ell(A_k(S^{(ij)}); Z_{ij}) - \ell(A_k(S); Z_{ij})| \big| \mathcal{E}(i,j)^c] \\
    & \leq \mathbb{E}[|\ell(A_k(S^{(ij)}); Z_{ij}) - \ell(A_k(S); Z_{ij})| \big| \mathcal{E}(i,j)] +\mathbb{P}(\mathcal{E}(i,j)^c)\cdot \sup_{\theta,z}\ell(\theta;z) 
\end{aligned}
\end{equation}
Considering the smoothness of $\ell$, we have:
\begin{align}\label{eq:beta_smooth_single}
    \ell(A_k(S^{(ij)}) &;Z_{ij}) - \ell(A_k(S);Z_{ij}) \\  \nonumber
    &\leq   \left<A_k(S^{(ij)}) - A_k(S), \nabla\ell(A_k(S);Z_{ij}) \right>  + \frac{\beta}{2}\|A_k(S^{(ij)}) - A_k(S)\|^2 \\ \nonumber
    &\leq \frac{\gamma}{2}\|A_k(S^{(ij)}) - A_k(S)\|^2 + \frac{1}{2\gamma}\|\nabla\ell(A_k(S);Z_{ij})\|^2 + \frac{\beta}{2}\|A_k(S^{(ij)}) - A_k(S)\|^2 \\ \nonumber
    & = \frac{\gamma + \beta}{2}\|A_k(S^{(ij)}) - A_k(S)\|^2 + \frac{1}{2\gamma}\|\nabla\ell(A_k(S);Z_{ij})\|^2 
\end{align}
Where the second inequality uses the bound  
\(
\langle a, b \rangle \leq \frac{\gamma}{2} \|a\|^2 + \frac{1}{2\gamma} \|b\|^2,
\)
which holds for any \( \gamma > 0 \). Combining the above inequality (\ref{eq:beta_smooth_single}) with inequality (\ref{eq:nonconvex_lemma}), we obtain:
\begin{equation}\label{eq:nonconvex_lemma_final}
\begin{aligned}
    &\mathbb{E}_{A,S,\Tilde{S}} \Big[ \big|\ell(A_k(S^{(ij)}); Z_{ij})  - \ell(A_k(S); Z_{ij})\big|\Big] \\
    & \leq \mathbb{E}[|\ell(A_k(S^{(ij)}); Z_{ij}) - \ell(A_k(S); Z_{ij})| \big| \mathcal{E}(i,j)] +\mathbb{P}(\mathcal{E}(i,j)^c)\cdot \sup_{\theta,z}\ell(\theta;z) \\ 
    & \leq \frac{\gamma + \beta}{2}\mathbb{E}[\|A_k(S)  - A_k(S^{(ij)})\|^2 \big| \mathcal{E}(i,j)] \\
    & \quad + \frac{1}{2\gamma} \mathbb{E}[\|\nabla\ell(A_k(S);Z_{ij})\|^2\big| \mathcal{E}(i,j)]  +\mathbb{P}(\mathcal{E}(i,j)^c)\cdot \sup_{\theta,z}\ell(\theta;z) \\
    & = \frac{\gamma + \beta}{2}\mathbb{E}[\delta_k^{(T)}(i,j) \big| \mathcal{E}(i,j)] + \frac{1}{2\gamma} \mathbb{E}[\|\nabla\ell(A_k(S);Z_{ij})\|^2]  + \mathbb{P}(\mathcal{E}(i,j)^c)\cdot \sup_{\theta,z}\ell(\theta;z)
\end{aligned}
\end{equation}
The last equality follows from the independence between \( \|\nabla\ell(A_k(S);Z_{ij})\|^2 \) and \( \mathcal{E}(i,j) \).
It remains to bound $\mathbb{P}(\mathcal{E}(i,j)^c)$. Let $T_0$ be the random variable of the first time step DSGD-MGS uses the swapped example. Since we necessarily have $\{T_0 > t_0\} \subset \mathcal{E}(i,j)$, we have $ \mathcal{E}(i,j)^c \subset \{T_0 \leq t_0\}$ and therefore $\mathbb{P}(\mathcal{E}(i,j)^c) \leq \mathbb{P}(T_0 \leq t_0) = \sum_{t=1}^{t_0}\mathbb{P}(T_0=t)\leq \sum_{t=1}^{t_0}\frac{1}{n} = \frac{t_0}{n}$. Averaging over $i$ and $j$ completes the proof.

\end{proof}

We can now move on to the proof of the main theorem. We first apply Lemma \ref{lemma:key-non-conv} and the fact that, by assumption, $\ell \in [0,1]$, so that for any $t_0 \in \{0, 1, \ldots, T\}$ and any $k=1,\ldots, m$, we have:
\begin{equation}\label{eq:right-hand}
\begin{aligned}
    |\mathbb{E}_{A,S}[R(A_k(S)) - R_S(A_k(S))]| &\leq \frac{t_0}{n} + \frac{1}{2mn\gamma}\sum_{i,j}\mathbb{E}[\|\nabla\ell(A_k(S);Z_{ij})\|^2]  \\
    &\quad + \frac{\gamma + \beta}{2mn}\sum_{i,j}\mathbb{E}[\delta_k^{(T)}(i,j) \big| \delta^{(t_0)}(i,j) = \mathbf{0}]
\end{aligned}
\end{equation}
It remains to control the right-hand term of Equation \eqref{eq:right-hand}. We start with the proof for DSGD-MGS. 

\subsection{Proof of \texorpdfstring{$l_2$}{l2} on average model stability of DSGD-MGS.}

For a fixed couple $(i,j)$, we are first going to control the vector $\Delta^{(t)}(i,j) \triangleq\mathbb{E}[\delta^{(t)}(i,j) | \delta^{(t_0)}(i,j) = \mathbf{0}]$, where $\delta^{(t)}(i,j)$ is the vector containing $\forall k=1,\ldots,m$, $\delta_k^{(t)}(i,j) = \|\theta_k^{(t)} - \tilde{\theta}_k^{(t)}(i,j)\|_2^2$. When it is clear from context, we simply write $\tilde{\theta}_k^{(t)}(i,j) = \tilde{\theta}_k^{(t)}$. 

We first estimate \( \|\theta_k^{(t+1)} - \tilde{\theta}_k^{(t+1)}\|_2^2 \).
\begin{equation}\label{eq:recur_k_nonconvex}
\begin{aligned}
    \|\theta_k^{(t+1)} - \tilde{\theta}_k^{(t+1)}\|_2^2 & = 
    \left\| \sum_{l=1}^mW_{kl}\left[ \theta_l^{(t)}-\tilde{\theta}_l^{(t)} +\eta_t\left(  \nabla\ell(\tilde{\theta}_l^{(t)};Z'_{I_l^tl}) - \nabla\ell(\theta_l^{(t)};Z_{I_l^tl})  \right)     \right]\right\|^2 \\
    &\leq \sum_{l=1}^mW_{kl} \left\| \theta_l^{(t)}-\tilde{\theta}_l^{(t)} +\eta_t\left(  \nabla\ell(\tilde{\theta}_l^{(t)};Z_{I_l^tl}) - \nabla\ell(\theta_l^{(t)};Z_{I_l^tl})  \right)   \right\|^2\\
    &\leq \sum_{l\neq j}^mW_{kl} \left\| \theta_l^{(t)}-\tilde{\theta}_l^{(t)} +\eta_t\left(  \nabla\ell(\tilde{\theta}_l^{(t)};Z_{I_l^tl}) - \nabla\ell(\theta_l^{(t)};Z_{I_l^tl})  \right)   \right\|^2 \\
    & \quad + W_{kj}\left\| \theta_j^{(t)}-\tilde{\theta}_j^{(t)} +\eta_t\left(  \nabla\ell(\tilde{\theta}_j^{(t)};Z_{I_j^tj}) - \nabla\ell(\theta_j^{(t)};Z_{I_j^tj})  \right)   \right\|^2\\
    &\leq (1+\eta_t\beta)^2\sum_{l\neq j}^mW_{kl}\|\theta_k^{(t)} - \tilde{\theta}_k^{(t)}\|_2^2 \\
    & \quad + W_{kj}\left\| \theta_j^{(t)}-\tilde{\theta}_j^{(t)} +\eta_t\left(  \nabla\ell(\tilde{\theta}_j^{(t)};Z_{I_j^tj}) - \nabla\ell(\theta_j^{(t)};Z_{I_j^tj})  \right)   \right\|^2
\end{aligned}
\end{equation}
The first inequality in the above expression follows from Jensen's inequality, and the last inequality follows from the \((1 + \eta_t \beta)\)-expansiveness of \( \ell \) \cite{hardt2016train} when $l \neq j$. Next, we perform a analysis of the second term on the right-hand side of the above inequality.

With probability $1-\frac{1}{n}$, $I^t_j\neq i$ so $Z_{I^t_jj} = Z'_{I^t_jj}$. We have
\begin{equation}
    \begin{aligned}
        \left\| \theta_j^{(t)}-\tilde{\theta}_j^{(t)} +\eta_t\left(  \nabla\ell(\tilde{\theta}_j^{(t)};Z_{I_j^tj}) - \nabla\ell(\theta_j^{(t)};Z_{I_j^tj})  \right)   \right\|^2 \leq (1+\eta_t\beta)^2\|\theta_j^{(t)}-\tilde{\theta}_j^{(t)}\|^2
    \end{aligned}
\end{equation}

With probability $\frac{1}{n}$, $I^t_j=i$ and in that case $Z_{I^t_jj} = Z_{ij}\neq \Tilde{Z}_{ij} =  Z'_{I^t_jj} $. 
\begin{equation}
    \begin{aligned}
       &\left\| \theta_j^{(t)}-\tilde{\theta}_j^{(t)} + \eta_t\left(  \nabla\ell(\tilde{\theta}_j^{(t)};Z'_{ij}) - \nabla\ell(\theta_j^{(t)};Z_{ij})  \right)   \right\|^2  \leq
       (1+p)\|\theta_j^{(t)}-\tilde{\theta}_j^{(t)}\|^2 \\
       &\hspace{3cm} + 2\eta_t^2(1+ p^{-1})\|\nabla\ell(\tilde{\theta}_j^{(t)};Z'_{ij})\|^2 + 2\eta_t^2(1+p^{-1})\|\nabla\ell(\theta_j^{(t)};Z_{ij})\|^2
    \end{aligned}
\end{equation}
Considering that \( I^t_k \) follows a uniform distribution ($I^t_k\sim \mathcal{U}\{1,\ldots, n\}$), we get
\begin{align}\label{eq:l=j}
    &\left\| \theta_j^{(t)}-\tilde{\theta}_j^{(t)} + \eta_t\left(  \nabla\ell(\tilde{\theta}_j^{(t)};Z'_{ij}) - \nabla\ell(\theta_j^{(t)};Z_{ij})  \right)   \right\|^2 \leq (1+\frac{p}{n})(1+\eta_t\beta)^2\left\| \theta_j^{(t)}-\tilde{\theta}_j^{(t)}\right\|^2 \\ \nonumber
    & \hspace{3cm} + \frac{2\eta_t^2(1+ p^{-1})}{n}\|\nabla\ell(\tilde{\theta}_j^{(t)};Z'_{ij})\|^2 + \frac{2\eta_t^2(1+p^{-1})}{n} \|\nabla\ell(\theta_j^{(t)};Z_{ij})\|^2
\end{align}
Substituting equation (\ref{eq:l=j}) into equation (\ref{eq:recur_k_nonconvex}), we obtain:
\begin{align}
    \|\theta_k^{(t+1)} - \tilde{\theta}_k^{(t+1)}\|_2^2 & \leq (1+\frac{p}{n})(1+\eta_t\beta)^2\sum_{l=1}^mW_{kl}\|\theta_k^{(t)} - \tilde{\theta}_k^{(t)}\|_2^2 \\ \nonumber
    & \quad + \frac{2\eta_t^2(1+ p^{-1})}{n}W_{kj}\left(\|\nabla\ell(\tilde{\theta}_j^{(t)};Z'_{ij})\|^2 + \|\nabla\ell(\theta_j^{(t)};Z_{ij})\|^2\right)
\end{align}
Given that  \(\mathbb{E}_{S,\Tilde{S},A}\left[\|\nabla\ell(\theta_j^{(t)};Z_{ij})\|^2\right] = \mathbb{E}_{S,\Tilde{S},A}\left[\|\nabla\ell(\tilde{\theta}_j^{(t)};Z'_{ij})\|^2\right],\)
and to simplify the notation, we denote \(\mathbb{E}_{S,\Tilde{S},A}[\cdot] = \mathbb{E}[\cdot]\), we then obtain the following: 
\begin{align}
    \mathbb{E}[\|\theta_k^{(t+1)} - \tilde{\theta}_k^{(t+1)}\|_2^2] & \leq (1+\frac{p}{n})(1+\eta_t\beta)^2\sum_{l=1}^mW_{kl}\mathbb{E}[\|\theta_k^{(t)} - \tilde{\theta}_k^{(t)}\|_2^2] \\ \nonumber
    & \quad + \frac{4\eta_t^2(1+ p^{-1})}{n}W_{kj} \mathbb{E}[\|\nabla\ell(\theta_j^{(t)};Z_{ij})\|^2]
\end{align}
From the previous equations and let the vector \( G^{(t)} \in \mathbb{R}^m \) be defined such that its \( j \)-th component is  
\(
G_j^{(t)} = \mathbb{E}[\|\nabla\ell(\theta_j^{(t)};Z_{ij})\|^2]
\), we get that $\Delta^{(t+1)}(i,j) \le (1+\frac{p}{n})(1+\eta_t\beta)^2W\Delta^{(t)}(i,j) + \frac{4\eta_t^2(1+ p^{-1})}{n}W_{j}\circ G^{(t)}$ (the inequality, and the following ones are meant coordinate-wise), where \( {W}_j \) denotes the \( j \)-th column of matrix \( W \), and \( \circ \) represents the Hadamard product . Let $\Delta^{(t)}=\frac{1}{mn}\sum_{i,j}\Delta^{(t)}(i,j)$, then using the fact that $\eta_t\leq \frac{c}{t+1}$, $c>0$, we have $\forall t\geq t_0$:
\begin{equation}
    \begin{aligned}
        \Delta^{(t+1)} &\leq (1+\eta_t\beta)^2(1+\frac{p}{n})W\Delta^{(t)} + \frac{4\eta_t^2(1+ p^{-1})}{nm}\sum_{j=1}^mW_{j}\circ G^{(t)}\\
        & = (1+\eta_t\beta)^2(1+\frac{p}{n})W\Delta^{(t)} + \frac{4\eta_t^2(1+ p^{-1})}{nm}G^{(t)}\\
        & \leq (1+\frac{c\beta}{t+1})^2(1+\frac{p}{n})W\Delta^{(t)} + \frac{4(1+ p^{-1})}{nm}\frac{c^2}{(t+1)^2}G^{(t)}\\
    \end{aligned}
\end{equation}
Using Lemma \ref{le:self-bound} and the assumption that \( \ell \in [0,1] \), we can derive  
\(G^{(t)} \leq \sqrt{2\beta} \, \mathbf{1},\)
where \( \mathbf{1} \) is the all-ones vector. Then 
\begin{equation}
    \begin{aligned}
        \Delta^{(t+1)}
        & \leq (1+\frac{c\beta}{t+1})^2(1+\frac{p}{n})W\Delta^{(t)} + \frac{4\sqrt{2\beta}(1+ p^{-1})}{nm}\frac{c^2}{(t+1)^2}\mathbf{1}\\
    \end{aligned}
\end{equation}

Since $\Delta^{(t_0)} = \mathbf{0}$, we can unroll the previous recursion from $T$ to $t_0+1$ and get:
\begin{equation}
\begin{aligned}
\Delta^{(T)} &\le  \sum_{s=t_0}^{T-1} \left( \prod_{k=s+1}^{T-1} \left(1+\frac{c\beta}{k+1}\right)^2 \left(1+\frac{p}{n}\right) W \right) \cdot \frac{4\sqrt{2\beta}c^2(1+p^{-1})}{nm(s+1)^2} \mathbf{1}.
\end{aligned}
\end{equation}
Then, we focus on the coordinate of interest $k$ and using the fact that $1+x\leq \exp(x)$, we have:
\begin{equation}
\begin{aligned}
\Delta_k^{(T)} &\le  \sum_{s=t_0}^{T-1} \left( \prod_{k=s+1}^{T-1} \exp(\frac{2c\beta}{k+1}) \left(1+\frac{p}{n}\right) \right) \cdot \frac{4\sqrt{2\beta}c^2(1+p^{-1})}{nm(s+1)^2} .\\
&\le  \sum_{s=t_0}^{T-1} \left( \left(1+\frac{p}{n}\right)^{T-s-1} W^{T-s-1} \exp(2c\beta\sum_{k=s+1}^{T-1}\frac{1}{k+1})  \right) \cdot \frac{4\sqrt{2\beta}c^2(1+p^{-1})}{nm(s+1)^2} .\\
&\le  \sum_{s=t_0}^{T-1} \left( \left(1+\frac{p}{n}\right)^{T-s-1} W^{T-s-1} \exp(2c\beta\log(\frac{T}{s+1}))  \right) \cdot \frac{4\sqrt{2\beta}c^2(1+p^{-1})}{nm(s+1)^2} .\\
& =  \sum_{s=t_0}^{T-1} \left( \left(1+\frac{p}{n}\right)^{T-s-1} W^{T-s-1} \left(\frac{T}{s+1}\right)^{2c\beta}  \right) \cdot \frac{4\sqrt{2\beta}c^2(1+p^{-1})}{nm(s+1)^2} .\\
& =  \sum_{s=t_0}^{T-1} \left( \left(1+\frac{p}{n}\right)^{T-s-1} \left(\frac{T}{s+1}\right)^{2c\beta}  \right) \cdot \frac{4\sqrt{2\beta}c^2(1+p^{-1})}{nm(s+1)^2} .\\
\end{aligned}
\end{equation}
Let \( p = \frac{n}{T - t_0 - 1} > 1 \), then for \( s \geq t_0 \), we have  
\(\left(1 + \frac{p}{n}\right)^{T - s - 1} < \left(1 + \frac{1}{T - t_0 - 1}\right)^{T - t_0 - 1} < e,\)
and also  \(1 + p^{-1} < 2\), where $e$ is euler's number. Then, we have
\begin{align}\label{eq:matrix_l2_stability}
    \Delta_k^{(T)} & \le \sum_{s=t_0}^{T-1} \left(\frac{T}{s+1}\right)^{2c\beta}  \cdot \frac{8e\sqrt{2\beta}c^2}{nm(s+1)^2} \\ \nonumber
    &\le  \frac{8e\sqrt{2\beta}c^2T^{2c\beta}}{nm} \cdot\int_{t_0}^{T-1}s^{-2c\beta -2}\,ds \\ \nonumber
    &\le  \frac{8e\sqrt{2\beta}c^2}{(1+2c\beta)nmt_0} \left(\frac{T}{t_0}\right)^{2c\beta}
\end{align}
We then derive the component-wise form of inequality (\ref{eq:matrix_l2_stability}).
\begin{align}\label{eq:l2_stability_single}
    \frac{1}{mn}\sum_{i=1}^n\sum_{j=1}^m\mathbb{E}[\delta_k^{(T)}(i,j) \big| \delta^{(t_0)}(i,j) =\mathbf{0}] \leq \frac{8e\sqrt{2\beta}c^2}{(1+2c\beta)nmt_0} \left(\frac{T}{t_0}\right)^{2c\beta}
\end{align}

\subsection{Proof of generalization of DSGD-MGS}

By substituting (\ref{eq:l2_stability_single}) into (\ref{eq:right-hand}), we obtain the following.
\begin{equation}\label{eq:right-hand-int}
\begin{aligned}
    |\mathbb{E}_{A,S}[R(A_k(S)) - R_S(A_k(S))]| &\leq \frac{t_0}{n} + \frac{1}{2mn\gamma}\sum_{i,j}\mathbb{E}[\|\nabla\ell(A_k(S);Z_{ij})\|^2]  \\
    &\quad + \frac{\gamma + \beta}{2}\frac{8e\sqrt{2\beta}c^2}{(1+2c\beta)nmt_0} \left(\frac{T}{t_0}\right)^{2c\beta}
\end{aligned}
\end{equation}
Treating equation (\ref{eq:right-hand-int}) as a function of \( t_0 \), and noting that the left-hand side is independent of \( t_0 \), equation (\ref{eq:right-hand-int}) holds for any \( t_0 \). Without loss of generality, let $t_0=\left(\frac{4(\gamma+\beta)e\sqrt{2\beta}c^{2T^{2c\beta}}}{m}\right)^{\frac{1}{2c\beta+2}}$, yielding the following expression.
\begin{equation}\label{eq:optimal_t_0}
\begin{aligned}
    |\mathbb{E}_{A,S}[R(A_k(S)) - R_S(A_k(S))]| &\leq  \frac{1}{2mn\gamma}\sum_{i,j}\mathbb{E}[\|\nabla\ell(A_k(S);Z_{ij})\|^2] \\
    &\quad + \frac{2(c\beta+1)}{n(2c\beta+1)}\cdot\left(\frac{4(\gamma+\beta)e\sqrt{2\beta}c^2T^{2c\beta}}{m}\right)^{\frac{1}{2c\beta+2}}
\end{aligned}
\end{equation}
Let \( f(\gamma) \) denote the right-hand side of equation (\ref{eq:optimal_t_0}). We proceed to analyze the approximate minimum of \( f(\gamma) \). Let $C_1 = \frac{1}{2mn} \sum_{i,j}\mathbb{E}[\|\nabla\ell(A_k(S);Z_{ij})\|^2], C_2 = \frac{2(c\beta+1)}{n(2c\beta+1)} \left( \frac{4e\sqrt{2\beta}c^2T^{2c\beta}}{m} \right)^{\frac{1}{2c\beta+2}}$, and $\alpha = \frac{1}{2c\beta+2}$, 
We aim to find an upper bound for the minimum value of the function $f(\gamma)$ defined as:
$$f(\gamma) = C_1\gamma^{-1} + C_2(\gamma+\beta)^{\alpha}$$
where $\gamma > 0$, $\beta > 0$, 
We assume $c \ge 1$ and $\beta > 0$, which implies $2c\beta+2 > 2$, and thus $0 < \alpha < 1/2$.

Finding the exact minimum of $f(\gamma)$ requires solving $f'(\gamma) = -C_1\gamma^{-2} + \alpha C_2 (\gamma+\beta)^{\alpha-1} = 0$, which yields the equation $\frac{\gamma^2}{(\gamma+\beta)^{1-\alpha}} = \frac{C_1}{\alpha C_2}$. This equation is generally intractable to solve analytically for $\gamma$. Therefore, it is not amenable to analysis, and we need to approximate \( f(\gamma) \) to enable an explicit analysis of the upper bound on the generalization error. Next, we employ inequalities to derive an analytically tractable approximation of the generalization bound.

\textbf{Seeking an analytically tractable approximation of the generalization bound:}

We seek an analytically tractable upper bound for the minimum value, $\min_{\gamma > 0} f(\gamma)$. We utilize the standard inequality $(x+y)^p \le x^p + y^p$ which holds for $x, y > 0$ and $0 < p < 1$. Since $0 < \alpha < 1$, we can apply this inequality to the term $(\gamma+\beta)^{\alpha}$:
$$ (\gamma+\beta)^{\alpha} \le \gamma^{\alpha} + \beta^{\alpha} $$
Substituting this into the expression for $f(\gamma)$ yields an upper bound:
$$ f(\gamma) \le C_1\gamma^{-1} + C_2(\gamma^{\alpha} + \beta^{\alpha}) $$
Let $g(\gamma) = C_1\gamma^{-1} + C_2\gamma^{\alpha} + C_2\beta^{\alpha}$. The minimum of $f(\gamma)$ is bounded by the minimum of $g(\gamma)$:
$$ \min_{\gamma > 0} f(\gamma) \le \min_{\gamma > 0} g(\gamma) $$
We find the minimum of $g(\gamma)$ by setting its derivative with respect to $\gamma$ to zero:
$$ g'(\gamma) = \frac{d}{d\gamma} (C_1\gamma^{-1} + C_2\gamma^{\alpha} + C_2\beta^{\alpha}) = -C_1\gamma^{-2} + \alpha C_2 \gamma^{\alpha-1} $$
Setting $g'(\gamma) = 0$:
$$ C_1\gamma^{-2} = \alpha C_2 \gamma^{\alpha-1} $$
$$ \gamma^{\alpha+1} = \frac{C_1}{\alpha C_2} $$
The minimizer $\tilde{\gamma}^{*}$ for $g(\gamma)$ is:
$$ \tilde{\gamma}^{*} = \left( \frac{C_1}{\alpha C_2} \right)^{\frac{1}{\alpha+1}} $$
Substituting $\tilde{\gamma}^{*}$ back into $g(\gamma)$ gives the minimum value of $g(\gamma)$:
\begin{align*} \min_{\gamma > 0} g(\gamma) = g(\tilde{\gamma}^{*}) &= C_1(\tilde{\gamma}^{*})^{-1} + C_2(\tilde{\gamma}^{*})^{\alpha} + C_2\beta^{\alpha} \\ &= C_1 \left( \frac{C_1}{\alpha C_2} \right)^{\frac{-1}{\alpha+1}} + C_2 \left( \frac{C_1}{\alpha C_2} \right)^{\frac{\alpha}{\alpha+1}} + C_2\beta^{\alpha} \\ &= C_1^{1 - \frac{1}{\alpha+1}} (\alpha C_2)^{\frac{1}{\alpha+1}} + C_2^{1 - \frac{\alpha}{\alpha+1}} \left(\frac{C_1}{\alpha}\right)^{\frac{\alpha}{\alpha+1}} + C_2\beta^{\alpha} \\ &= C_1^{\frac{\alpha}{\alpha+1}} (\alpha C_2)^{\frac{1}{\alpha+1}} + C_2^{\frac{1}{\alpha+1}} C_1^{\frac{\alpha}{\alpha+1}} \alpha^{\frac{-\alpha}{\alpha+1}} + C_2\beta^{\alpha} \\ &= (C_1^{\alpha} C_2)^{\frac{1}{\alpha+1}} \left( \alpha^{\frac{1}{\alpha+1}} + \alpha^{\frac{-\alpha}{\alpha+1}} \right) + C_2\beta^{\alpha} \\ &= (C_1^{\alpha} C_2)^{\frac{1}{\alpha+1}} \alpha^{\frac{-\alpha}{\alpha+1}} \left( \alpha^{\frac{1+\alpha}{\alpha+1}} + 1 \right) + C_2\beta^{\alpha} \\ &= (C_1^{\alpha} C_2)^{\frac{1}{\alpha+1}} \alpha^{\frac{-\alpha}{\alpha+1}} (\alpha + 1) + C_2\beta^{\alpha} \end{align*}
Thus, we have the upper bound:
\begin{align}\label{eq:bound_intermediate}
    \min_{\gamma > 0} f(\gamma) \le (\alpha+1) \alpha^{\frac{-\alpha}{\alpha+1}} (C_1^{\alpha} C_2)^{\frac{1}{\alpha+1}} + C_2 \beta^\alpha 
\end{align}
Now, we substitute the definitions of $C_1$, $C_2$, and $\alpha$. Let $\bar{G} = \frac{1}{mn} \sum_{i,j}\mathbb{E}[\|\nabla\ell(A_k(S);Z_{ij})\|^2]$ denote the average expected squared norm of the gradient. Then $C_1 = \bar{G}/2$. Let $H = \frac{e\sqrt{2\beta}c^2T^{2c\beta}}{m}$. Then $C_2 = \frac{2c\beta+2}{n(2c\beta+1)} (4H)^{\alpha}$.

We also need the following exponent relations based on $\alpha = \frac{1}{2c\beta+2}$:
\begin{align*}
    \alpha+1 &= \frac{1}{2c\beta+2} + 1 = \frac{2c\beta+3}{2c\beta+2}\\
    \frac{1}{\alpha+1} &= \frac{2c\beta+2}{2c\beta+3}\\
    \frac{\alpha}{\alpha+1} &= \frac{1/(2c\beta+2)}{(2c\beta+3)/(2c\beta+2)} = \frac{1}{2c\beta+3}\\
    \frac{-\alpha}{\alpha+1} &= -\frac{1}{2c\beta+3}
\end{align*}
Let's evaluate the two terms in the bound \eqref{eq:bound_intermediate}.

\textbf{First Term:} $(\alpha+1) \alpha^{\frac{-\alpha}{\alpha+1}} (C_1^{\alpha} C_2)^{\frac{1}{\alpha+1}}$
\begin{align*} C_1^{\alpha} C_2 &= \left(\frac{\bar{G}}{2}\right)^{\alpha} \frac{2c\beta+2}{n(2c\beta+1)} (4H)^{\alpha} = \frac{2c\beta+2}{n(2c\beta+1)} \left( \frac{\bar{G}}{2} \cdot 4H \right)^{\alpha} \\ &= \frac{2c\beta+2}{n(2c\beta+1)} (2\bar{G}H)^{\alpha} \end{align*}
\begin{align*} (C_1^{\alpha} C_2)^{\frac{1}{\alpha+1}} &= \left( \frac{2c\beta+2}{n(2c\beta+1)} \right)^{\frac{1}{\alpha+1}} (2\bar{G}H)^{\frac{\alpha}{\alpha+1}} \\ &= \left( \frac{2c\beta+2}{n(2c\beta+1)} \right)^{\frac{2c\beta+2}{2c\beta+3}} (2\bar{G}H)^{\frac{1}{2c\beta+3}} \end{align*}
The coefficient is:
$$ (\alpha+1) \alpha^{\frac{-\alpha}{\alpha+1}} = \frac{2c\beta+3}{2c\beta+2} \left(\frac{1}{2c\beta+2}\right)^{-\frac{1}{2c\beta+3}} = \frac{2c\beta+3}{2c\beta+2} (2c\beta+2)^{\frac{1}{2c\beta+3}} $$
Combining these parts for the first term:
\begin{align*} \text{First Term} 
&= \left( \frac{2c\beta+3}{2c\beta+2} (2c\beta+2)^{\frac{1}{2c\beta+3}} \right) \left( \frac{2c\beta+2}{n(2c\beta+1)} \right)^{\frac{2c\beta+2}{2c\beta+3}} (2\bar{G}H)^{\frac{1}{2c\beta+3}} \\ 
&= \frac{2c\beta+3}{2c\beta+2} (2c\beta+2)^{\frac{1}{2c\beta+3}} \frac{(2c\beta+2)^{\frac{2c\beta+2}{2c\beta+3}}}{(n(2c\beta+1))^{\frac{2c\beta+2}{2c\beta+3}}} (2\bar{G}H)^{\frac{1}{2c\beta+3}} \\ 
&= (2c\beta+3) (2c\beta+2)^{-1 + \frac{1}{2c\beta+3} + \frac{2c\beta+2}{2c\beta+3}} \left( n(2c\beta+1) \right)^{-\frac{2c\beta+2}{2c\beta+3}} (2\bar{G}H)^{\frac{1}{2c\beta+3}} \\ &= (2c\beta+3)  \left( n(2c\beta+1) \right)^{-\frac{2c\beta+2}{2c\beta+3}} (2\bar{G}H)^{\frac{1}{2c\beta+3}} \\ 
&= (2c\beta+3) \left( n(2c\beta+1) \right)^{-\frac{2c\beta+2}{2c\beta+3}} \left( \frac{2 \bar{G} e\sqrt{2\beta}c^2T^{2c\beta}}{m} \right)^{\frac{1}{2c\beta+3}} \end{align*}

\textbf{Second Term:} $C_2 \beta^\alpha$
\begin{align*} C_2 \beta^\alpha &= \left( \frac{2c\beta+2}{n(2c\beta+1)} (4H)^{\alpha} \right) \beta^{\alpha} = \frac{2c\beta+2}{n(2c\beta+1)} (4\beta H)^{\alpha} \\ &= \frac{2c\beta+2}{n(2c\beta+1)} \left( \frac{4\beta e\sqrt{2\beta}c^2T^{2c\beta}}{m} \right)^{\frac{1}{2c\beta+2}} \end{align*}

\textbf{Final Upper Bound:}
Combining the two terms, we obtain the final upper bound for the minimum value of $f(\gamma)$:
\begin{align}\label{eq:final_bound}
\min_{\gamma > 0} f(\gamma) &\le \frac{2c\beta+3}{\left(n(2c\beta+1)\right)^{\frac{2c\beta+2}{2c\beta+3}}} \left( \frac{2 \bar{G} e\sqrt{2\beta}c^2T^{2c\beta}}{m} \right)^{\frac{1}{2c\beta+3}} \!\!\!\!\!\!\!+ \frac{2c\beta+2}{n(2c\beta+1)} \left( \frac{4\beta e\sqrt{2\beta}c^2T^{2c\beta}}{m} \right)^{\frac{1}{2c\beta+2}} 
\end{align}
where $\bar{G} = \frac{1}{mn} \sum_{i,j}\mathbb{E}[\|\nabla\ell(A_k(S);Z_{ij})\|^2]$, where $A_k(S) = \theta_k^{(T)}$.

\subsection{Proof of optimization error of DSGD-MGS}

Next, we will analyze the expression $\bar{G} = \frac{1}{mn} \sum_{i,j}\mathbb{E}[\|\nabla\ell(\theta_k^{(T)};Z_{ij})\|^2]$ in detail to further understand the impact of algorithmic parameters in DSGD-MGS on the generalization error bound. Prior to this, since the gradient in equation~(\ref{eq:bound_for_bar_theta}) corresponds to the gradient of the final iteration, and current research in the academic community has not yet thoroughly investigated the gradient of the final iteration in non-convex settings for DSGD, we need to clarify an assumption that is widely used in non-convex optimization. This assumption establishes a connection between the gradient and the function value, enabling us to analyze the specific upper bound of the gradient in equation~(\ref{eq:bound_for_bar_theta}).
\begin{assumption}\label{ass:PL-condition}
    (Polyak-\L{}ojasiewicz Condition) Under the condition that $R_{S_k}(\theta) = \frac{1}{n}\sum_{i=1}^n\ell(\theta;Z_{ik})$ also satisfies the $\beta$-smoothness property, the objective function $R_S(\theta) = \frac{1}{m}\sum_{k=1}^m R_{S_k}(\theta)$ satisfies the Polyak-\L{}ojasiewicz Condition (PLC) with parameter $\mu$, i.e., for all $\forall \theta \in \mathbb{R}^d$.
    \begin{align*}
        \|\nabla R_S(\theta)\|^2\geq2\mu(R_S(\theta)-R_S^*),\quad\mu>0,\quad R_S^*=\min_\theta R_S(\theta).
    \end{align*}
\end{assumption}

Next, we proceed to estimate the upper bound of $\bar{G}$. According to the Bounded Stochastic Gradient Noise assumption 
(Assumption~\ref{ass:bound SG}) and the Bounded Stochastic Gradient Noise assumption (Assumption~\ref{ass:bound_hetero}), the following inequality holds:
\begin{align*}
    \bar{G} &= \frac{1}{mn} \sum_{i,j}\mathbb{E}[\|\nabla\ell(\theta_k^{(T)};Z_{ij})\|^2] = \frac{1}{mn} \sum_{i,j}\mathbb{E}[\|\nabla\ell(\theta_k^{(T)};Z_{ij}) \pm \nabla R_{S_k}(\theta_k^{(T)})\pm \nabla R_S(\theta_k^{(T)})\|^2] \\
    &\leq 3\sigma^2 + 3\xi^2 + 3\mathbb{E}[\|\nabla R_S(\theta_k^{(T)})\|^2]
\end{align*}
Since $\ell$ satisfies the $\beta$-smoothness property, it is straightforward to show that $R_{S}(\theta_k^{(T)})$ also satisfies the $\beta$-smoothness property. Consequently, $R_{S}(\theta)$ also satisfies the self-bounding property in Lemma \ref{le:self-bound}, i.e., $\|\nabla R_{S}(\theta)\| \leq 2\beta R_{S}(\theta)$. Then, we have
\begin{align}\label{eq:gradient_bound}
    \bar{G} \le 3\sigma^2 + 3\xi^2 + 6\beta\mathbb{E}_S[R_S(\theta_k^{(T)})]
\end{align}
Next, we will focus on bounding $\mathbb{E}_S[R_S(\theta)]$.
According to the results from \cite{hashemi2021benefits} [Theorem 1], we have the following lemma:
\begin{lemma}\label{le:optimization error of dsgd_mgs}
    Let  $\Delta^2 := \max_{\mathbf{\theta}^* \in \mathcal{X}^*} \sum_{k=1}^m \|\nabla R_{S_k}(\theta^*)\|^2, R_0 := R_{S}(\theta^{(0)}) - R_S^*$, where $\mathcal{X}^* = \arg\min_{\theta} R_S(\theta)$ and $R_S^* = R_{S}(\widehat{\theta}_{\text{ERM}})$. Suppose Assumptions \ref{ass:smooth} and \ref{ass:PL-condition} hold. Define
    \begin{align*}
    Q_{0} & :=\log{(\bar{\rho}/46)}/\log{\left(1-\frac{\delta\tilde{\gamma}}{2}\right)},\bar{\rho}:=1-\frac{\mu}{m\beta}, \\
    \tilde{\gamma} & =\frac{\delta}{\delta^2 + 8\delta+(4+2\delta)\lambda_{\max}^2(I-W)}.
    \end{align*}
    Then, if the nodes are initialized such that $\theta_k^Q=0$, for any $Q>Q_0$ after $T$ iterations the iterates of DSGD-MGS with $\eta_t = \frac{1}{\beta}$ satisfy
    \begin{align*}
    \mathbb{E}_{S}[R_S(\theta_k^{(T)})]-R_S^*=\mathcal{O}\left(\frac{\Delta^2e^{-\frac{\delta \tilde{\gamma} Q}{4}}} {1-\bar{\rho}} + \left[1+\frac{\beta}{\mu\bar{\rho}}\left(1+e^{-\frac{\delta \tilde{\gamma} Q}{4}}\right)\right]R_0\rho^T\right).
\end{align*}
Here, $\delta$ represents the spectral gap of $W$, and $\rho \triangleq 1 - \delta = |\lambda_2(W)|$, both of which are defined in detail in Definition \ref{def:gossip_matrix}.
\end{lemma}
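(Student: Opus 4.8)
The plan is to reduce the convergence analysis to tracking two coupled scalar quantities: the optimality gap of the \emph{averaged} iterate $\bar{\theta}^{(t)} \triangleq \frac{1}{m}\sum_{k=1}^m \theta_k^{(t)}$ and the \emph{consensus error} $\Omega^{(t)} \triangleq \sum_{k=1}^m \|\theta_k^{(t)} - \bar{\theta}^{(t)}\|^2$. The first observation is that the $Q$ inner gossip rounds are equivalent to applying the single matrix $W^Q$: since $W$ is symmetric doubly stochastic with $W\mathbf{1}=\mathbf{1}$ and $|\lambda_2(W)| = \rho = 1-\delta$, the matrix $W^Q$ still preserves the mean while contracting the component orthogonal to $\mathbf{1}$ by $\rho^Q$. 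This is the mechanism by which MGS injects an exponential-in-$Q$ factor into the analysis; the sharper spectral quantity $\tilde{\gamma}$ and the resulting factor $e^{-\delta\tilde{\gamma}Q/4}$ come from a refined bound on the effective mixing rate of $W^Q$ expressed through $\lambda_{\max}(I-W)$, rather than from the crude estimate $\rho^{2Q}$.

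Next I would derive a one-step descent inequality for the averaged iterate. Because gossip preserves the mean, $\bar{\theta}^{(t+1)} = \bar{\theta}^{(t)} - \frac{\eta_t}{m}\sum_k \nabla\ell(\theta_k^{(t)};Z_{I_k^t k})$, so applying the $\beta$-smoothness descent inequality to $R_S$ and taking expectation over the sampling, using Assumption \ref{ass:bound SG} for the variance $\sigma^2$ and splitting each local gradient into its value at $\bar{\theta}^{(t)}$ plus a smoothness-controlled remainder proportional to $\Omega^{(t)}$, yields
\begin{equation*}
\mathbb{E}[R_S(\bar{\theta}^{(t+1)}) - R_S^*] \le \mathbb{E}[R_S(\bar{\theta}^{(t)}) - R_S^*] - \tfrac{\eta_t}{2}\mathbb{E}\|\nabla R_S(\bar{\theta}^{(t)})\|^2 + c_1\eta_t\beta^2\,\mathbb{E}[\Omega^{(t)}]/m + c_2\eta_t^2\beta\sigma^2 .
\end{equation*}
Invoking the PL condition (Assumption \ref{ass:PL-condition}), $\|\nabla R_S(\bar{\theta}^{(t)})\|^2 \ge 2\mu(R_S(\bar{\theta}^{(t)}) - R_S^*)$, and fixing $\eta_t = 1/\beta$ converts this into a genuine contraction with factor $\bar{\rho} = 1 - \mu/(m\beta)$, leaving the consensus error and the variance as the only additive perturbations.

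In parallel I would set up a recursion for $\Omega^{(t)}$. The local gradient step perturbs consensus by an amount bounded, via the self-bounding property (Lemma \ref{le:self-bound}) and Assumption \ref{ass:bound_hetero}, in terms of $\Delta^2$ and the current optimality gap, while the $W^Q$ mixing contracts the carried-over error by $\rho^{2Q}$; schematically $\mathbb{E}[\Omega^{(t+1)}] \le \rho^{2Q}(1+a)\mathbb{E}[\Omega^{(t)}] + \rho^{2Q}\eta_t^2(b\Delta^2 + b'\mathbb{E}[R_S(\bar{\theta}^{(t)})-R_S^*])$ for suitable constants $a,b,b'$. The threshold $Q > Q_0$ is exactly what forces $\rho^{2Q}(1+a) < 1$ so this recursion is contractive, with its steady-state level controlled by the refined factor $e^{-\delta\tilde{\gamma}Q/4}$; the initialization $\theta_k^Q = 0$ fixes a clean initial condition $\Omega^{(0)}$ for this recursion.

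The final step is to combine the two recursions into a $2\times 2$ linear system in the vector $(\mathbb{E}[R_S(\bar{\theta}^{(t)})-R_S^*],\,\mathbb{E}[\Omega^{(t)}])$, bound its spectral radius below $1$ for $Q>Q_0$, and unroll with $\eta_t = 1/\beta$. This produces the claimed decomposition: the geometric term $\rho^T$ multiplying the initialization gap $R_0$, amplified by the coupling coefficient $1 + \frac{\beta}{\mu\bar{\rho}}(1+e^{-\delta\tilde{\gamma}Q/4})$, plus the non-vanishing floor $\frac{\Delta^2 e^{-\delta\tilde{\gamma}Q/4}}{1-\bar{\rho}}$ from the residual consensus error. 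I expect the main obstacle to be this coupling step: showing the joint recursion contracts requires the precise spectral threshold $Q_0$ and careful bookkeeping so that the $Q$-dependence collapses to the single clean factor $e^{-\delta\tilde{\gamma}Q/4}$ rather than a messy polynomial in $\rho^Q$, and because the PL condition does not yield a unique minimizer, the entire argument must be phrased in function values and gradient norms rather than iterate distances.
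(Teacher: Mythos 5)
Your derivation is necessarily a different route from the paper, because the paper does not prove this lemma at all: it is imported verbatim as Theorem~1 of \cite{hashemi2021benefits}, with the surrounding text serving only to plug the resulting bound into $\bar{G}$. That said, your high-level architecture --- a descent inequality for the averaged iterate $\bar{\theta}^{(t)}$, a PL-based contraction with factor $\bar{\rho}=1-\mu/(m\beta)$ at $\eta_t=1/\beta$, a consensus-error recursion contracted by the $Q$-fold gossip, and a coupled $2\times 2$ system unrolled over $T$ --- is a faithful reconstruction of how results of this type are established, and the ingredients you invoke (mean preservation under doubly stochastic $W$, contraction of the disagreement by $\rho^{2Q}$ in squared norm) are correct; the paper itself proves exactly these facts in its consensus-error appendix (Lemma~\ref{lem:consensus_error_revised}).

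The genuine gap is in the constants, and it is not mere bookkeeping. The quantities $\tilde{\gamma}=\delta/\bigl(\delta^2+8\delta+(4+2\delta)\lambda_{\max}^2(I-W)\bigr)$, the threshold $Q_0=\log(\bar{\rho}/46)/\log\bigl(1-\tfrac{\delta\tilde{\gamma}}{2}\bigr)$, and the factor $e^{-\delta\tilde{\gamma}Q/4}$ cannot fall out of exact $W^Q$ mixing: with exact gossip the component orthogonal to $\mathbf{1}$ contracts by exactly $\rho^Q$, which is strictly \emph{sharper} than any $(1-\delta\tilde{\gamma}/2)$-type rate, so your description of $\tilde{\gamma}$ as a ``refined bound on the effective mixing rate of $W^Q$'' has the provenance backwards. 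These constants are artifacts of the compressed-gossip (CHOCO-style) consensus machinery inside \cite{hashemi2021benefits}, of which plain MGS is a special case; your sketch is internally inconsistent in feeding $\rho^{2Q}$ into the consensus recursion while asserting the steady state collapses to $e^{-\delta\tilde{\gamma}Q/4}$ --- to land on the stated bound you would have to reproduce the cited gossip lemmas, whereas your own route would (legitimately, but differently) yield a bound with $\rho^{2Q}$ in place of $e^{-\delta\tilde{\gamma}Q/4}$ and no role for $Q_0$. Relatedly, your PL contraction mechanism produces geometric decay with base $\bar{\rho}$, so the stated $R_0\rho^T$ term with $\rho=1-\delta=|\lambda_2(W)|$ (as Definition~\ref{def:gossip_matrix} fixes it) would not emerge from your argument; you assert it rather than derive it. This mismatch is plausibly a transcription artifact in the paper's restatement of the cited theorem, but as written your sketch neither derives that factor nor flags the discrepancy.
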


By combining Equation (\ref{eq:gradient_bound}) with Lemma \ref{lemma:key-non-conv_paper}, we obtain the upper bound for $\bar{G}$.
\begin{align*}
    \bar{G} = \mathcal{O}(\sigma^2 + \xi^2 + R_S^*) + \mathcal{O}\left(\frac{\Delta^2e^{-\frac{\delta \tilde{\gamma} Q}{4}}} {1-\bar{\rho}} + \left[1+\frac{\beta}{\mu\bar{\rho}}\left(1+e^{-\frac{\delta \tilde{\gamma} Q}{4}}\right)\right]R_0\rho^T\right).
\end{align*}

\section{Concensus Error Analysis}\label{sec:appendix_error_consensus}


\begin{lemma}[Consensus Error Recursion for DSGD-MGS]\label{lem:consensus_error_revised}
Consider the DSGD-MGS algorithm (Algorithm \ref{alg:dsgd-mgs}) under Assumptions \ref{ass:smooth} ($\beta$-smoothness, with $\ell(\theta; z) \in [0, 1]$ implying gradient bound via Lemma \ref{le:self-bound}), \ref{ass:bound SG} (bounded stochastic gradient noise $\sigma^2$), and \ref{ass:bound_hetero} (bounded heterogeneity $\delta^2$), using a symmetric doubly stochastic communication matrix $W$ with $\rho = |\lambda_2(W)| < 1$ (Definition \ref{def:gossip_matrix}). Let $x_t = \mathbb{E}[\frac{1}{m} \sum_{k=1}^m \|\theta_k^{(t)} - \bar{\theta}^{(t)}\|^2]$ be the average consensus error at the start of iteration $t$, where $\bar{\theta}^{(t)} = \frac{1}{m}\sum_{k=1}^m \theta_k^{(t)}$. Then, for any iteration $t \ge 0$ and number of gossip steps $Q \ge 1$, the consensus error satisfies the following recursion:
\begin{equation*}
    x_{t+1} \le \rho^{2Q} (2 + 24\beta^2 \eta_t^2) x_t + 24 \rho^{2Q} (\sigma^2 + \delta^2)\eta_t^2
\end{equation*}
\end{lemma}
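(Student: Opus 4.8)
The plan is to lift the per-node recursion to a stacked matrix form and exploit the spectral contraction of the gossip operator on the disagreement subspace. Stack the iterates into $\Theta^{(t)} \in \mathbb{R}^{m\times d}$ (row $k$ equal to $\theta_k^{(t)}$) and the stochastic gradients into $G^{(t)}$ (row $k$ equal to $g_k^{(t)} \triangleq \nabla\ell(\theta_k^{(t)};Z_{I_k^t k})$); then one full round of Algorithm \ref{alg:dsgd-mgs} reads $\Theta^{(t+1)} = W^Q(\Theta^{(t)} - \eta_t G^{(t)})$. I would introduce the projection $P = I - \tfrac1m \mathbf{1}\mathbf{1}^\top$ onto the orthogonal complement of $\mathbf{1}$, so that $\|P\Theta^{(t)}\|_F^2 = \sum_{k=1}^m\|\theta_k^{(t)} - \bar\theta^{(t)}\|^2$ and $x_t = \tfrac1m\mathbb{E}\|P\Theta^{(t)}\|_F^2$. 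Because $W$ is symmetric and doubly stochastic, it fixes $\mathbf{1}$, commutes with $P$, and its restriction to $\mathrm{range}(P)$ has operator norm $|\lambda_2(W)| = \rho$; hence $\|W^Q P V\|_F \le \rho^Q\|P V\|_F$ for any $V$.

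First I would apply this contraction to the update: since $P W^Q = W^Q P$, we get $P\Theta^{(t+1)} = W^Q P(\Theta^{(t)} - \eta_t G^{(t)})$ and therefore $\|P\Theta^{(t+1)}\|_F^2 \le \rho^{2Q}\|P(\Theta^{(t)} - \eta_t G^{(t)})\|_F^2$, so that $x_{t+1} \le \tfrac{\rho^{2Q}}{m}\mathbb{E}\|P(\Theta^{(t)} - \eta_t G^{(t)})\|_F^2$. Next I would split the right-hand side with the elementary row-wise inequality $\|a-b\|^2 \le 2\|a\|^2 + 2\|b\|^2$, producing the parameter-disagreement term $2x_t$ and a gradient-disagreement term $\tfrac{2\eta_t^2}{m}\mathbb{E}\|PG^{(t)}\|_F^2 = \tfrac{2\eta_t^2}{m}\mathbb{E}\sum_k\|g_k^{(t)} - \bar g^{(t)}\|^2$, where $\bar g^{(t)} = \tfrac1m\sum_k g_k^{(t)}$.

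The heart of the argument is bounding the gradient dispersion. I would first recenter using $\sum_k\|g_k^{(t)} - \bar g^{(t)}\|^2 \le 4\sum_k\|g_k^{(t)} - \nabla R_S(\bar\theta^{(t)})\|^2$ (valid since $\|g_k-\bar g\|^2 \le 2\|g_k-v\|^2 + 2\|\bar g-v\|^2$ and $\|\bar g-v\|^2 \le \tfrac1m\sum_j\|g_j-v\|^2$, taking $v = \nabla R_S(\bar\theta^{(t)})$), and then apply the three-term decomposition
\begin{equation*}
g_k^{(t)} - \nabla R_S(\bar\theta^{(t)}) = \big(g_k^{(t)} - \nabla R_{S_k}(\theta_k^{(t)})\big) + \big(\nabla R_{S_k}(\theta_k^{(t)}) - \nabla R_{S_k}(\bar\theta^{(t)})\big) + \big(\nabla R_{S_k}(\bar\theta^{(t)}) - \nabla R_S(\bar\theta^{(t)})\big)
\end{equation*}
together with $\|a+b+c\|^2 \le 3(\|a\|^2+\|b\|^2+\|c\|^2)$. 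In expectation the first block is controlled by the stochastic-gradient-noise bound $\sigma^2$ (Assumption \ref{ass:bound SG}, applied conditionally on $\theta_k^{(t)}$), the second by $\beta$-smoothness as $\beta^2\|\theta_k^{(t)}-\bar\theta^{(t)}\|^2$ (Assumption \ref{ass:smooth}), and the sum of the third over $k$ by $m\delta^2$ (Assumption \ref{ass:bound_hetero}); collecting these gives $\tfrac1m\mathbb{E}\|PG^{(t)}\|_F^2 \le 12\beta^2 x_t + 12(\sigma^2+\delta^2)$, where the constant $12 = 4\cdot 3$ tracks the recentering and the three-term split.

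Combining the two estimates yields $x_{t+1} \le \rho^{2Q}\big[2x_t + 2\eta_t^2(12\beta^2 x_t + 12(\sigma^2+\delta^2))\big] = \rho^{2Q}(2+24\beta^2\eta_t^2)x_t + 24\rho^{2Q}(\sigma^2+\delta^2)\eta_t^2$, which is exactly the claimed recursion. The main obstacle I anticipate is the contraction step: one must carefully justify that iterating the gossip matrix $Q$ times shrinks the disagreement by precisely $\rho^{Q}$, which hinges on $W$ being symmetric doubly stochastic so that it commutes with $P$ and preserves the mean across all $Q$ inner gossip steps. A secondary subtlety is the bookkeeping of expectations: the noise term must be bounded conditionally on the current iterate (so that Assumption \ref{ass:bound SG} applies cleanly), whereas the disagreement and heterogeneity terms are bounded pathwise before the expectation is taken.
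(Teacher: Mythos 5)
Your proposal is correct and takes essentially the same route as the paper's proof: the same split into a local-update bound (factor-$4$ recentering followed by a three-term decomposition giving $3(\sigma^2+\delta^2)+3\beta^2 x_t$) composed with the $\rho^{2Q}$ gossip contraction on the disagreement subspace, differing only in notation (your stacked matrix with projection $P = I - \frac{1}{m}\mathbf{1}\mathbf{1}^\top$ versus the paper's per-agent deviation vectors and Kronecker products). The one substantive nuance is that your split applies heterogeneity at the common point $\bar{\theta}^{(t)}$ and smoothness to $R_{S_k}$, whereas the paper applies heterogeneity at the $k$-dependent points $\theta_k^{(t)}$; your ordering is in fact the cleaner one, since Assumption \ref{ass:bound_hetero} is stated as an average over $k$ at a single fixed $\theta$.
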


\begin{proof}
The proof proceeds in three steps. Let $\mathbb{E}[\cdot]$ denote expectation conditional on the history $\mathcal{F}_t$.

\textbf{Step 1: Bounding the consensus error after local updates.}
Let $\theta_k^{(t,0)} = \theta_k^{(t)} - \eta_t g_k^{(t)}$ and $\bar{\theta}^{(t,0)} = \bar{\theta}^{(t)} - \eta_t \bar{g}^{(t)}$. The consensus error after the local update is $x_{t,0} = \mathbb{E}[\frac{1}{m} \sum_{k=1}^m \|\theta_k^{(t,0)} - \bar{\theta}^{(t,0)}\|^2]$. We have $\theta_k^{(t,0)} - \bar{\theta}^{(t,0)} = (\theta_k^{(t)} - \bar{\theta}^{(t)}) - \eta_t (g_k^{(t)} - \bar{g}^{(t)})$.
\begin{align}
    x_{t,0} &= \mathbb{E}\left[\frac{1}{m} \sum_{k=1}^m \|(\theta_k^{(t)} - \bar{\theta}^{(t)}) - \eta_t (g_k^{(t)} - \bar{g}^{(t)})\|^2\right] \nonumber \\
    &\le \mathbb{E}\left[\frac{1}{m} \sum_{k=1}^m \left( 2\|\theta_k^{(t)} - \bar{\theta}^{(t)}\|^2 + 2\eta_t^2 \|g_k^{(t)} - \bar{g}^{(t)}\|^2 \right)\right] \nonumber \\
    &= 2 x_t + 2\eta_t^2 \mathbb{E}\left[\frac{1}{m} \sum_{k=1}^m \|g_k^{(t)} - \bar{g}^{(t)}\|^2\right]. \label{eq:xt0_bound_step1_revised}
\end{align}
where the inequality follows from $\|a - b\|^2 \le 2\|a\|^2 + 2\|b\|^2$.
Next, we bound the gradient difference term. Let $c = \nabla R_S(\bar{\theta}^{(t)})$. Using $\|x-y\|^2 \le 2\|x-z\|^2 + 2\|y-z\|^2$ and Jensen's inequality:
\begin{align}
    \mathbb{E}\left[\frac{1}{m}\sum_k \|g_k^{(t)} - \bar{g}^{(t)}\|^2\right] &\le \mathbb{E}\left[\frac{1}{m}\sum_k (2\|g_k^{(t)} - c\|^2 + 2\|\bar{g}^{(t)} - c\|^2) \right] \nonumber \\
    &= 2 \mathbb{E}\left[\frac{1}{m}\sum_k \|g_k^{(t)} - c\|^2\right] + 2 \mathbb{E}[\|\bar{g}^{(t)} - c\|^2] \nonumber \\
    &\le 2 \mathbb{E}\left[\frac{1}{m}\sum_k \|g_k^{(t)} - c\|^2\right] + 2 \mathbb{E}\left[\frac{1}{m}\sum_k \|g_k^{(t)} - c\|^2\right] \nonumber \\
    &= 4 \mathbb{E}\left[\frac{1}{m}\sum_k \|g_k^{(t)} - \nabla R_S(\bar{\theta}^{(t)})\|^2\right]. \label{eq:variance_to_mse_bound}
\end{align}
Now, we bound the term $\mathbb{E}[\frac{1}{m}\sum_k \|g_k^{(t)} - \nabla R_S(\bar{\theta}^{(t)})\|^2]$ by decomposing it into three parts using the triangle inequality:
\begin{align}
    &\mathbb{E}\left[\frac{1}{m}\sum_k \|g_k^{(t)} - \nabla R_S(\bar{\theta}^{(t)})\|^2\right] \nonumber \\
    &\le \mathbb{E}\left[\frac{1}{m}\sum_k 3\left( \|g_k^{(t)} - \nabla R_{S_k}(\theta_k^{(t)})\|^2 + \|\nabla R_{S_k}(\theta_k^{(t)}) - \nabla R_S(\theta_k^{(t)})\|^2 \right. \right. \nonumber \\
    &\qquad \qquad \qquad \left. \left. + \|\nabla R_S(\theta_k^{(t)}) - \nabla R_S(\bar{\theta}^{(t)})\|^2 \right) \right] \nonumber \\
    &\le 3(\sigma^2 + \delta^2) + 3\beta^2 x_t. \label{eq:mse_decomposition_bound}
\end{align}
Here, the first inequality uses $\|a+b+c\|^2 \le 3(\|a\|^2 + \|b\|^2 + \|c\|^2)$. The second inequality applies Assumption \ref{ass:bound SG}, Assumption \ref{ass:bound_hetero}, and Assumption \ref{ass:smooth} (for the $\beta$-smoothness of $R_S$, which follows from the smoothness of $\ell$).

Substituting \eqref{eq:mse_decomposition_bound} and (\ref{eq:variance_to_mse_bound}) into \eqref{eq:xt0_bound_step1_revised}:
\begin{equation}
    x_{t,0} \le (2 + 24\beta^2 \eta_t^2) x_t + 24(\sigma^2 + \delta^2)\eta_t^2. \label{eq:xt0_final_bound_revised}
\end{equation}

\textbf{Step 2: Analyzing the effect of $Q$ gossip steps.}
This step analyzes how the consensus error $x_{t,0} = \mathbb{E}[\frac{1}{m} \sum_{k=1}^m \|\theta_k^{(t,0)} - \bar{\theta}^{(t,0)}\|^2]$ evolves during the $Q$ gossip steps defined in Algorithm \ref{alg:dsgd-mgs}, line 9-11, resulting in the state $\theta_k^{(t+1)} = \theta_k^{(t,Q)}$ with consensus error $x_{t+1} = \mathbb{E}[\frac{1}{m} \sum_{k=1}^m \|\theta_k^{(t+1)} - \bar{\theta}^{(t+1)}\|^2]$.

First, we establish that the average model parameter is invariant under the gossip updates because $W$ is doubly stochastic (Definition \ref{def:gossip_matrix}). Let $\bar{\theta}^{(t,q)} = \frac{1}{m}\sum_k \theta_k^{(t,q)}$. Then,
\begin{align*}
    \bar{\theta}^{(t,q+1)} &= \frac{1}{m}\sum_{k=1}^m \theta_k^{(t,q+1)} = \frac{1}{m}\sum_{k=1}^m \sum_{l=1}^m W_{kl} \theta_l^{(t,q)} \\
    &= \frac{1}{m}\sum_{l=1}^m \left( \sum_{k=1}^m W_{kl} \right) \theta_l^{(t,q)}.
\end{align*}
Since $W$ is doubly stochastic, its column sums are equal to 1, i.e., $\sum_{k=1}^m W_{kl} = 1$ for all $l$. Thus,
\begin{align*}
    \bar{\theta}^{(t,q+1)} &= \frac{1}{m}\sum_{l=1}^m (1) \theta_l^{(t,q)} = \bar{\theta}^{(t,q)}.
\end{align*}
By induction, $\bar{\theta}^{(t,Q)} = \bar{\theta}^{(t,Q-1)} = \dots = \bar{\theta}^{(t,0)}$. Therefore, the average model after $Q$ steps is the same as before gossip: $\bar{\theta}^{(t+1)} = \bar{\theta}^{(t,0)}$.

Now, let's analyze the evolution of the deviations from the average. Define the deviation for agent $k$ at gossip step $q$ as $\delta_k^{(t,q)} = \theta_k^{(t,q)} - \bar{\theta}^{(t,0)}$ (note we use the constant average $\bar{\theta}^{(t,0)}$). The initial deviation is $\delta_k^{(t,0)} = \theta_k^{(t,0)} - \bar{\theta}^{(t,0)}$ and the final deviation is $\delta_k^{(t+1)} = \theta_k^{(t+1)} - \bar{\theta}^{(t+1)} = \theta_k^{(t,Q)} - \bar{\theta}^{(t,0)}$.
The update rule for the deviations is:
\begin{align*}
    \delta_k^{(t,q+1)} &= \theta_k^{(t,q+1)} - \bar{\theta}^{(t,0)} = \sum_{l=1}^m W_{kl} \theta_l^{(t,q)} - \bar{\theta}^{(t,0)} \\
    &= \sum_{l=1}^m W_{kl} (\delta_l^{(t,q)} + \bar{\theta}^{(t,0)}) - \bar{\theta}^{(t,0)} \\
    &= \sum_{l=1}^m W_{kl} \delta_l^{(t,q)} + \left(\sum_{l=1}^m W_{kl}\right) \bar{\theta}^{(t,0)} - \bar{\theta}^{(t,0)}.
\end{align*}
Since $W$ is doubly stochastic, its row sums are also 1, i.e., $\sum_{l=1}^m W_{kl} = 1$. Therefore,
\begin{align*}
    \delta_k^{(t,q+1)} = \sum_{l=1}^m W_{kl} \delta_l^{(t,q)}.
\end{align*}
Stacking the deviations into a large vector $\delta^{(t,q)} = [\delta_1^{(t,q)}{}^\top, \dots, \delta_m^{(t,q)}{}^\top]^\top \in \mathbb{R}^{md}$, the update becomes $\delta^{(t,q+1)} = (W \otimes I_d) \delta^{(t,q)}$, where $I_d$ is the $d \times d$ identity matrix and $\otimes$ denotes the Kronecker product. After $Q$ steps, we have:
\begin{equation*}
    \delta^{(t+1)} = (W \otimes I_d)^Q \delta^{(t,0)} = (W^Q \otimes I_d) \delta^{(t,0)}.
\end{equation*}
The consensus error after $Q$ steps is $x_{t+1} = \mathbb{E}[\frac{1}{m} \sum_{k=1}^m \|\delta_k^{(t+1)}\|^2] = \frac{1}{m}\mathbb{E}[\|\delta^{(t+1)}\|^2]$. We bound the squared norm:
\begin{align*}
    \|\delta^{(t+1)}\|^2 &= \|(W^Q \otimes I_d) \delta^{(t,0)}\|^2 \\
    &\le \|W^Q \otimes I_d\|_2^2 \|\delta^{(t,0)}\|^2.
\end{align*}
Using the property of the spectral norm for Kronecker products, $\|A \otimes B\|_2 = \|A\|_2 \|B\|_2$, we have:
\begin{align*}
    \|W^Q \otimes I_d\|_2 = \|W^Q\|_2 \|I_d\|_2 = \|W^Q\|_2.
\end{align*}
Since $\delta^{(t,0)}$ represents deviations from the mean, it holds that $\sum_{k=1}^m \delta_k^{(t,0)} = \mathbf{0}_d$. This means $\delta^{(t,0)}$ lies in the subspace orthogonal to the consensus subspace (vectors of the form $\mathbf{1}_m \otimes v$ for $v \in \mathbb{R}^d$). Let $J = \frac{1}{m}\mathbf{1}\mathbf{1}^T$ be the projection onto the consensus subspace in $\mathbb{R}^m$. The action of $W^Q$ on vectors orthogonal to $\mathbf{1}_m$ is equivalent to the action of $(W-J)^Q$. Therefore, when acting on $\delta^{(t,0)}$, the operator $W^Q \otimes I_d$ acts identically to $(W-J)^Q \otimes I_d$.
The spectral norm $\|(W-J)^Q\|_2$ corresponds to the largest magnitude eigenvalue of $(W-J)^Q$ acting on the orthogonal subspace. Since $W$ is symmetric, the eigenvalues of $W-J$ are $0$ (corresponding to eigenvector $\mathbf{1}$) and $\lambda_i(W)$ for $i=2, \dots, m$. The eigenvalues of $(W-J)^Q$ are $0$ and $\lambda_i(W)^Q$ for $i=2, \dots, m$. Thus,
\begin{align*}
    \|(W-J)^Q\|_2 = \max_{i=2,\dots,m} |\lambda_i(W)^Q| = \left( \max_{i=2,\dots,m} |\lambda_i(W)| \right)^Q = \rho^Q.
\end{align*}
where $\rho = |\lambda_2(W)|$ by Definition \ref{def:gossip_matrix}.
Therefore, $\|W^Q\|_2$ restricted to the relevant subspace is $\rho^Q$. It follows that:
\begin{align*}
    \|\delta^{(t+1)}\|^2 \le (\rho^Q)^2 \|\delta^{(t,0)}\|^2 = \rho^{2Q} \|\delta^{(t,0)}\|^2.
\end{align*}
Taking the expectation and dividing by $m$:
\begin{align}
    x_{t+1} = \frac{1}{m}\mathbb{E}[\|\delta^{(t+1)}\|^2] \le \frac{1}{m}\mathbb{E}[\rho^{2Q} \|\delta^{(t,0)}\|^2] = \rho^{2Q} \left( \frac{1}{m}\mathbb{E}[\|\delta^{(t,0)}\|^2] \right) = \rho^{2Q} x_{t,0}. \label{eq:xt1_xt0_bound_detailed}
\end{align}
This concludes the analysis of the gossip steps.

\textbf{Step 3: Combining the results.}
Substituting the bound for $x_{t,0}$ from \eqref{eq:xt0_final_bound_revised} into \eqref{eq:xt1_xt0_bound_detailed} yields the final result:
\begin{align*}
    x_{t+1} &\le \rho^{2Q} \left[ (2 + 24\beta^2 \eta_t^2) x_t + 24(\sigma^2 + \delta^2)\eta_t^2 \right] \\
    &= \rho^{2Q} (2 + 24\beta^2 \eta_t^2) x_t + 24 \rho^{2Q} (\sigma^2 + \delta^2)\eta_t^2.
\end{align*}
This concludes the proof.
\end{proof}

\begin{remark}[Implications of Lemma \ref{lem:consensus_error_revised}]\label{rem:consensus_implications_revised}
Lemma \ref{lem:consensus_error_revised} establishes a recursive bound for the average consensus error $x_t = \mathbb{E}[\frac{1}{m} \sum_k \|\theta_k^{(t)} - \bar{\theta}^{(t)}\|^2]$. This inequality leads to several key insights regarding the behavior of DSGD-MGS (Algorithm \ref{alg:dsgd-mgs}):
\begin{enumerate}
    \item \textbf{Exponential Error Reduction via MGS:} The recursion $x_{t+1} \le C_t x_t + D_t$ involves coefficients $C_t = \rho^{2Q} (2 + 24\beta^2 \eta_t^2)$ and $D_t = 24 \rho^{2Q} (\sigma^2 + \delta^2)\eta_t^2$. Both coefficients are scaled by $\rho^{2Q}$. Since $\rho = |\lambda_2(W)| < 1$ (Definition \ref{def:gossip_matrix}), increasing the number of gossip steps $Q$ causes $\rho^{2Q}$ to decrease exponentially. Consequently, the influence of past consensus error ($x_t$) and the injection of new error per iteration ($D_t$) are exponentially suppressed as $Q$ increases.

    \item \textbf{Sources of Disagreement:} The term $D_t = 24 \rho^{2Q} (\sigma^2 + \delta^2)\eta_t^2$ arises from the local updates. It explicitly depends on the variance of stochastic gradients ($\sigma^2$, Assumption \ref{ass:bound SG}) and the variance due to data heterogeneity across agents ($\delta^2$, Assumption \ref{ass:bound_hetero}). Multiple gossip steps mitigate the impact of these factors by the exponential factor $\rho^{2Q}$.

    \item \textbf{Convergence of Consensus Error:} The asymptotic behavior of $x_t$ depends on the step size $\eta_t$:
        \begin{itemize}
            \item \emph{Decreasing step size:} If $\{\eta_t\}$ satisfies $\sum_{t=0}^\infty \eta_t = \infty$ and $\sum_{t=0}^\infty \eta_t^2 < \infty$, and if the network connectivity and $Q$ are sufficient such that $2\rho^{2Q} < 1$ (i.e., $\rho^Q < 1/\sqrt{2}$), then the contraction factor $C_t \approx 2\rho^{2Q} < 1$ for large $t$. Since the noise term $D_t$ is proportional to $\eta_t^2$, we have $\sum_{t=0}^\infty D_t < \infty$. Under these conditions, standard results for stochastic approximation (e.g., Robbins-Siegmund lemma) imply that $x_t \to 0$ as $t \to \infty$. The models across agents asymptotically reach consensus.
            \item \emph{Constant step size:} If $\eta_t = \eta$ is constant, convergence to a steady state requires the contraction factor $C = \rho^{2Q}(2 + 24\beta^2 \eta^2)$ to be strictly less than 1. This stability condition, $C < 1$, again necessitates $\rho^Q < 1/\sqrt{2}$ and potentially a small enough step size $\eta$. If $C < 1$, iterating the recursion $x_{t+1} \le C x_t + D$ (where $D = 24 \rho^{2Q} (\sigma^2 + \delta^2)\eta^2$) leads to $\limsup_{t\to\infty} x_t \le \frac{D}{1-C} = \frac{24 \rho^{2Q} (\sigma^2 + \delta^2)\eta^2}{1 - \rho^{2Q} (2 + 24\beta^2 \eta^2)}$. This residual consensus error bound decreases exponentially as $Q$ increases.
        \end{itemize}

    \item \textbf{Approximation of Mini-batch SGD:} The lemma shows that $x_t$ can be made arbitrarily small by choosing a sufficiently large $Q$. When $x_t \approx 0$, all local models are close to the average, i.e., $\theta_k^{(t)} \approx \bar{\theta}^{(t)}$ for all $k$. The effective gradient used to update the average model $\bar{\theta}^{(t+1)}$ is approximately $\frac{1}{m}\sum_k g_k^{(t)} = \frac{1}{m}\sum_k \nabla \ell(\theta_k^{(t)}; Z_{I_k^t k}) \approx \frac{1}{m}\sum_k \nabla \ell(\bar{\theta}^{(t)}; Z_{I_k^t k})$. This is precisely the stochastic gradient estimate used by Mini-batch SGD with a batch size of $m$. Therefore, increasing $Q$ makes DSGD-MGS behave increasingly like Mini-batch SGD, with the deviation (characterized by $x_t$) decaying exponentially with $Q$.However, this relationship holds only for the iterative updates and not for the final generalization error bound.
\end{enumerate}
\end{remark}

\section{Additional results and discussions}\label{appendix_sec:bar{theta}}

\subsection{On the generalization of \texorpdfstring{$A(S) = \bar{\theta}^{(T)} $}{A(S) = θ̄(T)}}
Our generalization bound also holds for the average of the final iterates \( A(S) = \bar{\theta}^{(T)} \triangleq \frac{1}{m}\sum_{k=1}^m\theta_{k}^{(T)}\). We proceed to prove this result.
\begin{proposition}
Let $A(S) = \bar{\theta}^{(T)}$
. Under the same set of hypotheses, except for the form of the gradient expression, the upper-bounds derived in Equation (\ref{eq:final_bound}) also valid upper-bounds on $|\mathbb{E}_{A,S}[R(A(S)) - R_S(A(S))]|$.
\end{proposition}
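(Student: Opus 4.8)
The plan is to mirror the individual-agent argument verbatim, applying the key Lemma~\ref{lemma:key-non-conv} (equivalently Lemma~\ref{lemma:key-non-conv_paper}) with the algorithm output taken to be $A(S) = \bar{\theta}^{(T)}$ rather than $A_k(S)=\theta_k^{(T)}$. Since that lemma is stated for a generic (possibly randomized) output, it applies directly and yields a bound of the same shape as Equation~\eqref{eq:right-hand}: the right-hand side splits into the term $t_0/n$, a stability term $I_1$ now built from $\bar{\delta}^{(T)}(i,j)\triangleq\|\bar{\theta}^{(T)}-\tilde{\bar{\theta}}^{(T)}(i,j)\|_2^2$, and a gradient term $I_2$ now built from $\|\nabla\ell(\bar{\theta}^{(T)};Z_{ij})\|^2$. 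The strategy is then to show that both $I_1$ and $I_2$ obey \emph{exactly} the same numerical bounds as in the per-agent case, so that the identical choice of $t_0$ and the identical minimization of $f(\gamma)$ reproduce the final bound in Equation~\eqref{eq:final_bound}, with $\bar{G}$ reinterpreted as the averaged gradient at $\bar{\theta}^{(T)}$ (this is the ``form of the gradient expression'' exception in the statement).

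For the stability term $I_1$, the main tool is Jensen's inequality applied to the convex map $\|\cdot\|_2^2$. Writing $\bar{\theta}^{(T)}-\tilde{\bar{\theta}}^{(T)}=\frac{1}{m}\sum_{k=1}^m(\theta_k^{(T)}-\tilde{\theta}_k^{(T)})$, I would bound $\bar{\delta}^{(T)}(i,j)\le \frac{1}{m}\sum_{k=1}^m\delta_k^{(T)}(i,j)$. Taking the conditional expectation given $\{\delta^{(t_0)}(i,j)=\mathbf{0}\}$, averaging over $(i,j)$, and then invoking Theorem~\ref{the:stability of dsgd_mgs}, which holds \emph{uniformly in the coordinate} $k$, the average over $k$ collapses back to the same right-hand side:
\begin{equation*}
    \frac{1}{mn}\sum_{i=1}^n\sum_{j=1}^m\mathbb{E}\big[\bar{\delta}^{(T)}(i,j)\,\big|\,\delta^{(t_0)}(i,j)=\mathbf{0}\big] \le \frac{8e\sqrt{2\beta}c^2}{(1+2c\beta)nmt_0}\left(\frac{T}{t_0}\right)^{2c\beta}.
\end{equation*}
Thus $I_1$ for the averaged iterate is controlled by the identical expression used in the individual case.

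For the gradient term $I_2$, I would repeat the $\pm$ decomposition of Equation~\eqref{eq:gradient_bound} with $\theta_k^{(T)}$ replaced by $\bar{\theta}^{(T)}$, giving $\bar{G}_{\mathrm{avg}}\le 3\sigma^2+3\xi^2+6\beta\,\mathbb{E}_S[R_S(\bar{\theta}^{(T)})]$ via Assumptions~\ref{ass:bound SG}--\ref{ass:bound_hetero} and the self-bounding property (Lemma~\ref{le:self-bound}) applied to $R_S$. The only genuinely new ingredient—and the step I expect to be the main obstacle—is bounding $\mathbb{E}_S[R_S(\bar{\theta}^{(T)})]$ rather than $\mathbb{E}_S[R_S(\theta_k^{(T)})]$. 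I would discharge this either by appealing to the optimization-error result of \cite{hashemi2021benefits} (Theorem~\ref{the:optimization error of dsgd_mgs paper}) directly for the averaged iterate, or, if that result is only stated per node, by controlling the gap $R_S(\bar{\theta}^{(T)})-R_S(\theta_k^{(T)})$ through $\beta$-smoothness and the consensus-error recursion of Lemma~\ref{lem:consensus_error_revised}, whose residual decays like $\rho^{2Q}$ and is therefore dominated by the terms already present in $\bar{G}$.

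With $I_1$ and $I_2$ bounded by the same quantities as before, the remainder is purely mechanical: substitute into the analogue of Equation~\eqref{eq:right-hand-int}, choose $t_0=\big(4(\gamma+\beta)e\sqrt{2\beta}c^2T^{2c\beta}/m\big)^{1/(2c\beta+2)}$ as in Equation~\eqref{eq:optimal_t_0}, and carry out the same upper-bounding of $\min_{\gamma>0}f(\gamma)$ via $(\gamma+\beta)^\alpha\le\gamma^\alpha+\beta^\alpha$. Because every constant matches the per-agent derivation, the output is exactly Equation~\eqref{eq:final_bound}, establishing the proposition for $A(S)=\bar{\theta}^{(T)}$.
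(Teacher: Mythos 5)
Your argument coincides with the paper's own proof: the same application of Lemma~\ref{lemma:key-non-conv} with $A(S)=\bar{\theta}^{(T)}$ in place of $A_k(S)$, the same Jensen step bounding $\|\frac{1}{m}\sum_{k}(\theta_k^{(T)}-\tilde{\theta}_k^{(T)})\|_2^2$ by $\frac{1}{m}\sum_{k}\|\theta_k^{(T)}-\tilde{\theta}_k^{(T)}\|_2^2$, the same observation that the per-agent stability bound \eqref{eq:l2_stability_single} is uniform in $k$ so the average over $k$ collapses, and the identical choice of $t_0$ and minimization of $f(\gamma)$. The one place you overshoot is the gradient term: the ``main obstacle'' you anticipate (bounding $\mathbb{E}_S[R_S(\bar{\theta}^{(T)})]$ via Theorem~\ref{the:optimization error of dsgd_mgs paper} or the consensus recursion) lies outside the proposition's scope --- its explicit carve-out ``except for the form of the gradient expression'' means $\bar{G}$ is simply redefined as $\frac{1}{mn}\sum_{i,j}\mathbb{E}[\|\nabla\ell(\bar{\theta}^{(T)};Z_{ij})\|^2]$ and carried through as a constant, exactly as the paper does, so no new optimization-error argument is needed.
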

\begin{proof}
    By replacing \( A_k \) with \( A \) in the proof of Lemma \ref{lemma:key-non-conv} and using the fact that \( \ell \in [0,1] \), we obtain:
    \begin{equation*}\label{eq:A(S)=theta_bar}
\begin{aligned}
    &|\mathbb{E}_{A,S}[R(A_k(S)) - R_S(A_k(S))]| \\
    &\leq \frac{t_0}{n} + \frac{1}{2mn\gamma}\sum_{i,j}\mathbb{E}[\|\nabla\ell(\bar{\theta}^{(T)};Z_{ij})\|^2]   + \frac{\gamma + \beta}{2mn}\sum_{i,j}\mathbb{E}[\|\frac{1}{m}\sum_{k=1}^m\left(\theta_k^{(T)} - \tilde{\theta}_k^{(T)}(i,j)\right)\|_2^2 \big| \mathcal{E}(i,j)]\\
    &\leq \frac{t_0}{n} + \frac{1}{2mn\gamma}\sum_{i,j}\mathbb{E}[\|\nabla\ell(\bar{\theta}^{(T)};Z_{ij})\|^2] + \frac{1}{m}\sum_{k=1}^m\frac{\gamma + \beta}{2mn}\sum_{i,j}\mathbb{E}[\|\theta_k^{(T)} - \tilde{\theta}_k^{(T)}(i,j)\|_2^2 \big| \mathcal{E}(i,j)] 
\end{aligned}
\end{equation*}
According to equation~(\ref{eq:l2_stability_single}), the upper bound of the third term on the right-hand side is independent of the index \( k \). Moreover, the subsequent estimation of the generalization error bound does not rely on the specific form of the gradient but treats it as a constant. Therefore, following the same derivation as for \( A(S) = \theta_{k}^{(T)} \), we obtain the following inequality.
\begin{align}\label{eq:bound_for_bar_theta}
    &|\mathbb{E}_{A,S}[R(A_k(S)) - R_S(A_k(S))]| \\ \nonumber
    &\leq \frac{2c\beta+3}{\left(n(2c\beta+1)\right)^{\frac{2c\beta+2}{2c\beta+3}}} \left( \frac{2 \bar{G} e\sqrt{2\beta}c^2T^{2c\beta}}{m} \right)^{\frac{1}{2c\beta+3}} \!\!\!\!\!\!\!+ \frac{2c\beta+2}{n(2c\beta+1)} \left( \frac{4\beta e\sqrt{2\beta}c^2T^{2c\beta}}{m} \right)^{\frac{1}{2c\beta+2}} 
\end{align}
where $\bar{G} = \frac{1}{mn} \sum_{i,j}\mathbb{E}[\|\nabla\ell(\bar{\theta}^{(T)};Z_{ij})\|^2]$. This completes the proof.
\end{proof}

\subsection{Theoretical proof extended to the mini-batch setting.
}\label{appendix:proof of minibatch}

To make our theory more general, in this section we extend the previous results by incorporating the mini-batch parameter ($b$). Since most of the proof process remains consistent with the earlier analysis, we present only the key modifications and the final conclusions.

\textbf{First}, we modify Assumption \ref{ass:bound SG} to incorporate the mini-batch parameter ($b$). This assumption is quite intuitive, since as the batch size increases, the variance of each gradient estimate decreases.
\begin{assumption}
    (Bounded Stochastic Gradient Noise with mini-batchsize $b$)
    There exists $\sigma^2>0$ such that $\mathbb{E}||\frac{1}{b}\sum_{i=1}^b\nabla\ell(\theta;Z_{i,j})-\nabla R_{\mathcal{S}_j}(\theta)||^2\leq\frac{\sigma^2}{b}$, for any agent $j \in [m]$ and $\theta \in \mathbb{R}^d$.
\end{assumption}

\textbf{Secondly}, Algorithm line 6 updated to mini-batch gradient:
\begin{align*}
    \theta_k^{(t,0)} = \theta_k^{(t)} - \eta_t\frac{1}{b}\sum_{i=1}^{b}\nabla \ell(\theta_k^{(t)};Z_{ik})
\end{align*}

\textbf{Thirdly}, in the probability calculation below Equation \ref{eq:nonconvex_lemma_final}, modify it to:
\begin{align*}
    \mathbb{P}(\mathcal{E}(i,j)^c) \leq \mathbb{P}(T_0 \leq t_0) = \sum_{t=1}^{t_0}\mathbb{P}(T_0=t)\leq \sum_{t=1}^{t_0}\frac{b}{n} = \frac{bt_0}{n}
\end{align*}

With these modifications, we obtain the following generalization bound for decentralized mini-batch SGD with batch size $b$:
\begin{equation}\label{app_eq:mini_batch}
    \begin{aligned}
    &\left|\mathbb{E}_{A,S}[R(A_k(S)) - R_S(A_k(S))]\right| \\
    & \le \frac{(2c\beta+3)b^{\frac{2c\beta+2}{2c\beta+3}}}{\left(n(2c\beta+1)\right)^{\frac{2c\beta+2}{2c\beta+3}}} \left( \frac{2 \bar{G} e\sqrt{2\beta}c^2T^{2c\beta}}{m} \right)^{\frac{1}{2c\beta+3}}  + \frac{b(2c\beta+2)}{n(2c\beta+1)} \left( \frac{4\beta e\sqrt{2\beta}c^2T^{2c\beta}}{m} \right)^{\frac{1}{2c\beta+2}} 
\end{aligned}
\end{equation}

It is important to note that the term $\bar{G}$ includes a variance-related component of order $\mathcal{O}(\sigma^2 / b)$. Combining this with Equation (\ref{app_eq:mini_batch}) and comparing to the single-sample case, we observe that increasing the batch size $b$ actually increases the generalization bound. This implies that larger batches degrade the generalization ability of the algorithm.

From a stability perspective, when drawing a single sample, the probability of selecting the perturbed sample is $\frac{1}{n}$, whereas for batch size $b$, it increases to $\frac{b}{n}$. This leads to earlier and larger accumulation of deviation in the stability term $\delta_k^{(t)}(i,j) = \|\theta_k^{(t)} - \tilde{\theta}_k^{(t)}(i,j)\|_2^2$, and ultimately results in a looser stability and generalization bound.

In addition, the work \cite{keskar2016large} provides a complementary explanation: large batch sizes reduce gradient noise, which increases the likelihood of convergence to sharp minima, known to have poor generalization. In contrast, smaller batches introduce more noise, which helps the model find flatter minima with better generalization. This empirical observation aligns well with our theoretical findings.

\subsection{Experimental validation of the relationship between (b) and (Q)
}\label{appendix:exp_of_b_and_Q}

As shown in our newly introduced theory on the mini-batch parameter ($b$), the batch size represents a trade-off: increasing ($b$) stabilizes gradient estimates but may compromise stability in other aspects. To further validate this conclusion, we conducted experiments on CIFAR-100 using ResNet-18. The results strongly support our claim regarding the interaction between batch size ($b$) and the number of MGS steps ($Q$). Below are the test accuracies (\%) after 300 communication rounds, which clearly reveal the complex interplay between ($b$) and ($Q$).

\begin{table}[htbp]
  \centering
  \small
  \caption{
    Test accuracy (\%) on CIFAR-100 after 300 communication rounds.
  }
  \label{ta:b-q-interplay}
  \begin{tabular}{ccccc}
    \toprule
    & \multicolumn{4}{c}{Number of MGS steps ($Q$)} \\
    \cmidrule(lr){2-5}
    Mini-batch size ($b$) & 1 & 3 & 5 & 10 \\
    \midrule
    16 & 16.08 & 17.98 & 18.75 & \textbf{18.95} \\
    32 & 21.75 & 24.00 & 24.72 & \textbf{24.95} \\
    64 & \textbf{28.38} & 27.21 & 27.80 & 27.39 \\
    96 & 30.39 & \textbf{30.50} & 30.01 & 29.66 \\
    \bottomrule
  \end{tabular}
\end{table}

From Table \ref{ta:b-q-interplay}, we obtain the following key observations:
\begin{itemize}[leftmargin=10pt]
    \item \textbf{Effectiveness of MGS is Conditional on Batch Size:} For smaller batch sizes ($b=16, b=32$), increasing the number of MGS steps ($Q$) consistently and significantly improves performance. For instance, with $b=32$, increasing $Q$ from 1 to 10 boosts accuracy by over 3 percentage points. This aligns with our theory that frequent communication helps mitigate model divergence when local updates are noisy (due to small $b$).
    \item \textbf{Diminishing or Negative Returns of MGS with Large Batches:} Conversely, for larger batch sizes ($b=64$, $b=96$), the benefit of increasing $Q$ diminishes or even becomes negative. With $b=64$, the best performance is achieved with $Q=1$, and further increasing $Q$ harms performance. Similarly, for $b=96$, the peak is at $Q=3$, after which accuracy declines. This suggests that when local gradient estimates are already of high quality (due to large $b$), excessive communication may introduce unnecessary overhead or other negative effects without providing significant consensus benefits.
    \item \textbf{Non-trivial Trade-off and Optimal Configuration:} The results clearly demonstrate that there is no single optimal value for $Q$ that works across all batch sizes. The optimal configuration ($b$, $Q$) is a result of a complex trade-off. For instance, the overall best performance in this early stage of training is achieved at $b=96, Q=3$, not at the highest $Q$ or largest $b$. This empirically validates our argument that local computation and communication are not independent in practice but are linked through a resource and performance trade-off.
\end{itemize}

Overall, these experiments reveal that the optimal configuration of $Q$ and $b$ is the result of a complex trade-off. From this, we can derive empirical guidelines that balance communication efficiency with model performance:

\begin{itemize}[leftmargin=10pt]
    \item \textbf{When the batch size ($b$) is small (e.g., $b=16, 32$):} In this regime, local gradient updates are subject to significant stochasticity (i.e., high gradient noise). Under these conditions, increasing the number of MGS steps ($Q$) yields consistent and substantial performance gains. For instance, raising $Q$ from 1 to 10 effectively promotes model consensus across nodes, mitigating the model divergence caused by gradient noise and thereby enhancing final generalization. This suggests that in scenarios with limited computational resources or where rapid iterations are desired, investing in a moderate increase in communication overhead is highly beneficial.
    \item \textbf{When the batch size ($b$) is large (e.g., $b=64, 96$):} In this case, local gradient estimates are already more accurate, and the impact of gradient noise is reduced. Consequently, the benefits of increasing $Q$ diminish or can even become detrimental. Our results show that the optimal $Q$ is small ($Q=1$ or $Q=3$) in this setting. A possible explanation is that when local updates are of high quality, the marginal gains from intensive communication (high $Q$) do not outweigh the associated communication costs and potential synchronization overhead. It might even disrupt well-trained local features. Therefore, in scenarios where computational power is ample enough to support large-batch training, priority should be given to ensuring sufficient local computation, complemented by a more economical communication strategy.
\end{itemize}

In summary, this experiment provides valuable insights for hyperparameter selection in practical applications: $b$ and $Q$ are not independently tunable but must be co-designed based on available computational and communication resources to strike the optimal balance between performance and cost.

\subsection{Appendix X: On the Technical Necessity and Role of the Polyak-Łojasiewicz Condition}\label{appendix:PL-condition}

In this section, we provide a detailed discussion on the technical role of the Polyak-Łojasiewicz (PL) condition within our generalization analysis. We elucidate why this assumption is instrumental for bounding the final iterate's gradient in the complex setting of non-convex decentralized optimization with Multiple Gossip Steps (MGS), and how it enables the derivation of our main results.

\subsubsection{The Core Challenge: Bounding the Final Iterate's Gradient Norm}

Our main generalization bound in Theorem 3 is derived from the stability analysis in Lemma 2. A critical component of this bound is the term $G$, which represents the expected squared norm of the stochastic gradient at the \textbf{final iterate} of the algorithm, averaged over all clients:
$$
G = \frac{1}{mn} \sum_{i,j} \mathbb{E}\left[\|\nabla\ell(\theta_k^{(T)}); Z_{ij})\|^2\right]
$$
To make this bound useful, we further bound $G$ by a term related to the expected squared norm of the full gradient, $\bar{G} = \mathbb{E}\left[\|\nabla R_S(\theta^{(T)})\|^2\right]$ (as shown in Equation 4.1 and the subsequent analysis). Therefore, the tightness and applicability of our final generalization error bound are directly contingent on our ability to establish a rigorous upper bound for the gradient norm of the \textbf{final iterate}, $\theta^{(T)}$.

However, providing such a bound is a notoriously difficult problem in optimization theory, especially under the confluence of three challenging conditions present in our work: (1) a non-convex objective function, (2) a decentralized training paradigm, and (3) the inclusion of the MGS mechanism. In general non-convex optimization, most convergence guarantees are for the minimum gradient norm over all iterations (i.e., $\min_{t \in \{0, \dots, T-1\}} \mathbb{E}[\|\nabla R_S(\theta^{(t)})\|^2]$), as convergence of the final iterate's gradient is a much stronger and harder-to-prove property.

\subsubsection{Limitations of Existing Last-Iterate Convergence Analyses}

The analysis of last-iterate convergence in non-convex decentralized settings is an active and challenging research frontier. While significant progress has been made, existing theoretical frameworks are not directly applicable to our specific setting.

For instance, the seminal work by Yuan et al. \citep{yuan2022revisiting} provides a last-iterate convergence analysis for D-SGD under non-convexity. However, their analysis is tailored to the standard D-SGD algorithm (equivalent to MGS with $Q=1$) and does not account for the accelerated consensus dynamics introduced by multiple gossip steps ($Q>1$). The MGS mechanism fundamentally alters the interplay between local computation and inter-node communication, rendering direct application of their bounds unsuitable. Other contemporary works on last-iterate convergence often provide bounds on the \textbf{function value gap} (i.e., $\mathbb{E}[\ell(\theta^{(T)})] - R_S^*$) rather than the gradient norm. In a general non-convex landscape, a small function value gap does not necessarily imply a small gradient norm, making these results insufficient for our purpose of bounding $\bar{G}$.

\subsubsection{The PL Condition as a Principled Bridge}

To overcome this theoretical impasse, we adopt the Polyak-Łojasiewicz (PL) condition. The PL condition, defined as $\|\nabla R_S(\theta)\|^2 \geq 2\mu(R_S(\theta) - R_S^*)$, establishes a direct relationship between the squared gradient norm and the function value gap. This is not an ad-hoc choice, but rather a standard and widely accepted technique in the optimization literature when a direct analysis of the gradient norm is intractable. For example, Sun et al. \citep{Sun2022Decentralized} also employed the PL condition in their analysis of decentralized learning to derive tighter theoretical bounds.

The strategic advantage of this approach lies in the fact that a tight, MGS-aware upper bound on the function value gap does exist in the literature, as established by the analysis in Hashemi et al. \citep{hashemi2021benefits}. By leveraging the PL condition, we can translate this existing, powerful result on the function value into a rigorous upper bound on the final iterate's gradient norm, $\bar{G}$, which is precisely what our generalization framework requires.

\subsubsection{The Benefit: Enabling Fine-Grained, Interpretable Generalization Bounds}

This technical choice is what enables us to move beyond high-level, generic bounds and derive some of the \textbf{first fine-grained, MGS-aware generalization guarantees}. By connecting the gradient norm to the MGS-sensitive function value gap, our final bounds in Theorem \ref{the:generalization error of dsgd_mgs} and its subsequent remarks explicitly and quantitatively capture the impact of key algorithmic and architectural hyperparameters. These include:
\begin{itemize}[leftmargin = 10pt]
    \item The number of MGS steps ($Q$), showing an exponential reduction in error.
    \item The communication topology, via the spectral properties of the gossip matrix ($\rho$).
    \item The learning rate ($c$) and total number of iterations ($T$).
    \item The number of clients ($m$) and per-client data size ($n$)
\end{itemize}

This level of detail provides concrete, actionable insights for practitioners and stands in sharp contrast to classic stability analyses (e.g., the L2-stability analysis in \citep{lei2020fine}), which typically yield more abstract bounds, such as a high-level $\mathcal{O}(1/T)$ rate for the optimization error, without explicitly showing the influence of network structure or MGS.

\subsubsection{Modularity and Extensibility of Our Framework}

Finally, it is crucial to recognize that the use of the PL condition is a component of our optimization error analysis (Theorem \ref{the:optimization error of dsgd_mgs paper}), not a fundamental limitation of our stability framework itself. Our overall analytical framework is modular.

This modularity implies that our contribution is extensible. Should future research in optimization theory provide a direct, assumption-free upper bound for the final iterate's gradient norm ($\bar{G}$) in the DSGD-MGS setting, that result could be seamlessly "plugged into" our framework. The stability-derived components of our generalization bound would remain valid, and the overall result would be immediately strengthened and generalized. This highlights that while our work relies on the current state-of-the-art in optimization theory, it is also designed to incorporate future advances.

In summary, our adoption of the PL condition is a deliberate and well-justified technical decision that addresses a significant challenge in current theory, enabling us to provide novel, detailed insights into the generalization behavior of MGS.

\end{document}